\documentclass[letterpaper]{article} 
\RequirePackage{multirow}
\usepackage{times}  
\usepackage{helvet}  
\usepackage{courier}  
\usepackage{url}  
\usepackage{graphicx}  
\usepackage{authblk}
\setlength\topmargin{-0.25in} \setlength\oddsidemargin{-0.25in}
\setlength\textheight{9.0in} \setlength\textwidth{7.0in}
\setlength\columnsep{0.375in} \newlength\titlebox \setlength\titlebox{2.25in}
\setlength\headheight{0pt}  \setlength\headsep{0pt}

\usepackage{algorithm}
\usepackage[noend]{algorithmic}
\usepackage{amssymb}
\usepackage{amsmath}
\usepackage{amsthm}
\usepackage{color}
\usepackage{bm}
\usepackage[version=3]{mhchem}
\usepackage{subcaption}
\usepackage{latexsym}
\usepackage{booktabs}
\usepackage[round]{natbib}
\usepackage{hyperref}
\renewcommand{\cite}{\citep}

\newcommand*{\email}[1]{\href{mailto:#1}{\nolinkurl{#1}} }
\DeclareMathOperator*{\argmax}{arg\,max}

\newtheorem{thm}{Theorem}

\newtheorem{lemma}[thm]{Lemma}

\newtheorem{deff}{Definition}
\newtheorem{cond}[thm]{Condition}

\newcommand{\cD}{\mathcal{D}}
\newcommand{\cG}{\mathcal{G}}
\newcommand{\cH}{\mathcal{H}}

\newcommand{\cL}{\mathcal{L}}
\newcommand{\cM}{\mathcal{M}}
\newcommand{\cN}{\mathcal{N}}
\newcommand{\cS}{\mathcal{S}}

\newcommand{\cY}{\mathcal{Y}}

\newcommand{\cZ}{\mathcal{Z}}
\newcommand{\bbR}{\mathbb{R}}

\newcommand{\bbE}{\mathbb{E}}
\newcommand{\bbN}{\mathbb{N}}

\newcommand{\ie}{\textit{i.e.}}
\newcommand{\eg}{\textit{e.g.}}

\newcommand{\pa}{\mathrm{pa}}
\newcommand{\ch}{\mathrm{ch}}
\newcommand{\hrg}{\mathsf{HRG}}
\newcommand{\mhg}{\mathsf{MHG}}
\newcommand{\ext}{\mathrm{ext}}

\newcommand{\enc}{\mathsf{Enc}}
\newcommand{\dec}{\mathsf{Dec}}

\sloppy
\frenchspacing  
\setlength{\pdfpagewidth}{8.5in}  
\setlength{\pdfpageheight}{11in}  
 \begin{document}
%
\title{Molecular Hypergraph Grammar\\with Its Application to Molecular Optimization}
\author[1]{Hiroshi Kajino\\ \email{kajino@jp.ibm.com}}
\affil[1]{MIT-IBM Watson AI Lab\\IBM Research}
\maketitle
\begin{abstract}
 Molecular optimization aims to discover novel molecules with desirable properties.
 Two fundamental challenges are:
 (i)~it is not trivial to generate valid molecules in a controllable way due to hard chemical constraints such as the valency conditions, and
 (ii)~it is often costly to evaluate a property of a novel molecule, and therefore, the number of property evaluations is limited.
 These challenges are to some extent alleviated by a combination of a variational autoencoder~(VAE) and Bayesian optimization~(BO).
 VAE converts a molecule into/from its latent continuous vector,
 and BO optimizes a latent continuous vector~(and its corresponding molecule) within a limited number of property evaluations.
 While the most recent work, for the first time, achieved 100\% validity,
 its architecture is rather complex due to auxiliary neural networks other than VAE,
 making it difficult to train.
 This paper presents a \emph{molecular hypergraph grammar variational autoencoder}~(MHG-VAE), which uses a single VAE to achieve 100\% validity.
 Our idea is to develop a graph grammar encoding the hard chemical constraints, called \emph{molecular hypergraph grammar}~(MHG), which guides VAE to always generate valid molecules.
 We also present an algorithm to construct MHG from a set of molecules.
\end{abstract}

\begin{figure*}[t]
\centering
 \includegraphics[width=.9\hsize]{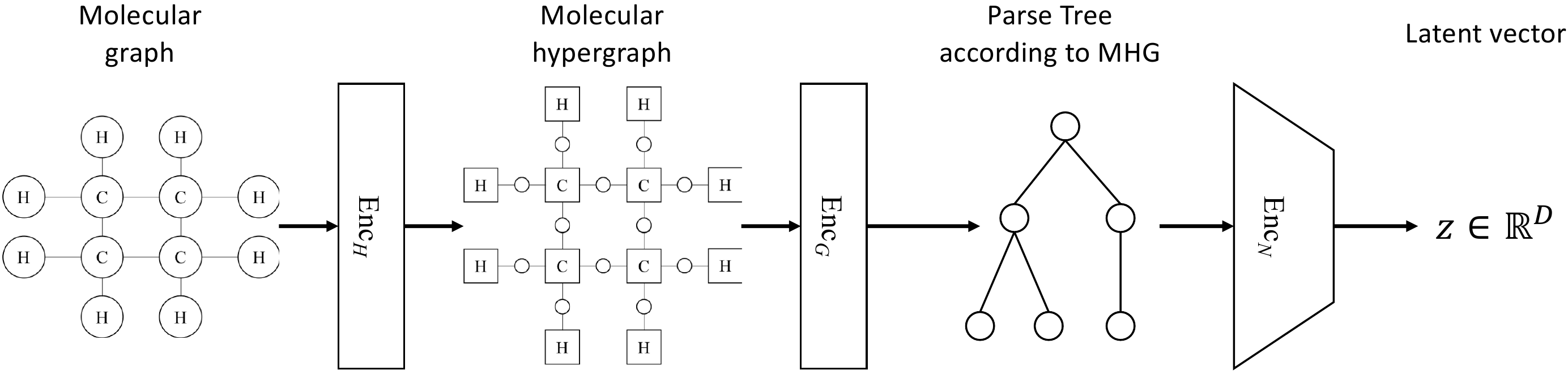}
\caption{Illustration of our encoder. For a molecular hypergraph, squares represent hyperedges, circles represent nodes, and a circle-square line indicates that the node is a member of the hyperedge. The decoder is defined by inversing the encoder.}
\label{fig:encoder-overview}
\end{figure*}

\section{Introduction}
Molecular optimization aims to discover a novel molecule that possesses prescribed properties given by a user.
For example, \citet{bombarelli2016a} aim to maximize the efficiency of an organic light-emitting diode.
Letting $\cM$ be a set of valid molecules, the molecular optimization problem is to obtain $m^\star\in\cM$ such that,
\begin{align}
\label{eq:2} m^\star = \argmax_{m\in\cM} f(m),
\end{align}
where $f\colon\cM\rightarrow\bbR$ is an unknown function that outputs a chemical property of the input molecule to be maximized.

There are two obstacles to solve Eq.~\eqref{eq:2}.
First, the set of feasible solutions $\cM$ is discrete, and it is difficult to generate a candidate $m$ from $\cM$.
Second, $f$ is unknown and costly to evaluate, and only a small sample $\{(m_n,y_n)\}_{n=1}^N\subset \cM \times \bbR$ is available, where $y_n=f(m_n)$.
In fact, the function evaluation often requires wet-lab experiments or day-long computer simulation based on quantum mechanics.

A recent innovation~\cite{bombarelli2016} facilitates the optimization by leveraging VAE~\cite{kingma2014} and BO~\cite{mokus1975}.
The first challenge is addressed by casting the discrete optimization problem into continuous with the help of VAE.
In specific, they first train a VAE, a pair of $\enc\colon\cM\rightarrow\bbR^D$ and $\dec\colon\bbR^D\rightarrow\cM$ 
such that $\dec(\enc(m))\approx m$ holds for any $m\in\cM$, and they obtain $m^\star$ as follows:
\begin{align}
\label{eq:3}
\begin{split}
m^\star &= \dec\left(\argmax_{z\in\bbR^D} f(\dec(z))\right),
\end{split}
\end{align}
The second challenge is addressed by using BO, which iteratively optimizes a black-box function in a limited number of function evaluations.
In specific, the inner optimization problem in Eq.~\eqref{eq:3} is solved by BO with sample $\{(\enc(m_1), y_1), \dots, (\enc(m_N), y_N)\}\subset\bbR^D\times\bbR$,
and the resultant latent vector $x_{N+1}\in\bbR^D$ is added to the sample with property evaluation $f(\dec(x_{N+1}))$.

While they elegantly address the two obstacles, the decoding sometimes fails, and no molecule is obtained, which we call the \emph{decoding error issue}.
%
This is mainly due to the use of SMILES~\cite{Weininger:1988aa} to represent a molecule.
Let $\Sigma$ be a set of symbols used in SMILES, and $\enc_S\colon\cM\rightarrow\Sigma^\ast$ be a SMILES encoder.
For example, $\Sigma$ includes atomic symbols, \eg, \ce{C}, \ce{H} $\in \Sigma$; given a phenol as input, the encoder outputs \texttt{c1c(O)cccc1}, where the digits represent the start and end points of the benzene ring, the parentheses represent branching, and hydrogen atoms are omitted.
Letting $\enc_S[\cM] := \{\enc_S(m) \mid m \in \cM\}\subsetneq\Sigma^\ast$ be the set of all \emph{valid} SMILES strings, a SMILES decoder $\dec_S\colon\enc_S[\cM]\rightarrow\cM$ can be defined.
Notice that the domain of $\dec_S$ is not $\Sigma^\ast$ but $\enc_S[\cM]$, the set of strings that follow SMILES' grammar, because any string that violates the grammar cannot be decoded into any molecule.

In their implementation, the encoder is composed as $\enc=\enc_N \circ \enc_S$, where $\enc_{N}\colon\Sigma^\ast\rightarrow\bbR^D$ is a neural network encoder,
and the decoder is composed as $\dec=\dec_S\circ\dec_N$, where $\dec_N\colon\bbR^D\rightarrow\Sigma^\ast$ is a neural network decoder, generating symbols one by one.
The decoding error issue occurs when the output of $\dec_N$ does not belong to $\enc_S[\cM]$, the domain of $\dec_S$.
For the phenol example, if $\dec_N$ fails to output the end digit, \texttt{c1c(O)cccc}, it cannot be converted into a molecule because the ring cannot be closed;
if $\dec_N$ generates more than one pair of parentheses, \texttt{c1c(O)(O)cccc1}, it violates the valence condition of carbon.
Since SMILES' grammar is a context-\emph{sensitive} grammar, it is not straightforward to develop a neural network that always generates a string belonging to $\enc_S[\cM]$.

Recently, several studies have been conducted towards addressing the decoding error issue~\cite{kusner2017,dai2018,jin2018}.
Among them, \citet{jin2018} for the first time report 100\% validity, addressing the decoding error issue.
Their idea is to represent a molecular graph as fragments~(such as rings and atom branches) connected in a tree structure.
Such a tree representation is preferable because it is easier to generate a tree than a general graph with degree constraints.
By forcing the decoder to generate only valid combination of fragments,
the decoder can always generate a valid molecule.

While their tree representation successfully addresses the decoding error issue,
it models only part of molecular properties, and the rest is left to neural networks.
For example, their representation only specifies fragment-level connections, and does not specify which atoms in the fragments to be connected.
In addition, since it does not specify atom-level connections, the stereochemistry information disappears.
They instead enumerate all possible configurations and pick one by training several auxiliary neural networks.

Given this literature, we are interested in addressing the decoding error issue without the auxiliary neural networks.
Such a simple architecture will facilitate model training, and it will be easier to achieve high performance with less effort.
Our idea is to develop a (i) context-free graph grammar of a molecular graph that (ii) never generates an invalid molecule, and that (iii) can represent the atom-level connection and stereochemistry information.
With such a grammar, we can use a parse tree as an intermediate representation of a molecule, which is easy for the decoder to generate~(as guaranteed by the context-freeness~(i)).
Combined with (ii) and (iii), our decoder can always output a valid molecule with stereochemistry information, using a single VAE only.

Our technical highlight is a \emph{molecular hypergraph grammar}~(MHG) along with an algorithm to infer MHG from a set of molecules.
MHG is designed to satisfy all of the requirements mentioned above, 
We also develop a molecular hypergraph grammar variational autoencoder~(MHG-VAE), which combines MHG and VAE to obtain an autoencoder for a molecule.
Figure~\ref{fig:encoder-overview} illustrates our encoder, where the first two encoders, $\enc_H$ and $\enc_G$, and their corresponding decoders $\dec_H$ and $\dec_G$, are our main contributions, while we use a standard seq2seq VAE for $(\enc_N, \dec_N)$.

In details, we develop MHG by tailoring a hyperedge replacement grammar~(HRG)~\cite{drewes1997} for a \emph{molecular hypergraph}~(thus, MHG is a special case of HRG).
A molecular hypergraph models an atom by a hyperedge and a bond by a node.
HRG is a context-free graph grammar generating a hypergraph by replacing a non-terminal hyperedge with another hypergraph; it achieves atom-level connections when combined with a molecular hypergraph, and stereochemistry can also be encoded into the grammar.
It also preserves the number of nodes belonging to each hyperedge, which coincides with the valency of an atom in our case.
Therefore, these two ideas allow us to always generate valid molecules using a single VAE.

Our MHG inference algorithm extends the existing HRG inference algorithm~\cite{aguinaga2016} so that the resultant HRG always generates a molecular hypergraph.
The existing one infers HRG by extracting a set of production rules from a tree decomposition of each hypergraph, which is equivalent to a parse tree.
Our finding is that, while the inferred HRG preserves the valence condition, it sometimes generates a hypergraph that cannot be decoded into a molecular graph; the generated hypergraph may contain a node that is shared by more than two hyperedges.
To address this issue, we develop an \emph{irredundant} tree decomposition, with which HRG is guaranteed to generate a valid molecular hypergraph, \ie, the inferred HRG is always MHG.

\section{Preliminaries}

\begin{figure}[t]
\centering
 \includegraphics[width=.6\linewidth]{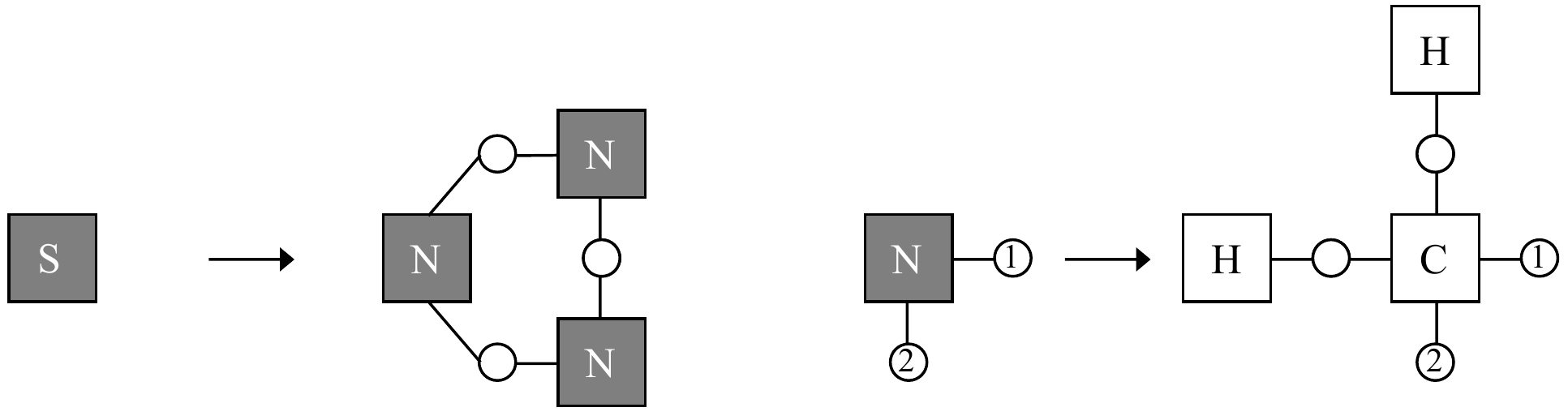}
\caption{Part of production rules extracted from the hypergraph in Fig.~\ref{fig:encoder-overview}. Filled squares represent non-terminals, and unfilled ones represent terminals. Numbers in nodes indicate the correspondence between the nodes in $A$ and $R$.}
\label{fig:hrg-example}
\end{figure}
A hypergraph is a pair $H=(V_H,E_H)$, where $V_H$ is a set of nodes, and $E_H$ is a set of non-empty subsets of $V_H$, called hyperedges.
A hypergraph is called \emph{$k$-regular} if every node has degree $k$.

A tree decomposition of a hypergraph~(Def.~\ref{def:tree-decomp}) discovers a tree-like structure of the hypergraph.
Figure~\ref{fig:tree-decomp} illustrates a tree decomposition of the hypergraph shown in Fig.~\ref{fig:encoder-overview}.

\begin{deff}
\label{def:tree-decomp}
 A \emph{tree decomposition} of hypergraph $H=(V_H,E_H)$ is tree $T=(V_T, E_T)$ with two labeling functions $\ell_T^{(V)}\colon V_T\rightarrow 2^{V_H}$ and $\ell_T^{(E)}\colon V_T\rightarrow 2^{E_H}$ such that:
\begin{enumerate}
 \item For each $v_H\in V_H$, there exists at least one node $v_T\in V_T$ such that $v_H\in\ell_T^{(V)}(v_T)$.
 \item For each $e_H\in E_H$, there exists exactly one node $v_T\in V_T$ such that $e_H\subseteq\ell_T^{(V)}(v_T)$ and $e_H\in\ell_T^{(E)}(v_T)$.
 \item For each $v_H\in V_H$, a set of nodes $\{v_T\in V_T \mid v_H\in \ell_T^{(V)}(v_T)\}$ is connected in $T$.
\end{enumerate}
\end{deff}
Let us denote the hypergraph on node $v_T\in V_T$ by $H(v_T):=(\ell_{T}^{(V)}(v_T), \ell_{T}^{(E)}(v_T))$.

HRG~\cite{drewes1997} is a context-free grammar generating hypergraphs with labeled nodes and hyperedges~(Def.~\ref{def:hyper-repl-gramm}, Fig.~\ref{fig:hrg-example}).
It starts from the starting symbol~$S$ and repeatedly replaces a non-terminal symbol $A$ in the hypergraph with a hypergraph~$R$, which may have both terminal and non-terminal symbols.

\begin{deff}
\label{def:hyper-repl-gramm}
 A hyperedge replacement grammar is a tuple~$\hrg=(N, T, S, P)$, where,
\begin{enumerate}
 \item $N$ is a set of non-terminal hyperedge labels.
 \item $T$ is a set of terminal hyperedge labels.
 \item $S\in N$ is a starting non-terminal hyperedge.
 \item $P$ is a set of production rules where,
       \begin{itemize}
	\item $p=(A, R)$ is a production rule,
	\item $A\in N$ is a non-terminal symbol, and
	\item $R$ is a hypergraph with hyperedge labels $T\cup N$ and has $|A|$ external nodes.
	      Non-terminals in $R$ are ordered.
       \end{itemize}
\end{enumerate}
\end{deff}
\if0
For hypergraphs $H_1$ and $H_2$, $H_1$ directly derives $H_2$~($H_1\Rightarrow H_2$) if there exists a production rule $p$ that is applicable to $H_1$, and when applied to $H_1$, generates $H_2$.
If there exists a sequence of direct derivations from $H_1$ to $H_N$, we denote it by $H_1\Rightarrow^{\ast} H_N$.
The set of terminal hypergraphs generated by grammar~$\cG$ is called a \emph{language}, which we denote by $\cL(\cG)=\{H \mid S\Rightarrow^{\ast} H\}$.
\fi

We define a \emph{parse tree} according to HRG as follows.
Each node of the parse tree is labeled by a production rule.
The production rules of the leaves of the parse tree must not contain non-terminals in their $R$s.
If the production rule $p$ is a starting rule, the node has $N_p$ ordered children, where $N_p$ denotes the number of non-terminals in $R$, and the edges are ordered by the orders of the non-terminals.
Otherwise, the node has one parent and $N_p$ ordered children, where the corresponding non-terminal in the parental production rule must coincide with $A$ of the production rule.

Given a parse tree, we can construct a hypergraph by sequentially applying the production rules.
Such sequential applications of production rules are equivalent to the parse tree, and we call it a \emph{parse sequence}.

\section{Molecular Graph and Hypergraph}
This section introduces our definitions of a molecular graph and a molecular hypergraph.
We also present a pair of encoder and decoder between them, $(\enc_H, \dec_H)$.

\subsection{Molecular Graph}
A molecular graph~(Def.~\ref{def:mol-graph}) represents the structural formula of a molecule using a graph, where atoms are modeled as labeled nodes and bonds as labeled edges.
Typically, the node label is defined by the atom's symbol~(\eg, \ce{H}, \ce{C}) and its formal charge, and the edge label by the bond type~(\eg, single, double).
The graph must satisfy the valency condition; the degree of each atom is specified by its label~(\eg, the degree of \ce{C} must equal four).
Let $\cG(L_G^{(V)}, L_G^{(E)}, d^{(V)})$ be the set of all possible molecular graphs, given the sets of node and edge labels and the degree constraint function.

\begin{deff}
 \label{def:mol-graph}
 Let $L_G^{(V)}$ and $L_G^{(E)}$ be sets of node and edge labels.
 Let $d^{(V)}\colon L_G^{(V)}\rightarrow\bbN$ be a degree constraint function.
 Let $G=(V_G, E_G, \ell_G^{(V)}, \ell_G^{(E)})$ be a node and edge-labeled graph, where $V_G$ is a set of nodes, $E_G$ is a set of undirected edges,
$\ell_G^{(V)}\colon V_G\rightarrow L_G^{(V)}$ is a node-labeling function, and $\ell_G^{(E)}\colon E_G\rightarrow L_G^{(E)}$ is an edge-labeling function.
 A \emph{molecular graph} $G$ is a node and edge-labeled graph that satisfies $d(v)=d^{(V)}(\ell_G^{(V)}(v))$ for all $v\in V_G$, where $d(v)$ indicates the degree of node $v$.
\end{deff}

There are two types of important properties that influence the chemical properties of a molecule.
The first one is the aromaticity of a ring~(\eg, benzene derivatives).
The bonds in an aromatic ring are different from a single or double bond, and are known to be more stable.
We do not explicitly encode any information related to the aromaticity, and instead, employ the Kekulé structure, where an aromatic ring is represented by alternating single and double bonds.
This does not lose generality because we can infer the aromaticity from the Kekulé representation.
The second one is the stereochemistry, which specifies 2D or 3D configuration of atoms.
We deal with the configuration at a double bond and tetrahedral carbon.
The double bond configuration is encoded by an E-Z configuration label assigned on the edge label.
Given the label and the whole structure of the molecule, the Cahn–Ingold–Prelog priority rules can specify the double bond direction.
For the tetrahedral chirality information, we assign a chirality tag in the node label, following the implementation of RDKit.

In summary, we employ the graph representation~(Def.~\ref{def:mol-graph}), where the node label contains the atom symbol, formal charge, and the tetrahedral chirality tag, and the edge label contains the bond type and the E-Z configuration.

\subsection{Molecular Hypergraph}
As an intermediate representation, we use a {molecular hypergraph}~(Def.~\ref{def:molecular-hypergraph}), where
an atom is modeled by a hyperedge and a bond between two atoms by a node shared by the corresponding two hyperedges.
Let $\cH(L_H^{(V)}, L_H^{(E)}, c^{(E)})$ be the set of all molecular hypergraphs, given the sets of node and hyperedge labels and the cardinality constraint function.
\begin{deff}
\label{def:molecular-hypergraph}
 Let $L_H^{(E)}$ and $L_H^{(V)}$ be sets of hyperedge and node labels.
 Let $c^{(E)}\colon L_H^{(E)}\rightarrow \bbN$ be a cardinality constraint function.
 Let $H=(V_H, E_H, \ell_H^{(E)}, \ell_H^{(V)})$ be a node and hyperedge-labeled hypergraph, where $V_H$ is a set of nodes, $E_H$ is a set of hyperedges,
$\ell_H^{(V)}\colon V_H\rightarrow L_H^{(V)}$ is a node-labeling function, and $\ell_H^{(E)}\colon E_H\rightarrow L_H^{(E)}$ is a hyperedge-labeling function.
 A \emph{molecular hypergraph} $H$ is a node and hyperedge-labeled hypergraph that satisfies the followings:\\
1. (Regularity) $H$ is $2$-regular.\\
2. (Cardinality) for each $e\in E_H$, $|e| = c^{(E)}(\ell_H^{(E)}(e))$ holds, where $|e|$ is the cardinality of hyperedge $e$.
\if0
\begin{enumerate}
 \item 
 \item 
\end{enumerate}
\fi
\end{deff}
Note that the regularity condition in Def.~\ref{def:molecular-hypergraph} assures that any molecular hypergraph can be decoded into a graph.

\subsection{Encoder and Decoder}\label{sec:encod-decod-betw}
Finally, we present the encoder and decoder between a molecular graph and a molecular hypergraph, $(\enc_H, \dec_H)$.
They can be derived easily by swapping nodes-hyperedges and edges-nodes.
The regularity condition in Def.~\ref{def:molecular-hypergraph} assures the swap to work.
This equivalence immediately yields the following:
\begin{thm}
 If $L_G^{(V)}=L_H^{(E)}$, $L_G^{(E)}=L_H^{(V)}$, and $d^{(V)}(l)=c^{(E)}(l)$ for all $l\in L_G^{(V)}$ hold, then the followings hold:
\begin{align*}
\cH(L_H^{(V)}, L_H^{(E)}, c^{(E)}) = \enc_H[\cG(L_G^{(V)}, L_G^{(E)}, d^{(V)})],\\
\cG(L_G^{(V)}, L_G^{(E)}, d^{(V)}) = \dec_H[\cH(L_H^{(V)}, L_H^{(E)}, c^{(E)})],\\
G = \dec_H(\enc_H(G))~(\forall G\in\cG(L_G^{(V)}, L_G^{(E)}, d^{(V)})).
\end{align*}
\end{thm}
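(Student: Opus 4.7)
The plan is to make the informal ``swap'' description of $(\enc_H, \dec_H)$ fully explicit, then verify the three claims by straightforward bookkeeping. Given a molecular graph $G=(V_G,E_G,\ell_G^{(V)},\ell_G^{(E)})$, define $\enc_H(G)=(V_H,E_H,\ell_H^{(E)},\ell_H^{(V)})$ by setting $V_H:=E_G$, $E_H:=\{\,e_v\mid v\in V_G\,\}$ where $e_v$ is the set of graph-edges incident to $v$ (viewed as a subset of $V_H$), and by transporting labels via $\ell_H^{(E)}(e_v):=\ell_G^{(V)}(v)$ and $\ell_H^{(V)}(\varepsilon):=\ell_G^{(E)}(\varepsilon)$. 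Dually, for a molecular hypergraph $H$, define $\dec_H(H)$ by putting one graph-node per hyperedge of $H$, one graph-edge per hypergraph-node $u$ connecting the (exactly two) hyperedges containing $u$, and transporting labels in the reverse direction. The decoder is well-defined precisely because Def.~\ref{def:molecular-hypergraph} forces $H$ to be $2$-regular.

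Next I would verify that $\enc_H(G)$ lies in $\cH(L_H^{(V)},L_H^{(E)},c^{(E)})$. Each $\varepsilon\in V_H=E_G$ is incident in $\enc_H(G)$ to exactly the two hyperedges $e_v$ and $e_w$ corresponding to the endpoints $v,w$ of $\varepsilon$ in $G$, which gives $2$-regularity. For a hyperedge $e_v$ the cardinality is $|e_v|=d(v)=d^{(V)}(\ell_G^{(V)}(v))=c^{(E)}(\ell_H^{(E)}(e_v))$, using the hypothesis $d^{(V)}\equiv c^{(E)}$ on the shared label set. A symmetric argument shows that $\dec_H(H)$ is a labeled graph whose node-degree at the image of a hyperedge $e$ equals $|e|=c^{(E)}(\ell_H^{(E)}(e))=d^{(V)}(\ell_G^{(V)}(\dec_H(e)))$, so $\dec_H(H)\in\cG(L_G^{(V)},L_G^{(E)},d^{(V)})$.

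Then I would check that the two maps invert one another. Starting from $G$, the composite $\dec_H(\enc_H(G))$ has one node per $e_v\in E_H$, i.e.\ one node per $v\in V_G$, and one edge per $\varepsilon\in V_H=E_G$ joining the two hyperedges containing $\varepsilon$, which is exactly the pair of endpoints of $\varepsilon$ in $G$; labels coincide by construction. This gives $\dec_H\circ\enc_H=\mathrm{id}_{\cG}$, which is the third displayed equation. The analogous argument, using $2$-regularity at the one place it is needed, gives $\enc_H\circ\dec_H=\mathrm{id}_{\cH}$. The two set equalities then drop out: $\enc_H[\cG]\subseteq\cH$ by the first verification and $\cH\subseteq\enc_H[\cG]$ because every $H\in\cH$ equals $\enc_H(\dec_H(H))$; the second set equality is symmetric.

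The only genuinely delicate point I expect is the handling of edge cases in the swap---self-loops in $G$ would correspond to hypergraph-nodes of degree two lying in a single hyperedge with multiplicity, and parallel bonds would yield parallel hypergraph-nodes---so the cleanest write-up treats $V_G,E_G,V_H,E_H$ as multisets (or, equivalently, as indexed families) throughout. Once that convention is fixed, every step above is a direct consequence of the definitions and the hypothesis $d^{(V)}\equiv c^{(E)}$, and no further machinery is required.
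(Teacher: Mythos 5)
Your proposal is correct and follows exactly the route the paper intends: the paper offers no detailed proof, asserting only that the claims follow ``easily by swapping nodes-hyperedges and edges-nodes'' with the $2$-regularity condition making the swap invertible, and your write-up simply makes that duality explicit and verifies the degree/cardinality bookkeeping under the hypothesis $d^{(V)}=c^{(E)}$. Your observation about self-loops and parallel edges is a reasonable extra care the paper omits (and is largely moot for molecular graphs, where bond multiplicity is an edge label rather than parallel edges).
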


\section{Molecular Hypergraph Grammar}\label{sec:molec-hypergr-gramm}
A \emph{molecular hypergraph grammar}~(MHG) is defined as an HRG that always generates molecular hypergraphs.
In this section, we present $(\enc_G,\dec_G)$ that leverages MHG to represent a molecular hypergraph as a parse sequence.

Let $\mhg=(N,T,S,P)$ be a molecular hypergraph grammar, and $\cL_{\mhg}$ be its language, \ie, the set of molecular hypergraphs that can be generated by $\mhg$.
The encoder $\enc_G\colon \cL_\mhg\rightarrow P^\ast$ maps a molecular hypergraph into the corresponding parse sequence.
The decoder maps a parse sequence into a molecular hypergraph by sequentially applying the production rules.
It accepts a sequence of production rules obtained from $\cL_\mhg$ only, because other sequences cannot generate a molecular hypergraph.
Thus, the domain of the decoder is defined as $\dec_G\colon \enc_G[\cL_\mhg]\rightarrow \cL_\mhg$.
Clearly, for any $H\in\cL_\mhg$, $H=\dec_G(\enc_G(H))$ holds.

\section{MHG Inference Algorithm}\label{sec:mhg-infer-algor}
We present an algorithm to infer MHG from a set of molecular hypergraphs.
Our algorithm~(Sec.~\ref{sec:algorithm}) extends an existing HRG inference algorithm~(Sec.~\ref{sec:exist-hrg-infer}),
which extracts a set of production rules from tree decompositions of hypergraphs.
We need to tailor a novel inference algorithm for MHG because HRG inferred by applying the existing one to molecular hypergraphs is not MHG.
The inferred HRG does not necessarily generate a molecular hypergraph because it sometimes violates the regularity condition in Def.~\ref{def:molecular-hypergraph}.

\if0
We first review the existing HRG inference algorithm~(Sec.~\ref{sec:exist-hrg-infer}).
Then, we present our MHG inference algorithm as well as its key component called an \emph{irredundant} tree decomposition~(Sec.~\ref{sec:algorithm}).
We also provide theoretical properties of our inference algorithm.
\fi

\subsection{Existing HRG Inference Algorithm}\label{sec:exist-hrg-infer}
\citet{aguinaga2016} propose an algorithm to infer HRG from a set of hypergraphs.
Their key observation is that tree decompositions of hypergraphs yield HRG whose associated language includes the whole input hypergraphs.
%
Assume that we have a tree decomposition $T$ of hypergraph $H$.
We arbitrarily choose one node from $T$ as the root node.
For node $v_{T}\in V_T$, let $\pa(v_{T})$ be the parent of $v_{T}$ and $\ch(v_{T})$ be a set of children of $v_{T}$.
They first notice that connecting each pair $(v_{T}, \pa(v_{T}))$ by their common nodes yields the original hypergraph~(\eg, connecting such pairs in Fig.~\ref{fig:tree-decomp} yields the hypergraph in Fig.~\ref{fig:encoder-overview}).
In other words, a tree decomposition with an arbitrary root node is equivalent to a parse tree.
Given this observation, their algorithm extracts a production rule from a triplet $(\pa(v_{T}), v_{T}, \ch(v_{T}))$ so that the production rule can paste $H(v_T)$ on $H(\pa(v_T))$ with non-terminals left for applying the following production rules obtained from the children.
Note that the algorithm outputs not only HRG but also parse sequences of input hypergraphs.
For more details, see Appendix~\ref{sec:hrg-infer-algor}.



\subsection{Our MHG Inference Algorithm}\label{sec:algorithm}
We find that the existing algorithm cannot infer MHG, \ie, the inferred HRG sometimes violates the regularity condition.
We develop an \emph{irredundant} tree decomposition so that the violation does not occur, and substitute it for a generic tree decomposition to derive our MHG inference algorithm.


\noindent
\textbf{Irredundant Tree Decomposition.}
We introduce a key property of a tree decomposition called \emph{irredundancy}, which is necessary to guarantee the regularity.
Intuitively, a tree decomposition is irredundant if each node of the tree does not contain redundant nodes of the original hypergraph.
Figures~\ref{fig:tree-decomp} and \ref{fig:redundant} illustrate both irredundant and redundant tree decompositions.
The formal definition appears in Def.~\ref{def:irred-tree-decomp}.

\begin{figure}[t]
\centering
\begin{minipage}[t]{.4\hsize}
\centering
 \includegraphics[width=\hsize]{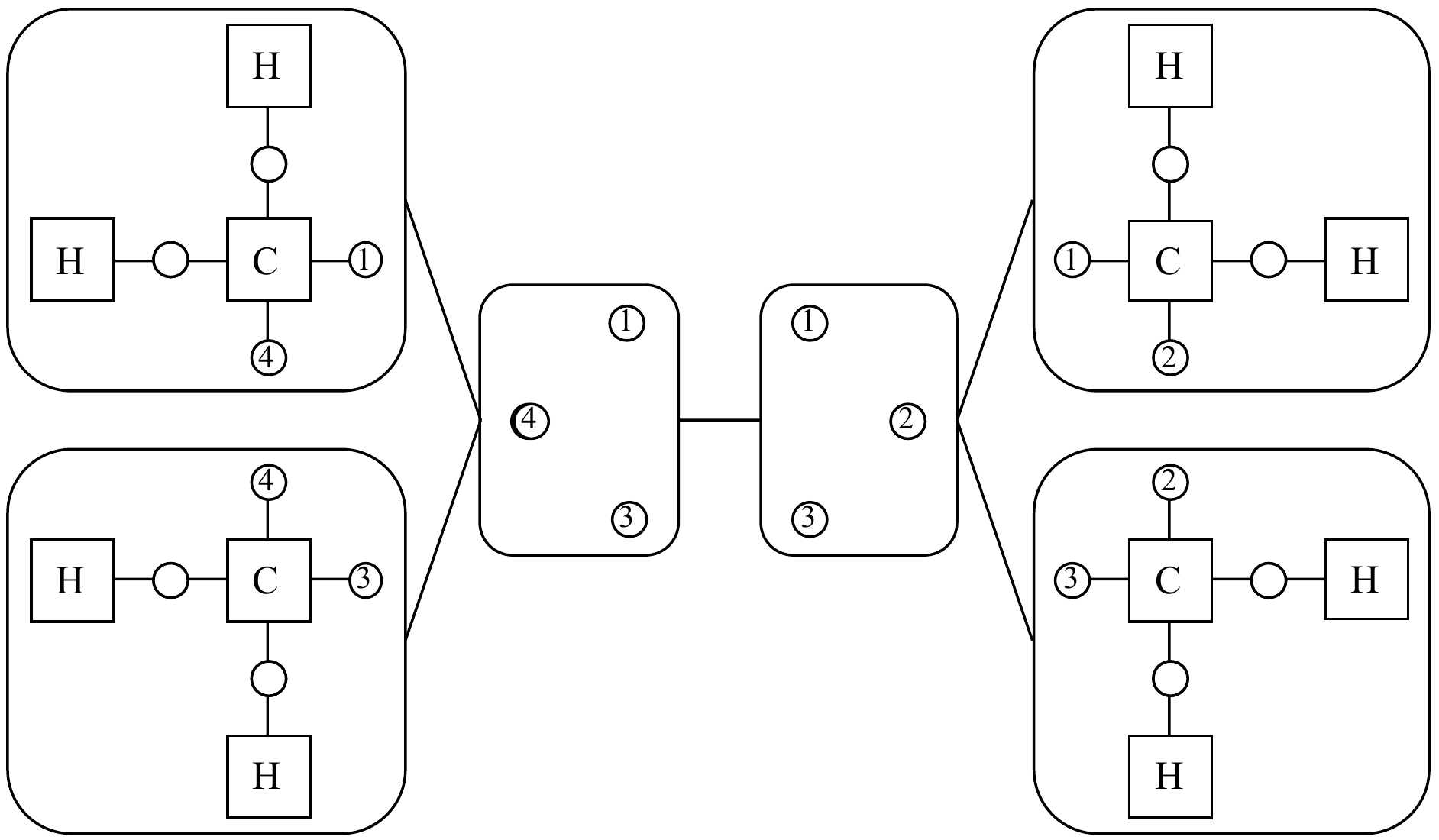}
 \caption{Irredundant tree decomposition of the hypergraph in Fig~\ref{fig:encoder-overview}. }
\label{fig:tree-decomp}
\end{minipage}
\hspace{.05\hsize}
\begin{minipage}[t]{.4\hsize}
\centering
 \includegraphics[width=\hsize]{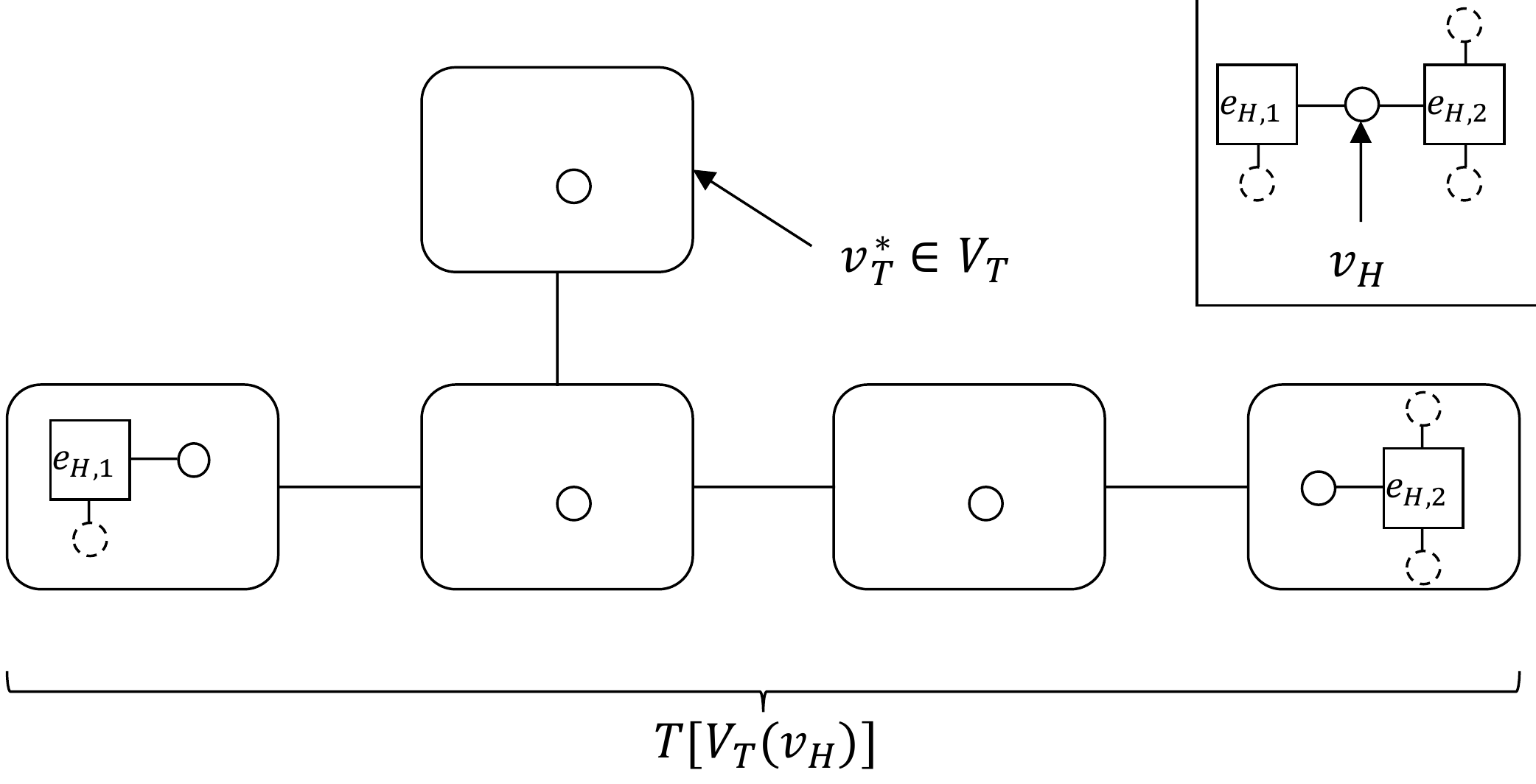}
 \caption{\emph{Redundant} tree decomposition~$T$. The upper-right shows a subhypergraph of $H$, whose tree decomposition induced by $V_T(v_H)$ is shown in the lower-left. It is necessary to remove $v_H$ from $v_T^\ast$ to transform it to be irredundant.}
\label{fig:redundant}
\end{minipage}
\end{figure} 

\begin{deff}
\label{def:irred-tree-decomp}
 Let $H=(V_H,E_H)$ be a hypergraph, and
 $(T, \ell_T^{(V)}, \ell_T^{(E)})$ be its tree decomposition.
 Let $V_T(v_H) = \{v_T\in V_T \mid v_H\in \ell_{T}^{(V)}(v_T)\}$ be a set of nodes in $T$ that contain $v_H\in V_H$.
 A tree decomposition is \emph{irredundant} if,
 \begin{align}
 \nonumber&\forall v_H\in V_H,\ \forall v_T\in V_T(v_H)\\
\label{eq:1} &\ell_T^{(E)}(v_T)\neq\emptyset\Leftrightarrow v_T\ \text{is a leaf in}\ T[V_T(v_H)],
 \end{align}
where $T[V_T(v_H)]$ is the subgraph induced by $V_T(v_H)$.
\end{deff}


We can make any tree decomposition to be irredundant in polynomial time; for each $v_H$, if it does not satisfy the condition~\eqref{eq:1}, remove $v_H$ from each $\ell_T^{(E)}(v_T)~(v_T\in V_T(v_H))$.

\noindent
\textbf{Implementation.}
We tailor a tree decomposition algorithm for a molecular hypergraph.
It starts from a one-node tree whose node contains the input hypergraph, and updates the tree by applying the following two steps.

The first step is to find a node such that the input hypergraph becomes disjoint when divided at the node, and to divide the hypergraph into two.
When dividing a hypergraph at a node, the node is duplicated so that the two hyperedges that the node belonged to still contain the node.
This operation is repeatedly applied to the subhypergraphs until there does not exist such a node.
As a result, the input hypergraph is divided into (i) hypergraphs containing exactly one hyperedge and (ii) hypergraphs that contain rings.
The type-(i) hypergraphs are obtained from branching structures of the input hypergraph, and the type-(ii) ones from rings.

The second step \emph{rips off} the hyperedges from the type-(ii) hypergraphs; \ie, we divide a node containing a ring into one node containing all of the nodes and other nodes each of which contains each of the hyperedges.
This operation is helpful to reduce the number of production rules.
The inferred grammar can generate a ring structure by first generating a skeleton of a ring, where all of the hyperedges are non-terminal, and then replacing the non-terminals in an arbitrary way.
Without the second step, since the number of possible atom configurations of a ring is enormous, that of production rules also greatly increases.

\noindent
\textbf{Theoretical Result.} 
Theorem~\ref{thm:hrg-dual-repr} summarizes the properties of our algorithm.
It suggests that 
(i) HRG inferred by our algorithm can generate the whole input hypergraphs and 
(ii) the inferred HRG always generates a molecular hypergraph, \ie, the HRG is MHG.
See Appendix~\ref{sec:proofs} for its proof.

\begin{thm}
\label{thm:hrg-dual-repr}
 Let $\cH(L_H^{(V)}, L_H^{(E)}, c^{(E)})$ be a set of all molecular hypergraphs and $\hat{\cH}$ be its finite subset.
 Let $\cL_{\hrg}(\hat{\cH})$ be the language generated by HRG inferred by applying our algorithm to $\hat{\cH}$. 
 Then, $\cL_{\hrg}(\hat{\cH})$ satisfies 
\begin{align*}
 \hat{\cH}\subseteq\cL_{\hrg}(\hat{\cH})\subseteq \cH(L_H^{(V)}, L_H^{(E)}, c^{(E)}).
\end{align*}
\end{thm}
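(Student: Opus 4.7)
I would split the proof into the two inclusions and tackle them in order. For the \textbf{first inclusion} ($\hat{\cH}\subseteq\cL_{\hrg}(\hat{\cH})$), for each $H\in\hat{\cH}$, I would root its irredundant tree decomposition $T$ at an arbitrary node and interpret it as a parse tree, then show that sequentially applying the production rules extracted from each triplet $(\pa(v_T), v_T, \ch(v_T))$ regenerates $H$. This is the standard correctness statement inherited from \citet{aguinaga2016}; passing from a generic to an irredundant decomposition does not break it, because enforcing irredundancy only strips redundant node occurrences from $\ell_T^{(V)}(v_T)$ while preserving both the tree structure and the hyperedge labels $\ell_T^{(E)}(v_T)$, so the pasted rules still yield $H$ exactly.

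For the \textbf{second inclusion} ($\cL_{\hrg}(\hat{\cH})\subseteq\cH(L_H^{(V)}, L_H^{(E)}, c^{(E)})$), I would verify the two defining conditions of a molecular hypergraph separately. Cardinality is immediate and does not require irredundancy: each terminal hyperedge in any extracted production rule is copied verbatim from a hyperedge of some $H\in\hat{\cH}$ (inheriting its label and size), and rule applications preserve hyperedge sizes, so $|e|=c^{(E)}(\ell_H^{(E)}(e))$ holds for every terminal hyperedge in any generated hypergraph.

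The crux is 2-regularity, which is where irredundancy earns its keep, and I would argue it via an invariant over the parse tree. First, I would track a final-hypergraph node $v$ through the derivation: $v$ is introduced as an internal node of the RHS of a unique topmost rule application and then propagates downward as an external node of successive rules along a connected subtree $S_v$ of the parse tree, so that its final degree equals the sum over $p\in S_v$ of the number of terminal hyperedges in the RHS of $p$ incident to $v$. Next, I would argue that the non-terminal labels produced by the inference algorithm encode, for each external position, a residual-degree profile---the number of terminal hyperedges still to be supplied by the subtree rooted at that non-terminal---and that this profile is respected by any compatible matching of non-terminals. Finally, invoking irredundancy, the only tree-decomposition nodes whose production rules can contribute terminal hyperedges incident to a given original node $v_H$ are the leaves of $T[V_T(v_H)]$, and by condition~2 of Definition~\ref{def:tree-decomp} these correspond exactly to the two hyperedges of the input molecular hypergraph containing $v_H$; hence the degree of $v$ is always $2$.

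The main obstacle is the middle step: formalising the residual-degree profile carried by each non-terminal and showing that the inference defines matching labels so that the profile is consistent across any derivation. Without irredundancy this step fails, because an internal node of $T[V_T(v_H)]$ that still carries terminal hyperedges could inject spurious incidences of $v_H$ when its production rule is combined with rules from a different input hypergraph---exactly the pathology that irredundancy was introduced to rule out. Once the invariant is in place, the final counting step is a direct consequence of the 2-regularity of the input molecular hypergraphs together with the uniqueness clause of Definition~\ref{def:tree-decomp}.
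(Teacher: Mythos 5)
Your overall skeleton matches the paper's: the first inclusion is inherited from \citet{aguinaga2016}, cardinality consistency is immediate because terminal hyperedges are copied verbatim from the inputs (this is the paper's Lemma~\ref{lemma:card-cons-pres}), and $2$-regularity is the crux where irredundancy is needed. But the step you yourself flag as ``the main obstacle'' is a genuine gap, and the device you propose to close it --- a residual-degree profile encoded in the non-terminal labels, to be matched consistently across derivations --- is not something the inference algorithm actually provides, and you give no argument for why such labels would exist or be respected when rules extracted from \emph{different} input hypergraphs (or different parts of the same one) are mixed in a single derivation. This matters because $\cL_{\hrg}(\hat{\cH})$ consists precisely of such mixed derivations; tracking a final node $v$ back to a single $T[V_T(v_H)]$ only covers re-derivations of the inputs themselves.

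The paper closes the gap without any label bookkeeping, by showing that the profile you are looking for is \emph{constant}. Lemma~\ref{lemma:2-reg-sat-cond} proves, by a case analysis on whether $T[V_T(v_H)]$ is a single tree node or a path (it can be nothing else when $H$ is $2$-regular and the decomposition is irredundant, since the two hyperedges containing $v_H$ must sit at the at most two leaves of $T[V_T(v_H)]$), that \emph{every} extracted rule $(A,R)$ has all external nodes of degree one and all internal nodes of degree two in $R$ (Condition~\ref{cond:rhs}). This is a purely local, rule-by-rule property, independent of which input hypergraph or which tree the rule came from, so it survives arbitrary recombination. Lemma~\ref{lemma:2-reg} then shows by induction on the derivation that any HRG all of whose rules satisfy Condition~\ref{cond:rhs} generates only $2$-regular hypergraphs: replacing a non-terminal hyperedge $e$ deletes one incidence at each node of $e$ and attaches exactly one new incidence via the corresponding external node, leaving all degrees unchanged, while freshly created internal nodes arrive with degree two. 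If you restructure your argument around this local invariant --- first the Case~1/2/3 analysis establishing the degree-one/degree-two property of every rule, then the derivation induction --- the mixing problem you identified disappears and no profile machinery is needed.
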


\section{Application to Molecular Optimization}
So far, we have presented $(\enc_H, \dec_H)$ , a pair of encoder and decoder between a molecular graph and a molecular hypergraph~(Sec.~\ref{sec:encod-decod-betw}),
 and $(\enc_G, \dec_G)$, that between a molecular hypergraph and a parse sequence according to MHG~(Sec.~\ref{sec:molec-hypergr-gramm}).
We have also provided an algorithm to obtain $(\enc_G, \dec_G)$ from a set of molecular hypergraphs~(Sec.~\ref{sec:mhg-infer-algor}).
In this section, we finally present an application of our encoders and decoders to molecular optimization, which aims to search a molecule with desirable properties.

\subsection{Our Model}
Our model consists of encoder and decoder, $(\enc, \dec)$, and a predictive model from the latent space to a target value.
Our encoder and decoder are composed as: 
\begin{align*}
\enc &= \enc_N \circ \enc_G \circ \enc_H,\\
\dec &=\dec_H \circ \dec_G \circ \dec_N,
\end{align*}
where $(\enc_N, \dec_N)$ is a seq2seq GVAE~\cite{kusner2017}.
Since GVAE can output a parse sequence that follows a context-free grammar, $\dec_N$ is guaranteed to output a valid parse sequence that belongs to $\enc_G[\cL_\mhg]$, the domain of the following decoder $\dec_G$.
Our model configuration appears in Appendix~\ref{sec:model-config}.

\subsection{Molecular Optimization Algorithm}
Global molecular optimization~(Eq.~\eqref{eq:3}) aims to find novel molecules with desirable properties from the entire molecular space.
It consists of two steps.
Algorithm~\ref{alg:cont-repr} summarizes the former step to obtain latent representations of the input molecules.
Note that this algorithm does not outputs encoders, because molecular optimization algorithms require $\dec$ only.
Algorithm~\ref{alg:global-opt-alg} describes the latter step, which optimizes a molecule in the latent space using BO.

\begin{algorithm}[t]
\caption{Latent Representation Inference}\label{alg:cont-repr}
\textbf{In:} Mols and targets, $\cG_0=\{g_n\}_{n=1}^N$, $\cY_0=\{y_n\}_{n=1}^N$.

\begin{algorithmic}[1]
 \STATE Obtain molecular hypergraphs: $\cH_0\leftarrow\enc_H(\cG_0)$
 \STATE Obtain an MHG as well as parse sequences:\\ $\dec_G, \mhg, \cS_0\leftarrow \mhg\mathchar`-\mathsf{INF}(\cH_0)$.
 \STATE Train NNs using $(\cS_0,\cY_0)$ to obtain $(\enc_N,\dec_N)$, $\hat{f}$.
 \STATE Obtain latent vectors, $\cZ_0\leftarrow\bbE[\enc_N[\cS_0]]$.
\end{algorithmic}
 \textbf{return} latent vectors $\cZ_0$, $\dec_G$, $\dec_N$, $\hat{f}$.
\end{algorithm}
\begin{algorithm}[t]
\caption{Global Molecular Optimization}\label{alg:global-opt-alg}
 \textbf{In:} $\cZ_0$, $\cY_0$, $\dec$, \#iterations $K$, \#candidates $M$.

 \begin{algorithmic}[1]
  \STATE $\cD_1 \leftarrow \{(z_n,y_n)\}_{n=1}^N$
  \FOR{$k=1,\dots,K$}
  \STATE Fit GP using $\cD_k$.
  \STATE Obtain candidates $\cZ_k=\{z_m\in\bbR^D\}_{m=1}^M$ from BO.
  \STATE Obtain molecular graphs as $\cG_k\leftarrow\dec[\cZ_k]$.
  \STATE Obtain target values as $\cY_k\leftarrow f[\cG_k]$.
  \STATE $\cD_{k+1}\leftarrow \cD_k \cup \{(z_{k,m}, y_{k,m})\in\cZ_k\times\cY_k\}_{m=1}^M$.
  \ENDFOR
 \end{algorithmic}
 \textbf{return} novel molecules $\{g_{k,m}\}_{m=1,k=1}^{M,K}$.
\end{algorithm}

\section{Related Work}\label{sec:related-work}
%
Molecular optimization has a longstanding history especially in drug discovery, and mostly combinatorial methods have been used to generate novel molecules~\cite{Jorgensen:2009aa}.
The paper by \citet{bombarelli2016}, for the first time, applies modern machine learning techniques to this problem, and since then, growing number of papers have been tackling this problem.

There are two complementary approaches to molecular optimization.
One is the combination of VAE and Bayesian optimization, originally developed by \citet{bombarelli2016}, and the other is reinforcement learning, where the construction of a molecule is modeled as a Markov decision process~\cite{guimaraes2017,you2018,zhou2018}.
These two approaches cover complementary application areas, and therefore, they are not conflicting with each other.
The key difference is the assumption on function evaluation cost.
The former assumes that the cost is so high that the number of function evaluations should be kept at a minimum,
while the latter assumes that the cost is negligible and a number of trial-and-errors are allowed.
Therefore, the former is more favorable when the evaluation requires wet-lab experiments or computationally heavy first-principles calculation, and the latter is more favorable when the evaluation can be carried out by light-weight computer simulation.
Since our paper focuses on the former setting, in the following, we will discuss its literature.

As stated in the introduction, the decoding error issue has been one of the critical issues, and therefore, this section focuses on a series of studies alleviating it.
There are mainly two approaches to address it.
One approach is to devise the decoding network to generate as valid SMILES strings as possible.
For example, \citet{kusner2017} leverage SMILES' grammar to force the decoder to align the grammar.
This approach is limited because they assume a context-free grammar~(CFG), while SMILES' grammar is not totally context-free.
\citet{dai2018} propose to use an attribute grammar, which enhances CFG by introducing attributes and rules.
This enhancement allows us to enforce semantic constraints to the decoder.
However, they deal only with the ring-bond matching and valence conditions, and therefore, their decoder sometimes fails to generate valid molecules.

Another approach is to substitute another molecular representation for SMILES so that the output is guaranteed to be valid.
As far as we know, only the very recent paper by \citet{jin2018} takes this approach.
They represent a molecule by fragments connected in a tree structure.
While their work for the first time reports 100\% validity of decoded molecules, their method requires multiple neural networks other than VAE and the predictor.
Our work further pushes along this direction by formalizing the tree representation in terms of HRG.
This formalization allows us to model atom-level connections between fragments along with the stereochemistry information, and we realize 100\% validity using a single VAE and the predictor.

\if0
Their algorithm uses a tree decomposition of a molecular graph as one of the molecular representations.
Given the correspondence between HRG and a tree decomposition, their research and ours share a similar idea.
Their work and ours are complementary to each other in the research focus; 
they use a predefined grammar and focus on the neural network design; we focus on inferring a grammar from data whilst leaving the neural network design open.
Therefore, by integrating these two research lines, it is expected to improve the capability of the optimal molecular design methods.
\fi

\section{Empirical Studies}
\begin{table*}[t]
\centering
\caption{Reconstruction rate, predictive performance, and global molecular optimization with the unlimited oracle. GCPN, the RL-based method, outperforms VAE-based ones when the target evaluation cost is negligible~(though, which is \emph{not} our focus).}\label{tab:exp-res}
 {
 \begin{tabular}[t]{cccccccccc}
\toprule
  \multirow{2}{*}{Method} & \multirow{2}{*}{\% Reconst.} & \multirow{2}{*}{Valid prior}  & \multirow{2}{*}{Log likelihood} & \multirow{2}{*}{RMSE} & \multicolumn{5}{c}{Unlimited oracle case}\\
                          &                              &                               &                                 &                       & 1st  & 2nd  & 3rd  & 50th & Top 50 Avg. \\
  \cmidrule(lr){1-1}\cmidrule(lr){2-3}\cmidrule(lr){4-5}\cmidrule(lr){6-10}
  CVAE & 44.6\% & 0.7\% & $-1.812 \pm 0.004$ & $1.504 \pm 0.006$ & 1.98 & 1.42 & 1.19 & -- & --\\
  GVAE & 53.7\% & 7.2\% & $-1.739 \pm 0.004$ & $1.404 \pm 0.006$ & 2.94 & 2.89 & 2.80 & -- & --\\
  SD-VAE & 76.2\% & 43.5\% & $-1.697 \pm 0.015$ & $1.366 \pm 0.023$ & 4.04 & 3.50 & 2.96 & -- & --\\
  JT-VAE & 76.7\% & {100\%} & $-1.658 \pm 0.023$ & $1.290 \pm 0.026$ & 5.30 & 4.93 & 4.49 & 3.48 & 3.93\\
  GCPN & -- & -- & -- & -- & 7.98 & 7.85 & 7.80 & -- & --\\\midrule
  \textbf{Ours} & {94.8\%} & {100\%} & ${-1.323 \pm 0.003}$  & $ 0.959 \pm 0.002$ & 5.56 & 5.40 & 5.34 & 4.12 & 4.49\\
\bottomrule
 \end{tabular}
}
\end{table*}

\begin{table}[t]
\centering
\caption{Global molecular optimization with the limited oracle. Our method outperforms the others including GCPN.}\label{tab:exp-res-global-limited}
 {
 \begin{tabular}[t]{cccccc}
\toprule
  \multirow{2}{*}{Method} & \multicolumn{5}{c}{Limited oracle case} \\ 
                          & 1st  & 2nd  & 3rd  & 50th & Top 50 Avg. \\\midrule
  JT-VAE                  & 1.69 & 1.68 & 1.60 & -9.93& -1.33       \\
  GCPN                    & 2.77 & 2.73 & 2.34 & 0.91 & 1.36        \\\midrule
  \textbf{Ours}           & 5.24 & 5.06 & 4.91 & 4.25 & 4.53        \\
\bottomrule
 \end{tabular}
}
\end{table}

We evaluate the effectiveness of MHG in the molecular optimization domain.
In particular, we are interested in the case when unlabeled molecules are abundant but the number of function evaluations is limited due to its cost.
We basically follow \citeauthor{jin2018}'s experimental procedures, and the baseline results are copied from the existing papers~\cite{jin2018,you2018} when appropriate.

\noindent
\textbf{Purposes and Baselines.}
As explained in Section~\ref{sec:related-work}, there are two complementary approaches to molecular optimization: VAE-based and RL-based ones.
The purposes of the empirical studies are to answer the following research questions:
\textbf{(Q1)}~Do RL-based approaches perform better than VAE-based ones when the number of function evaluations is \emph{not} limited~(as reported in \citeauthor{you2018}'s paper)?
\textbf{(Q2)}~Do VAE-based approaches generally work better than RL-based ones when the number of function evaluations is limited?
\textbf{(Q3)}~Does MHG-VAE outperform the existing VAE-based methods?
\textbf{(Q2)} and \textbf{(Q3)} are our primary concerns.
To this end, as baseline methods, we employ CVAE~\cite{bombarelli2016}, GVAE~\cite{kusner2017}, SD-VAE~\cite{dai2018}, and JT-VAE~\cite{jin2018} as VAE-based approaches, and GCPN~\cite{you2018} as an RL-based approach.\footnote{The algorithm by \citet{zhou2018} is not used because the open-sourced program consumed more than 96GB memory and was unable to reproduce the result in our environment.}
For their details, see Section~\ref{sec:related-work}.

\noindent
\textbf{Dataset.}
We use the ZINC dataset following the existing work.
This dataset is extracted from the ZINC database~\cite{sterling2015} and contains 220,011 molecules for training, 24,445 for validation, and 5,000 for testing.
For basic statistics of MHG inferred using this dataset, please refer to Appendix~\ref{sec:mhg-stat}.
For the target chemical property to be maximized, we employ a standardized penalized logP following the existing work:
\begin{align}
\label{eq:4} f(m) = \widehat{\mathrm{logP}}(m) - \widehat{\mathrm{SA}}(m) - \widehat{\mathrm{cycle}}(m),
\end{align}
where $\mathrm{logP}$ is the octanol-water partition coefficient, $\mathrm{SA}$ is the synthetic accessibility score,
and $\mathrm{cycle}$ is the size of the longest ring subtracted by six~(if its size is less than six, the function returns 0), and the hat represents that the function is standardized using the values calculated on the training set.
For example, letting $\mu_{\mathrm{logP}}$ and $\sigma_{\mathrm{logP}}$ be the sample mean and standard deviation of $\mathrm{logP}$ calculated using the training set, $\widehat{\mathrm{logP}}(m) = (\mathrm{logP}(m) - \mu_{\mathrm{logP}})/\sigma_{\mathrm{logP}}$.

\subsection{Reconstruction Rate}
We first investigate the quality of $\enc$ and $\dec$ of the VAE-based methods, which has much influence on the performance of molecular optimization.

\noindent
\textbf{Protocol.}
For each molecular graph $m$ in the test set, we obtain its reconstruction as $m^\prime=\dec(\enc(m))$.
If $m$ and $m^\prime$ are isomorphic, we regard the reconstruction succeeds.
We repeat the above procedure using all of the test molecules 100 times, and report the mean reconstruction success rate.

To investigate the quality of the latent space, we evaluate the success rate of decoding random latent vectors.
We sample $z$ from $\cN(0,I)$ and decode it to obtain $m=\dec(z)$. If $m$ is valid, we regard the decode succeeds.
We repeat this procedure 1,000 times and report the success rate.

\noindent
\textbf{Result.}
According to Table~\ref{tab:exp-res}~(left),
our method clearly improves the reconstruction rate, which justifies our molecular modeling approach.

\subsection{Global Molecular Optimization}\label{sec:glob-molec-optim}
We then investigate the performance on global molecular optimization using two scenarios.
The first scenario assumes that the function evaluation cost is negligible and the algorithms can query an arbitrary number of target properties, which is used to answer \textbf{(Q1)} and \textbf{(Q3)}.
This scenario is dubbed as the \emph{unlimited oracle case}.
All of the existing studies assume this scenario.
The second one assumes that it is expensive and only a limited number of oracle calls are allowed, which is used to answer \textbf{(Q2)} and \textbf{(Q3)}.
This scenario is dubbed as the \emph{limited oracle case}.
Since our problem setting assumes the expensive case, the second scenario is of primary interest, and the first scenario is examined for completeness.

\noindent
\textbf{Protocol 1 (unlimited oracle case).}
For our method, we first obtain latent representations by Algorithm~\ref{alg:cont-repr}.
Then, we apply PCA to the latent vectors to obtain $40$-dimensional latent representations.
Then, we run Algorithm~\ref{alg:global-opt-alg} with $M=50$, $K=5$.
As a result, we obtain 250 novel molecules.
We repeat this procedure ten times, resulting in 2,500 novel molecules.
We report the log-likelihood and root mean-squared error~(RMSE) of GP evaluated on the test set,
top three molecule property scores out of 2,500, and the mean of top 50 target properties.

For the baseline methods, we simply copied the results from \citeauthor{you2018}'s paper, where the VAE-based methods use the same protocol as ours.
Note that the number of queries by GCPN, the RL-based method, is larger than those of VAE-based methods.
In fact, the default configuration of GCPN requires $5\times 10^7$ steps.

\noindent
\textbf{Protocol 2 (limited oracle case).}
In this scenario, we compare all of the methods under the same number of queries.
For our method and JT-VAE, we initialize GP with $N=250$ labeled molecules randomly selected from the training set, and run Algorithm~\ref{alg:global-opt-alg} with $M=1$, $K=250$.
We use GPyOpt~\cite{gpyopt2016} as the BO module.
For GCPN, we run the algorithm and regard the first 500 molecules as the output.
For each method, this procedure is repeated ten times, and obtain 2,500 novel molecules for the VAE-based methods and 5,000 novel molecules for GCPN.
We report top three molecule property scores out of the novel molecules and statistics of top 50 target properties.
Since BO is a highly random procedure, examining only top three molecules could lead to unfair comparison, and we suggest to examine statistics of top-$K$ molecules.

\noindent
\textbf{Result (the unlimited oracle case).}
Table~\ref{tab:exp-res}~(mid) shows the predictive performance of GP.
Our method achieves better scores than the others, indicating that our latent space well encodes features necessary to predict the property.

Table~\ref{tab:exp-res}~(right) reports the top three target properties as well as the minimum and average scores of top 50 molecules obtained via Protocol~1.
Ours achieves the best scores among the VAE-based methods, and GCPN reports molecules with the highest target properties, which answers~\textbf{(Q1)} and \textbf{(Q3)} in the affirmative.
Furthermore, when we focus on statistics of top 50 molecules, ours achieves better scores than JT-VAE.
These results suggest that our method is more likely to discover better molecules than the other VAE-based ones, which also supports us to answer \textbf{(Q3)} in the affirmative.

\noindent
\textbf{Result (limited oracle case).}
Table~\ref{tab:exp-res-global-limited} shows the result obtained via Protocol~2.
Our method clearly outperforms the other methods including GCPN, answering \textbf{(Q2)} in the affirmative.
It is notable that our method in the limited oracle case performs almost comparably to our method in the unlimited oracle case.
This result supports the effectiveness of our method especially in the limited oracle case.

\if0
\subsection{Global Molecular Optimization with Small Sample}\label{sec:glob-molec-optim-small}
This task investigates the performance when the sample is small.
\noindent
\textbf{Protocol.}
We then investigate the performance of 
We follow the above protocol except for two minor changes.
First, we employ GPyOpt~\cite{gpyopt2016} as the Bayesian optimization component.

\noindent
\textbf{Result.}
\fi

 \section{Conclusion and Future Work}
We have developed the molecular hypergraph grammar variational autoencoder~(MHG-VAE).
Our key idea is to employ MHG to represent a molecular graph as a parse tree, which is fed into VAE.
Since MHG models the atom-level connections as well as the stereochemistry information, MHG-VAE can learn a pair of encoder and decoder using a single VAE.
The highlights of our experiments include
(i) MHG-VAE achieves the best performance among VAE-based methods and
(ii) MHG-VAE performs better than the state-of-the-art RL-based method called GCPN when the number of target function evaluations is limited.

A future research direction will be to optimize MHG with respect to some goodness criteria such as the minimum description length~\cite{jonyer2004} or a Bayesian criterion~\cite{Chen:1995:BGI:981658.981689}.
Since the resultant MHG depends on tree decompositions, we can optimize MHG by manipulating tree decompositions.

Another direction involves retrosynthetic analysis, which derives a pathway to synthesize the target molecule.
With this capability, we will be able to immediately examine the property by synthesizing the output molecule.

\section*{Acknowledgments}
This work was supported by JST CREST Grant Number JPMJCR1304, Japan, and JSPS KAKENHI Grant Numbers 15H05711, Japan.
The author would like to thank Dr. Masakazu Ishihata for his helpful discussion.

\newpage

\bibliography{main}
\bibliographystyle{plainnat}

\newpage
\appendix

\section{HRG Inference Algorithm}\label{sec:hrg-infer-algor}
\begin{algorithm}[t]
\caption{Production rule extraction}
 \label{alg:ext-prod-rule}
\textbf{Input: } tree decomposition $T$ and node $v_{T}\in V_T$.\\ 
\textbf{Output: } production rule $p=(A,R)$.\\
\vspace{-5mm}
 \begin{algorithmic}[1]
  \STATE Find $\pa(v_T)$ and $\ch(v_T)$.
  \IF{$\pa(v_{T})$ does not exist}
  \STATE Set $A$ as the starting hyperedge $S$.
  \ELSE
  \STATE Set $A=\ell_T^{(V)}(v_T)\cap \ell_T^{(V)}(\pa(v_T))$.
  \ENDIF
  \STATE Set $R=H(v_T)$, where the nodes shared with $A$ are set to be the external nodes, $\ext(R)$.
  \IF{$\ch(v_T)$ is not empty}
  \FOR{each child $v_T^\prime\in\ch(v_T)$}
  \STATE Add a non-terminal hyperedge to $R$, which consists of nodes $\ell_T^{(V)}(v_T) \cap \ell_T^{(V)}(v_T^\prime)$ with the non-terminal label.
  \ENDFOR
  \ENDIF
  \STATE \textbf{return } $(A,R)$
 \end{algorithmic}
\end{algorithm}
\begin{figure}[t]
\centering
\begin{minipage}[t]{.45\hsize}
\centering
 \includegraphics[width=.8\hsize]{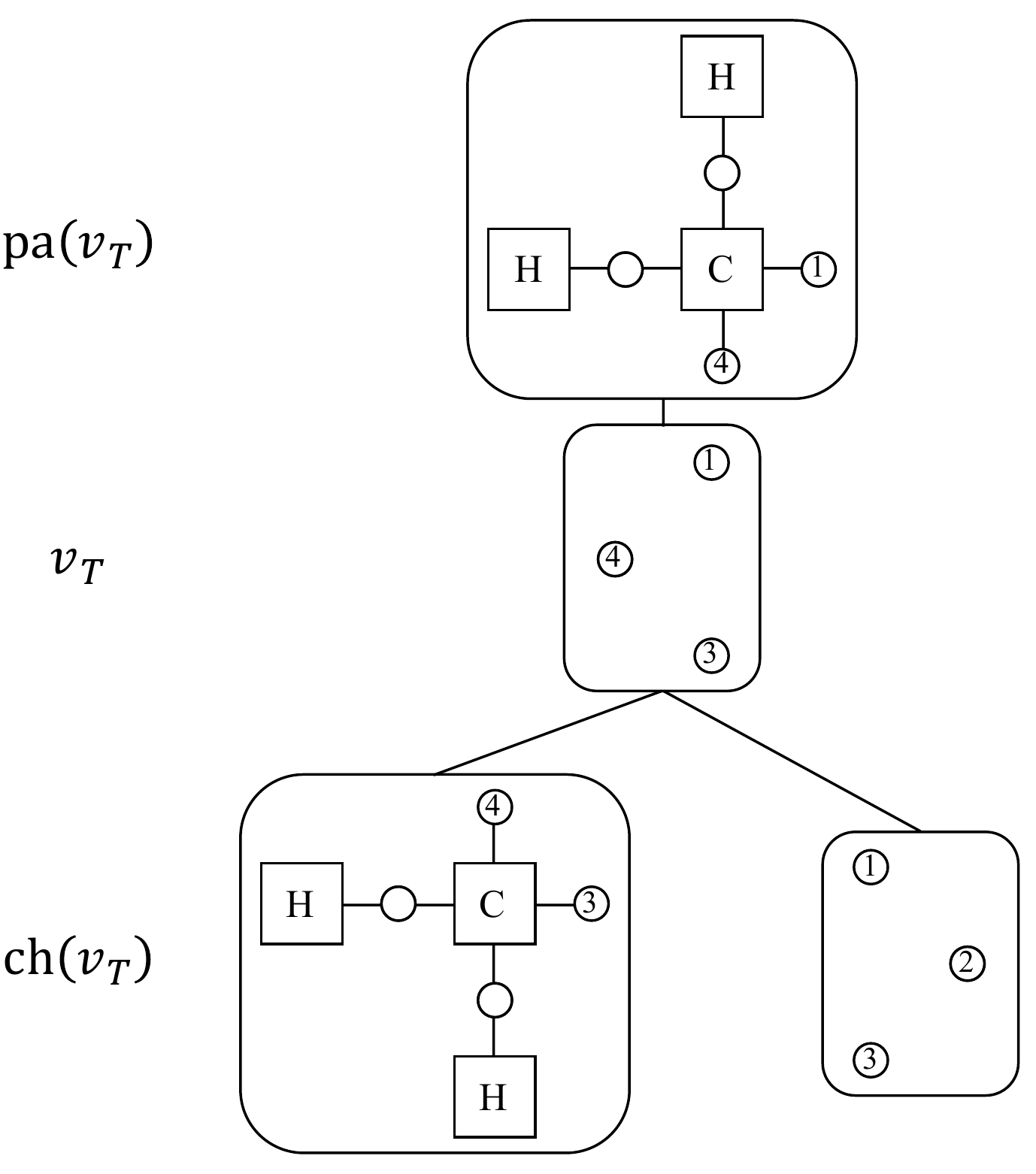}
 \subcaption{Triplet of $v_T$, $\pa(v_T)$, and $\ch(v_T)$ chosen from the tree decomposition in Figure~\ref{fig:tree-decomp}.}\label{fig:triplet}
\end{minipage}
\hspace{.05\hsize}
\begin{minipage}[t]{.45\hsize}
 \centering
 \includegraphics[width=\hsize]{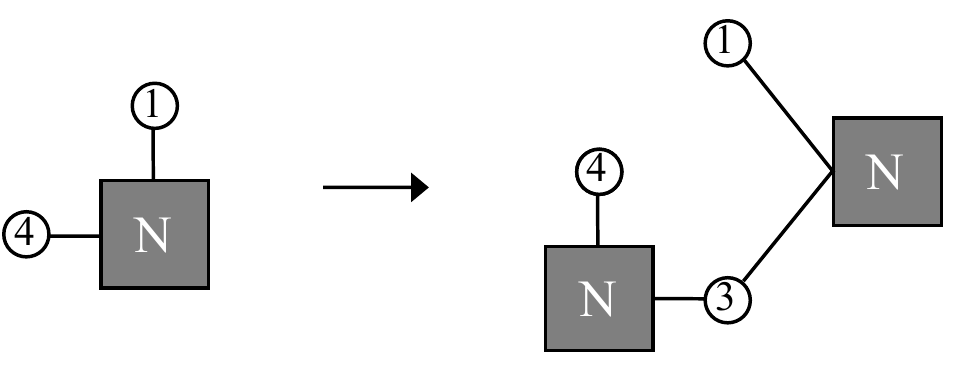}
 \subcaption{Production rule extracted from the triplet in Figure~\ref{fig:triplet}. Nodes IDs are added for explanation, and in the algorithm, they are removed except for the correspondence between the nodes in LHS and the external nodes in RHS.}\label{fig:extracted_prod_rule}
\end{minipage}
\caption{Illustration of Algorithm~\ref{alg:ext-prod-rule}.}
\end{figure}
This section describes the details of the algorithm proposed by \citet{aguinaga2016}.

The input of the algorithm is a set of hypergraphs, $\hat{\cH}=\{H_1,\dots,H_N\}$, and its output is a hyperedge replacement grammar $\cG$ whose language includes the input hypergraphs.
The algorithm extracts production rules from each input hypergraph. Let $H\in\hat{\cH}$ be any input hypergraph.
It first computes a tree decomposition of $H$, which we denote by $T$, and picks an arbitrary node of $T$ as its root.
Then, for each node $v_T\in V_T$, it applies Algorithm~\ref{alg:ext-prod-rule} to extract a production rule.
Algorithm~\ref{alg:ext-prod-rule} chooses the triplet of $v_T$ and its parent and children as shown in Fig.~\ref{fig:triplet}, and extracts a production rule that glues $H(v_T)$ to $H(\pa(v_T))$ with non-terminal hyperedges for gluing each child.
Figure~\ref{fig:extracted_prod_rule} illustrates the production rule extracted from the triplet shown in Fig.~\ref{fig:triplet}.
After applying Algorithm~\ref{alg:ext-prod-rule} to all of the nodes and all of the input hypergraphs, it removes duplicated production rules and outputs the set of production rules.

\section{Proofs}\label{sec:proofs}
This section provides a proof of Theorem~\ref{thm:hrg-dual-repr}.
Throughout this section, a \emph{node} refers to a hypergraph's node, and a \emph{tree node} refers to a tree decomposition's node.

To prove Theorem~\ref{thm:hrg-dual-repr}, we only have to prove that $\cL_{\hrg}(\hat{\cH})\subseteq \cH(L_H^{(V)}, L_H^{(E)}, c^{(E)})$ holds, because $\hat{\cH}\subseteq\cL_{\hrg}(\hat{\cH})$ has been proven by \citet{aguinaga2016}.

$\cL_{\hrg}(\hat{\cH})\subseteq \cH(L_H^{(V)}, L_H^{(E)}, c^{(E)})$ can be proven by combining Lemmata~\ref{lemma:2-reg-sat-cond}, \ref{lemma:2-reg}, and \ref{lemma:card-cons-pres}.
Lemmata~\ref{lemma:2-reg-sat-cond} and \ref{lemma:2-reg} guarantee that our algorithm preserves the first condition of a molecular hypergraph, and
Lemma~\ref{lemma:card-cons-pres} guarantees that our algorithm preserves the second condition of a molecular hypergraph.

\begin{cond}
\label{cond:rhs}
 For each production rule $p=(A,R)$, the degree of external nodes of $R$ is one, and the degree of internal nodes of $R$ is two.
\end{cond}

\begin{lemma}
\label{lemma:2-reg-sat-cond}
 HRG inferred by applying our algorithm to a set of $2$-regular hypergraphs satisfies Condition~\ref{cond:rhs}.
\end{lemma}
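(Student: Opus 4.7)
The plan is to fix an arbitrary production rule $p=(A,R)$ produced by Algorithm~\ref{alg:ext-prod-rule} at a tree node $v_T$ of an irredundant tree decomposition $T$ of some $2$-regular input hypergraph $H$, and to verify Condition~\ref{cond:rhs} for every node $v_H\in\ell_T^{(V)}(v_T)$ of $R$. I begin by writing the degree of $v_H$ in $R$ as $d_T(v_H)+d_{NT}(v_H)$, where $d_T(v_H)$ counts terminal hyperedges in $\ell_T^{(E)}(v_T)$ incident to $v_H$ and $d_{NT}(v_H)$ counts children $v_T'\in\ch(v_T)$ satisfying $v_H\in\ell_T^{(V)}(v_T')$; by construction Algorithm~\ref{alg:ext-prod-rule} adds exactly one non-terminal hyperedge incident to $v_H$ per such child. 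The node $v_H$ is external in $R$ iff $v_H\in\ell_T^{(V)}(\pa(v_T))$, which is vacuously false at the root where $A=S$ carries no external nodes.

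Next I use $2$-regularity and irredundancy to constrain the shape of the induced subtree $T[V_T(v_H)]$. By $2$-regularity exactly two hyperedges of $H$ contain $v_H$, and condition~2 of Definition~\ref{def:tree-decomp} assigns each of them to a unique tree node that necessarily lies in $V_T(v_H)$. Irredundancy then pins those hosting tree nodes to be leaves of $T[V_T(v_H)]$ and forces each leaf of $T[V_T(v_H)]$ to host at least one hyperedge incident to $v_H$, so $T[V_T(v_H)]$ has at most two leaves. Being a subtree of $T$, it is therefore either a single node, with both incident hyperedges at that node, or a path, with one incident hyperedge at each endpoint and no incident hyperedge at any interior tree node.

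A short case split on where $v_T$ sits in $T[V_T(v_H)]$ then finishes the argument. If $V_T(v_H)=\{v_T\}$ then $d_T=2$, $d_{NT}=0$, and $v_H$ is internal, giving degree $2$. If $v_T$ is a path endpoint then $d_T=1$ and its unique in-path neighbor is either $\pa(v_T)$, yielding $v_H$ external with $d_{NT}=0$ and degree $1$, or a child of $v_T$, yielding $v_H$ internal with $d_{NT}=1$ and degree $2$. If $v_T$ is a path interior then $d_T=0$ and its two in-path neighbors form either a parent-child pair, yielding $v_H$ external with $d_{NT}=1$ and degree $1$, or two children, yielding $v_H$ internal with $d_{NT}=2$ and degree $2$. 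Every subcase matches Condition~\ref{cond:rhs}, and taking the union over all tree nodes of all input hypergraphs completes the proof.

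The main obstacle is the structural step showing that $T[V_T(v_H)]$ is a single node or a path; once that is extracted from $2$-regularity together with irredundancy (the former supplying exactly two tokens to place, the latter confining them to leaves and precluding extraneous leaves), the remaining degree bookkeeping is mechanical.
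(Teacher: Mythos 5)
Your proof is correct and follows essentially the same route as the paper's: both establish via $2$-regularity and irredundancy that $T[V_T(v_H)]$ is a single tree node or a path with the two incident hyperedges hosted at its leaves, and then perform the same six-way degree count (your split by position of $v_T$ in the path versus the paper's split by internal/external status of $v_H$ yields identical subcases). If anything, your justification of the single-node-or-path structural claim is spelled out more explicitly than in the paper, which asserts it directly from $2$-regularity.
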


\begin{lemma}
\label{lemma:2-reg}
 If HRG satisfies Condition~\ref{cond:rhs}, then it always generates a $2$-regular hypergraph.
\end{lemma}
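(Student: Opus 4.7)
The plan is to establish Lemma~\ref{lemma:2-reg} by induction on the length of a derivation, proving the stronger invariant that every intermediate hypergraph has every node of degree exactly two, counted over both terminal and non-terminal incidences. Because a terminal hypergraph contains only terminal hyperedges, this invariant specialises at the end of the derivation to the $2$-regularity asserted by the lemma.

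For the base case I adopt the standard HRG convention that the starting hyperedge has arity $|S|=0$, so the initial hypergraph is a single non-terminal on no nodes and the invariant is vacuous; alternatively, one may begin the induction one step later, at the hypergraph produced by the first rule $(S,R_S)$, whose nodes are all internal and hence of degree $2$ by Condition~\ref{cond:rhs}. For the inductive step, suppose $H_k$ satisfies the invariant and $H_{k+1}$ is obtained by replacing a non-terminal hyperedge $e_A$ with attachment nodes $v_1,\dots,v_{|A|}$ by a fresh copy of the right-hand side $R$ of some rule $(A,R)$, with the external nodes of $R$ identified with the $v_i$. I will verify the invariant node by node: a node untouched by the rewrite keeps its degree $2$ by the induction hypothesis; each attachment node $v_i$ loses exactly one unit of degree when $e_A$ is removed and regains exactly one unit from the identified external node of $R$ (which has degree $1$ in $R$ by Condition~\ref{cond:rhs}), so its degree stays at $2$; and each fresh internal node of $R$ has degree $2$ in $R$ and is incident to no other hyperedge, so its total degree is $2$.

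The engine of the argument is this local arithmetic at the attachment nodes, where the $-1$ from deleting $e_A$ is cancelled exactly by the $+1$ contributed by the external-node constraint in Condition~\ref{cond:rhs}. The only subtlety I anticipate is a clean treatment of the starting rule: if one does not take $|S|=0$, the invariant has to be weakened so that a node still incident to a non-terminal may be temporarily one unit short, and the full invariant is then re-established as soon as the starting non-terminal is rewritten. Once the derivation terminates, every hyperedge is terminal and the invariant becomes exactly the definition of a $2$-regular hypergraph, which is the statement of Lemma~\ref{lemma:2-reg}.
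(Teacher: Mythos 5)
Your proposal is correct and follows essentially the same route as the paper's proof: induction on the derivation, maintaining $2$-regularity (counting non-terminal incidences) as the invariant, with the key step being the cancellation at each attachment node between the $-1$ from deleting the rewritten non-terminal and the $+1$ from the degree-one external node of $R$ guaranteed by Condition~\ref{cond:rhs}. Your explicit handling of the starting rule is slightly more careful than the paper's, which simply asserts that $R$ is $2$-regular for any rule $(S,R)$.
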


\begin{lemma}
\label{lemma:card-cons-pres}
 HRG inferred by applying our algorithm to a set of cardinality-consistent hypergraphs always generates a cardinality-consistent hypergraph.
\end{lemma}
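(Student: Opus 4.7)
The plan is to trace each terminal hyperedge in any derived hypergraph back to a hyperedge of some input molecular hypergraph, and to argue that its cardinality is preserved at every step of extraction and rewriting.

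First, I would examine how terminal hyperedges enter the grammar. By the production rule extraction step (Algorithm~\ref{alg:ext-prod-rule}), the right-hand side $R$ of any production rule $p=(A,R)$ is built from $H(v_T)=(\ell_T^{(V)}(v_T), \ell_T^{(E)}(v_T))$, augmented only by non-terminal hyperedges attached along the gluing nodes shared with children. Hence each terminal hyperedge of $R$ is literally a hyperedge $e \in \ell_T^{(E)}(v_T) \subseteq E_H$ of some input hypergraph $H\in\hat{\cH}$. Since $H$ is cardinality-consistent by hypothesis, $|e|=c^{(E)}(\ell_H^{(E)}(e))$ holds for $e$ at the moment it enters $R$.

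Next, I would verify that the hyperedge replacement operation preserves cardinality of terminal hyperedges. When a derivation step replaces a non-terminal $A$ in a host hypergraph by the right-hand side $R$, the external nodes $\ext(R)$ are identified with the nodes of the replaced non-terminal via the ordered correspondence stipulated in Def.~\ref{def:hyper-repl-gramm}. This identification is a bijection between $\ext(R)$ and the ordered tuple of distinct nodes of the non-terminal, so no two external nodes of $R$ can be collapsed. Consequently, any terminal hyperedge $e\subseteq V(R)$ with $|e|=k$ still contains $k$ distinct nodes after gluing, and its label is unchanged, so the constraint $|e|=c^{(E)}(\ell_H^{(E)}(e))$ continues to hold. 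A straightforward induction on the length of the derivation then shows that every terminal hyperedge of any sentential form traces back to an input hyperedge and preserves its original cardinality throughout all subsequent replacements. In particular, every terminal hypergraph in $\cL_{\hrg}(\hat{\cH})$ is cardinality-consistent.

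The main obstacle is the second step: pinning down the precise semantics of hyperedge replacement so that the identification of external nodes with nodes of the non-terminal is genuinely injective and label-preserving. Everything else is bookkeeping on top of the inference algorithm; the payoff of the argument rests on the fact that external nodes come as an ordered tuple of distinct nodes whose length and ordering are forced to match those of the non-terminal $A$, which rules out any node merging that could shrink the membership of a terminal hyperedge.
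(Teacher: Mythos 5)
Your proposal is correct and follows essentially the same route as the paper's proof: terminal hyperedges in each right-hand side $R$ are exactly the hyperedges of $\ell_T^{(E)}(v_T)$ drawn from a cardinality-consistent input hypergraph (the paper justifies this via the second condition of Def.~\ref{def:tree-decomp}, which guarantees the whole hyperedge sits inside one tree node), and replacement preserves their cardinality. You are merely more explicit than the paper about why the gluing step cannot merge nodes of a terminal hyperedge, which the paper asserts without elaboration.
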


\begin{proof}[Proof of Lemma~\ref{lemma:2-reg-sat-cond}]
 Let $H$ be an arbitrary input hypergraph, and $(T, \ell_T^{(V)}, \ell_T^{(E)})$ be its irredundant tree decomposition.
 Since $H$ is $2$-regular, for each $v_H\in V_H$, $T[V_T(v_H)]$ is a single tree node that contains both of the two hyperedges~(Fig.~\ref{fig:case1}, \textbf{Case 1}),
 or $T[V_T(v_H)]$ is a path where each of the leaf tree nodes contains one of two hyperedges adjacent to $v_H$~(Fig.~\ref{fig:case23}, \textbf{Cases 2, 3}).
 It is sufficient to prove that for each $v_H\in V_H$ and for each $v_T\in T[V_T(v_H)]$, the production rule extracted by running Algorithm~\ref{alg:ext-prod-rule} satisfies Condition~\ref{cond:rhs}.
 In the following, we fix $v_H\in V_H$ to be an arbitrary node and $e_{H,1}, e_{H,2}\in E_H$ to be the hyperedges incident with $v_H$.
 
\begin{figure*}[t]
\centering
\begin{minipage}[t]{.29\hsize}
\centering
 \includegraphics[width=.5\hsize]{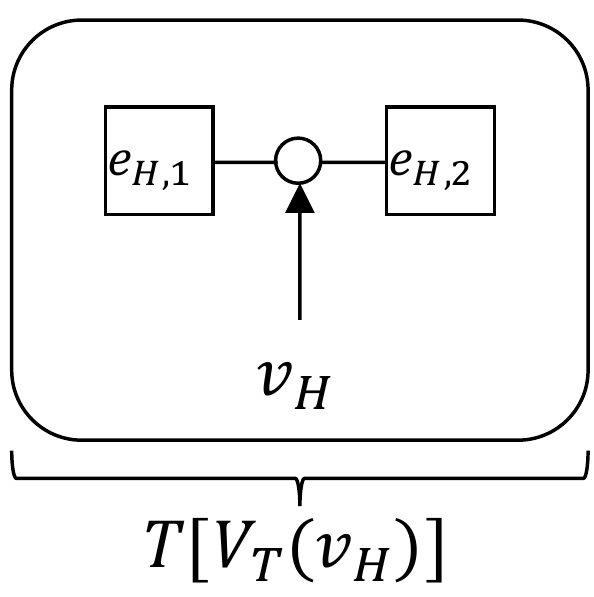}
\subcaption{Case 1: $|V_T(v_H)|=1$.}\label{fig:case1}
\end{minipage}
 \begin{minipage}[t]{.6\hsize}
\centering
  \includegraphics[width=\hsize]{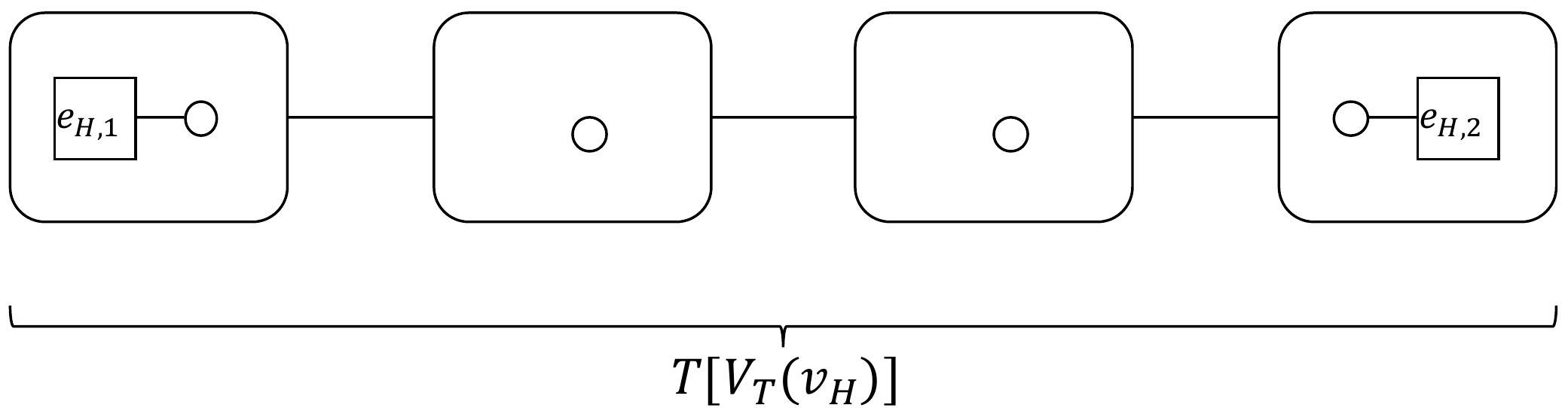}
  \subcaption{Cases 2, 3: $|V_T(v_H)|\geq 2$.}\label{fig:case23}
 \end{minipage}
\caption{Illustrations of $T[V_T(v_H)]$ in two cases.}
\end{figure*}
 
 \paragraph{Case 1.} $|V_T(v_H)|=1$\\
 First, let us assume $|V_T(v_H)|=1$ and let $v_T^\ast$ be the only node in $V_T(v_H)$, \ie, $V_T(v_H)=\{v_T^\ast\}$.
 In this case, $v_H$ cannot be an external node, because $\pa(v_T^\ast)$ does not contain $v_H$.
 Thus, $v_H$ must be an internal node with $e_{H,1}$ and $e_{H,2}$ when it appears in $R$.
 In addition, no non-terminal hyperedge will be incident with $v_H$ in $R$, because none of $\ch(v_T^\ast)$ contains $v_H$ in it.
 Therefore, in this case, $v_H$ is an internal node whose degree equals two in $R$.

\paragraph{Case 2.} $|V_T(v_H)|\geq 2$ and $v_H$ is an internal node in $R$\\
 Such a production rule is made from a triplet $(v_T, \pa(v_T), \ch(v_T))$ such that $v_H\in \ell_T^{(V)}(v_T)$ and $v_H\notin \ell_T^{(V)}(\pa(v_T))$ hold.
 
 If $e_{H,1}\in\ell_{T}^{(E)}(v_T)$ or $e_{H,2}\in\ell_{T}^{(E)}(v_T)$ holds, there exists exactly one node $v_T^{(\ch)}\in\ch(v_T)$ such that $v_H\in\ell_T^{(V)}(v_T^{(\ch)})$ holds.
 In this case, exactly one non-terminal hyperedge will be connected to $v_H$, and therefore, the degree of $v_H$ equals two in $R$.
 
 If $e_{H,1}\notin\ell_{T}^{(E)}(v_T)$ and $e_{H,2}\notin\ell_{T}^{(E)}(v_T)$ hold, there exist exactly two nodes $v_{T,1}^{(\ch)},v_{T,2}^{(\ch)}\in\ch(v_T)$ such that $v_H\in\ell_T^{(V)}(v_{T,1}^{(\ch)})$ and $v_H\in\ell_T^{(V)}(v_{T,2}^{(\ch)})$ hold.
 In this case, exactly two non-terminal hyperedges will be connected to $v_H$, and therefore, the degree of $v_H$ equals two in $R$.
 
 \paragraph{Case 3.} $|V_T(v_H)|\geq 2$ and $v_H$ is an external node in $R$\\
 Such a production rule is made from a triplet $(v_T, \pa(v_T), \ch(v_T))$ such that $v_H\in \ell_T^{(V)}(v_T)$ and $v_H\in \ell_T^{(V)}(\pa(v_T))$ hold.
 
 If $e_{H,1}\in\ell_{T}^{(E)}(v_T)$ or $e_{H,2}\in\ell_{T}^{(E)}(v_T)$ holds, for each node $v_T^{(\ch)}\in\ch(v_T)$, $v_H\notin\ell_T^{(V)}(v_T^{(\ch)})$ holds.
 In this case, no non-terminal hyperedge will be connected to $v_H$, and therefore, the degree of $v_H$ equals one in $R$.
 
 If $e_{H,1}\notin\ell_{T}^{(E)}(v_T)$ and $e_{H,2}\notin\ell_{T}^{(E)}(v_T)$ hold, there exists exactly one node $v_T^{(\ch)}\in\ch(v_T)$ such that $v_H\in\ell_T^{(V)}(v_T^{(\ch)})$ holds.
 In this case, exactly one non-terminal hyperedge will be connected to $v_H$, and therefore, the degree of $v_H$ equals one in $R$.

Therefore, for any case, Condition~\ref{cond:rhs} holds, and therefore, Lemma~\ref{lemma:2-reg-sat-cond} has been proven.
\end{proof}

\begin{proof}[Proof of Lemma~\ref{lemma:2-reg}]
 Let $H$ be an arbitrary hypergraph that can be derived from the starting symbol $S$. Note that $H$ may contain non-terminal hyperedges.
 It is sufficient to prove that $H$ is $2$-regular, if the production rules satisfy Condition~\ref{cond:rhs}.
 
 If $H$ is directly derivable from the starting symbol $S$, then it is $2$-regular, because Condition~\ref{cond:rhs} guarantees that for any production rule $p=(S,R)$, $R$ is $2$-regular.

 If $H$ is directly derivable from $H^\prime$ by applying $p=(A,R)$, and if $H^\prime$ is $2$-regular and derivable from $S$, then there exists a non-terminal hyperedge in $e_{H^\prime}\in E_{H^\prime}$ that is labeled as $A$ and is replaced with $R$ to yield $H$.
For each node $v_{H^\prime}\in V_{H\prime}\backslash e_{H^\prime}$, the replacement does not change the degree, and the degree of $v_{H^\prime}$ equals two in $H$.
For each node $v_{R}\in V_{R}\backslash \ext(R)$\footnote{$\ext(R)$ denotes the set of external nodes in $R$}, the replacement does not change the degree, and the degree of $v_R$ equals two in $H$.
For each node $v_{H\prime}\in e_{H^\prime}$, the replacement first deletes $e_{H^\prime}$, and then, connects the hyperedges adjacent to the external nodes $\ext(R)$.
Since the degrees of the external nodes are one, the replacement does not change its degree. The same applies to each external node.
 Since the degree of each node of $H$ is two, $H$ is proven to be $2$-regular.
\end{proof}

\begin{proof}[Proof of Lemma~\ref{lemma:card-cons-pres}]
 Let $T$ be a tree decomposition of a molecular hypergraph~$H$.
 For each $v_T\in V_T$, all of the hyperedges in $\ell_T^{(E)}(v_T)$ are cardinally-consistent, which is guaranteed by the second condition in Def.~\ref{def:tree-decomp}.
 Therefore, letting the production rule extracted from $v_T$ be $p=(A,R)$, all of the hyperedges in $R$ are cardinally-consistent.
 Since applying such a production rule preserves the cardinality consistency, this lemma has been proven.
\end{proof}

\section{Model Configuration}\label{sec:model-config}
We tune the model configuration using the reconstruction rate on the validation set.
Both $\enc_N$ and $\dec_N$ use three-layer GRU~\cite{kyunghyun2014} with $384$ hidden units~($\enc_N$ is bidirectional), handling
a sequence of production rule embeddings in $128$-dimensional space.
In $\enc_N$, the output of GRU is fed into a linear layer to compute the mean and log variance of a $72$-dimensional Gaussian distribution, 
and the latent vector $z\in\bbR^{72}$ is sampled from it as the output of $\enc_N$.
The encoder and decoder are trained by optimizing the objective function of $\beta$-VAE~\cite{higgins2017} with $\beta=0.01$ using ADAM~\cite{kingma2015} with initial learning rate $5\times 10^{-4}$.
As a predictive model $\hat{f}\colon\bbR^{72}\rightarrow\bbR$,
we employ a linear regression. Whenever target values are available, we jointly train seq2seq VAE and the predictive model.

\section{Basic Statistics of MHG}\label{sec:mhg-stat}
This section reports basic statistics of MHG inferred using the ZINC dataset.

From the ZINC dataset, our algorithm obtains 2,031 production rules, of which 1,424 are starting rules.
Each molecule is associated with 25 production rules on average.
While the grammar seems to be huge, 2/3 of the starting rules (1,073 rules) are used by less than ten molecules, and in total, 2,355 out of 250k molecules are using these starting rules.
If ignoring these rules, our grammar has moderate size.

We also investigate the coverage rate of the language associated with MHG~(\ie, how many molecules out of all possible molecules can MHG represent?).
While we could not provide any theoretical validation on the coverage rate,
we instead approximately estimate the coverage rate by the number of molecules in the test set that cannot be parsed using the grammar inferred using the training set.
As a result, 16 out of 5,000 molecules cannot be parsed, and thus, the coverage rate will be 99.68\%, and we believe this coverage is sufficiently high.

\section{Local Molecular Optimization}
\begin{table*}[t]
\caption{Results on local molecular optimization.}\label{tab:exp-res-local}
   \begin{tabular}[t]{ccccccccccccc}
\toprule
 &  \multicolumn{4}{c}{Improvement} & \multicolumn{4}{c}{Similarity} & \multicolumn{4}{c}{Success} \\\cmidrule(lr){2-5}\cmidrule(lr){6-9}\cmidrule(lr){10-13}
  $\delta$ & $0.0$ & $0.2$ & $0.4$ & $0.6$ & $0.0$ & $0.2$ & $0.4$ & $0.6$ & $0.0$ & $0.2$ & $0.4$ & $0.6$ \\\midrule
  JT-VAE & 
\shortstack{$1.91$ \\ $\pm 2.04$} & \shortstack{$1.68$ \\  $\pm 1.85$} & \shortstack{$0.84 $\\ $\pm 1.45$} & \shortstack{$0.21 $\\ $\pm 0.71$} & 
\shortstack{$0.28 $ \\ $\pm 0.15$} & \shortstack{$0.33 $ \\ $\pm 0.13$} & \shortstack{$0.51 $\\ $\pm 0.10$} & \shortstack{$0.69 $\\ $\pm 0.06$} & 
$97.5 \%$ & $97.1\%$ & $83.6 \%$ & $46.4 \%$\\\midrule
  GCPN   & 
\shortstack{$4.20$ \\ $\pm 1.28$} & \shortstack{$4.12$ \\ $\pm 1.19$}  & \shortstack{$2.49$ \\ $\pm 1.30$}  & \shortstack{$0.79$ \\ $\pm 0.63$}  & 
\shortstack{$0.32$ \\ $\pm 0.12$}  & \shortstack{$0.34$ \\ $\pm 0.11$}  & \shortstack{$0.47$ \\ $\pm 0.08$}  & \shortstack{$0.68$ \\ $\pm 0.08$} & 
$100\%$ & $100 \%$ & $100 \%$ & $100\%$\\\bottomrule

  Ours   & 
\shortstack{$3.28$ \\ $\pm 2.19$} & \shortstack{$2.40$ \\ $\pm 2.16$}  & \shortstack{$1.00$ \\ $\pm 1.87$}  & \shortstack{$0.61$ \\ $\pm 1.20$}  & 
\shortstack{$0.09$ \\ $\pm 0.06$}  & \shortstack{$0.26$ \\ $\pm 0.10$}  & \shortstack{$0.52$ \\ $\pm 0.11$}  & \shortstack{$0.70$ \\ $\pm 0.06$} & 
$100\%$ & $86.3 \%$ & $43.5 \%$ & $17.0\%$\\\bottomrule
 \end{tabular}
\end{table*}

Local molecular optimization~(Algorithm~\ref{alg:local-opt-alg}) aims to improve the property of a given molecule without modifying it too much.
This problem setting is originally proposed by \citet{jin2018} and is formalized as,
\begin{align}
\label{eq:5}
\begin{split}
m^\star &= \dec\left(\argmax_{z\colon\mathrm{sim}(m, \dec(z))\geq \tau} f(\dec(z))\right).
\end{split}
\end{align}
where $\mathrm{sim}(m, m^\prime)$ computes a similarity between molecules $m$ and $m^\prime$, and $\tau$ is a similarity threshold.
We use Tanimoto similarity with Morgan fingerprint~(radius=2).
Problem~\ref{eq:5} is approximately solved by Algorithm~\ref{alg:local-opt-alg},
where we substitute our predictive model $\hat{f}$ for the unknown target function $f$.

Note that the predictive model~$\hat{f}$ requires a large data set for training and it is difficult to apply the limited oracle scenario to local molecular optimization.
Since the unlimited oracle scenario is out of our scope, we leave this topic in the appendix.
We would like to leave a local optimization algorithm tailored for the limited oracle scenario as future work.

\begin{algorithm}[t]
\caption{Local Molecular Optimization}\label{alg:local-opt-alg}
\textbf{In:} Mol. graph $g_0$ and its latent vector $z_0$, $\dec$, $\hat{f}$, step size $\eta$, similarity measure $\mathrm{sim}(\cdot,\cdot)$, threshold $\tau$, \# iterations $K$.

\begin{algorithmic}[1]
 \STATE $g^\star\leftarrow\mathrm{null}$, $y^\star\leftarrow -\infty$ 
 \FOR{$k=1,\dots,K$}
 \STATE $z_k\leftarrow z_{k-1} + \eta\frac{\partial \hat{f}(z_{k-1})}{\partial z}$
 \STATE $g_k\leftarrow \dec(z_k)$, $y_k \leftarrow \hat{f}(z_k)$
 \IF{$\mathrm{sim}(g_0, g_k)\geq\tau$, $g_0\neq g_k$, and $y_k>y^\star$}
 \STATE $g^\star\leftarrow g_k$, $y^\star \leftarrow y_k$ 
 \ENDIF
 \ENDFOR
\end{algorithmic}
 \textbf{return} $(g^\star, y^\star)$
\end{algorithm}

\noindent
\textbf{Protocol.}
As a target property, we use a penalized logP:
\begin{align}
\label{eq:6}
 f(m) = \mathrm{logP}(m) - \mathrm{SA}(m).
\end{align}
We choose $800$ molecules with the lowest penalized logP from the test set.
For each initial molecule~$m$, we run Algorithm~\ref{alg:local-opt-alg} with $\tau\in\{0, 0.2, 0.4, 0.6\}$, $K=80$, and $\eta=0.01$.
For each threshold $\tau$, we report (i) the mean and standard deviation of the target value improvements, (ii) those of the similarity, and (iii) the success rate, 
where Algorithm~\ref{alg:local-opt-alg} succeeds if the output is not null, \ie, if there exists a modified molecule that satisfies the similarity constraint.

\noindent
\textbf{Result.}
Table~\ref{tab:exp-res-local} reports the scores.
First of all, we observe that GCPN outperforms VAE-based methods.
This result is reasonable considering that this task assumes the unlimited oracle scenario.
Among the VAE-based methods, for any similarity threshold, our method improves the target property better than JT-VAE, which demonstrates the effectiveness of our method over JT-VAE.

\section{Molecules Discovered by Global Molecular Optimization}\label{sec:molec-disc-glob}
In the following, we illustrate the top 50 molecules found by global molecular optimization in Section~\ref{sec:glob-molec-optim}.
\begin{figure*}
\begin{minipage}{.24\hsize}
\centering
\includegraphics[width=\hsize]{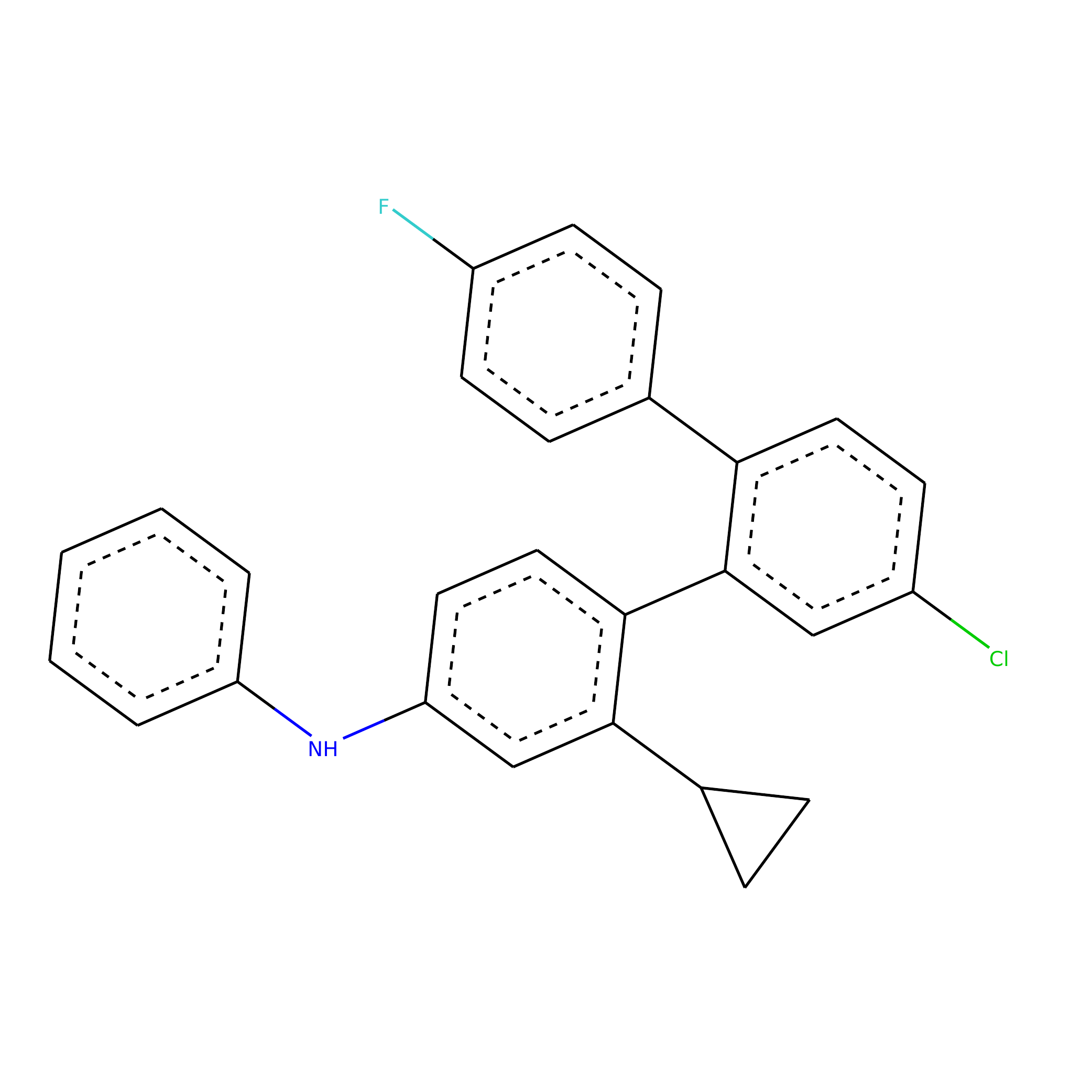}
5.558
\end{minipage}
\begin{minipage}{.24\hsize}
\centering
\includegraphics[width=\hsize]{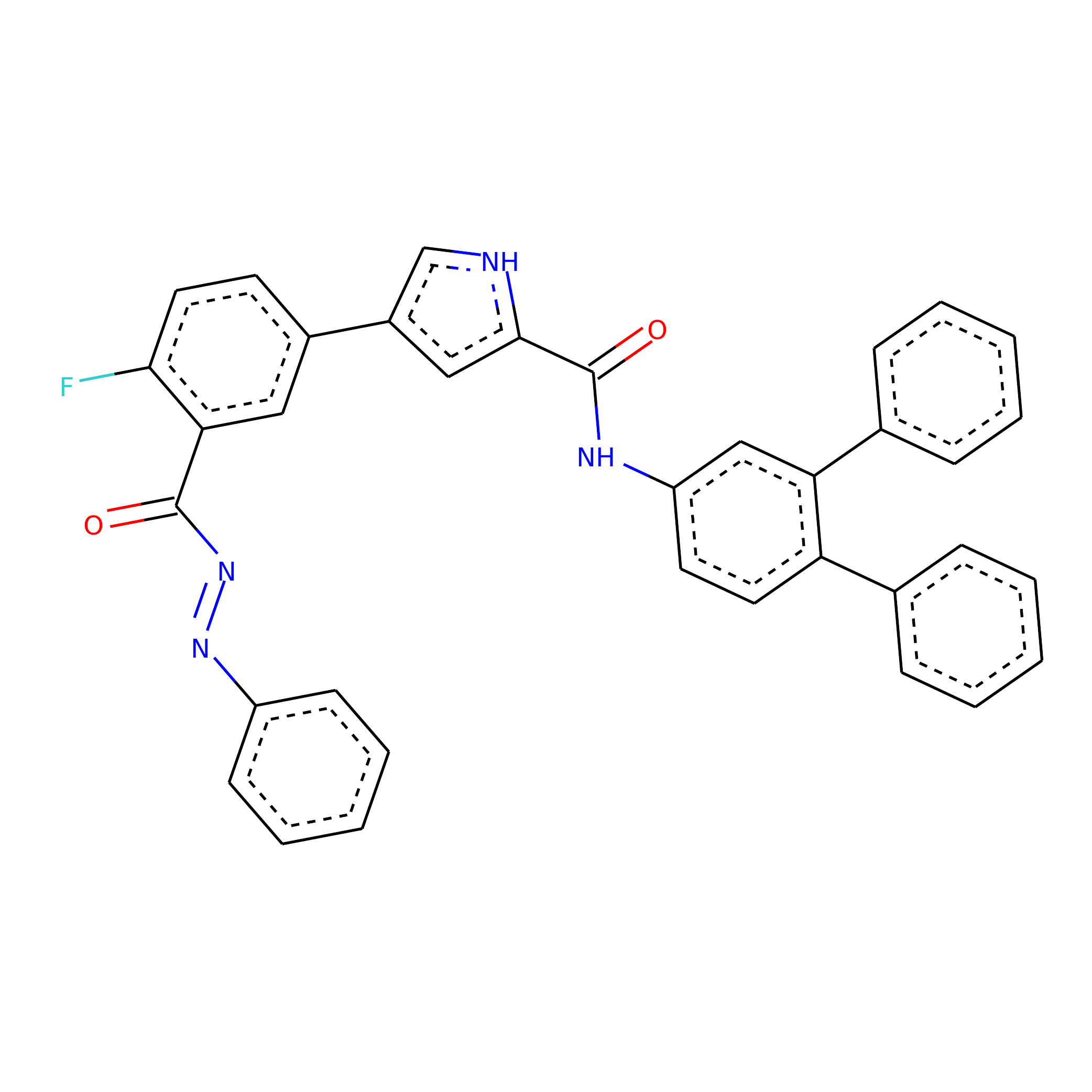}
5.401
\end{minipage}
\begin{minipage}{.24\hsize}
\centering
\includegraphics[width=\hsize]{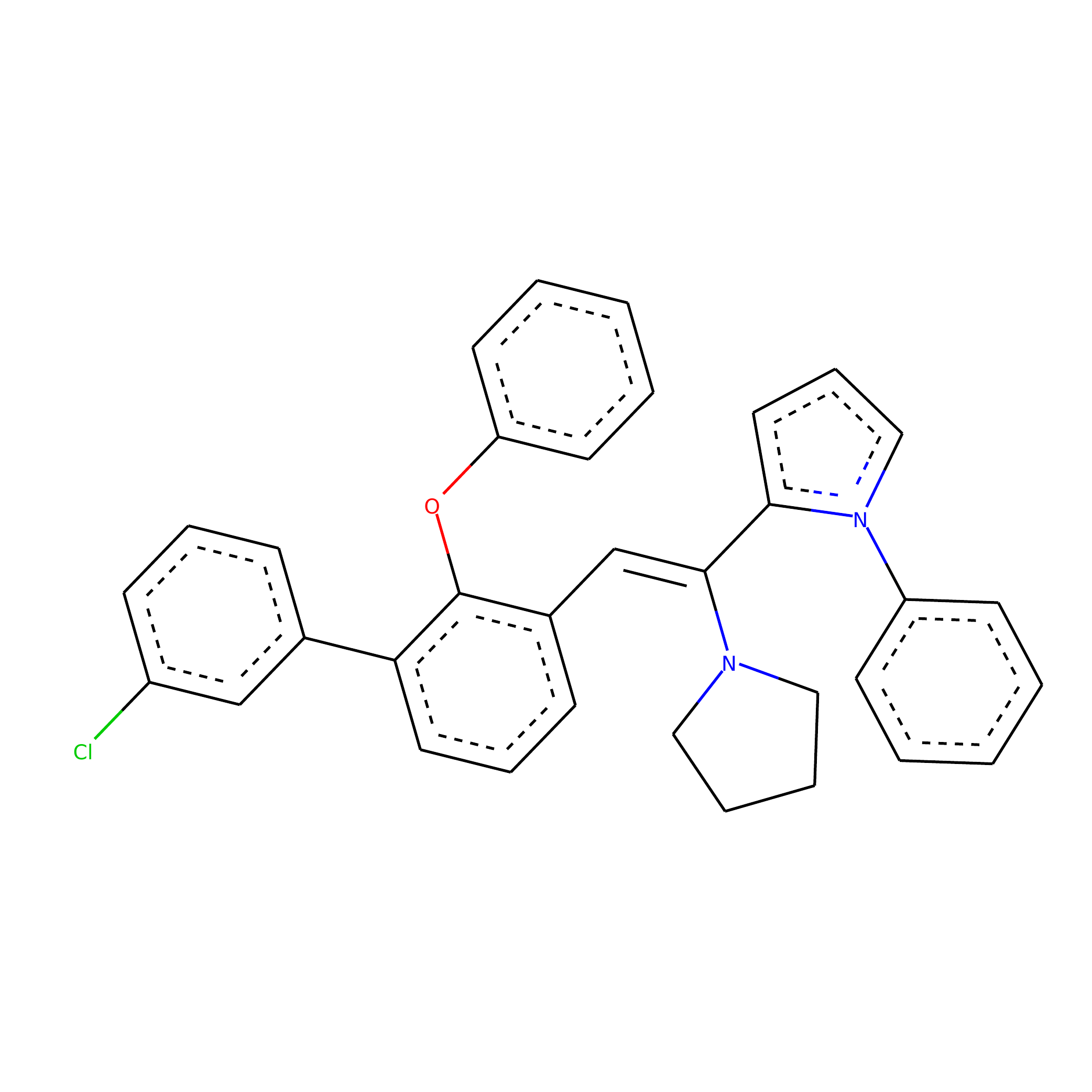}
5.344
\end{minipage}
\begin{minipage}{.24\hsize}
\centering
\includegraphics[width=\hsize]{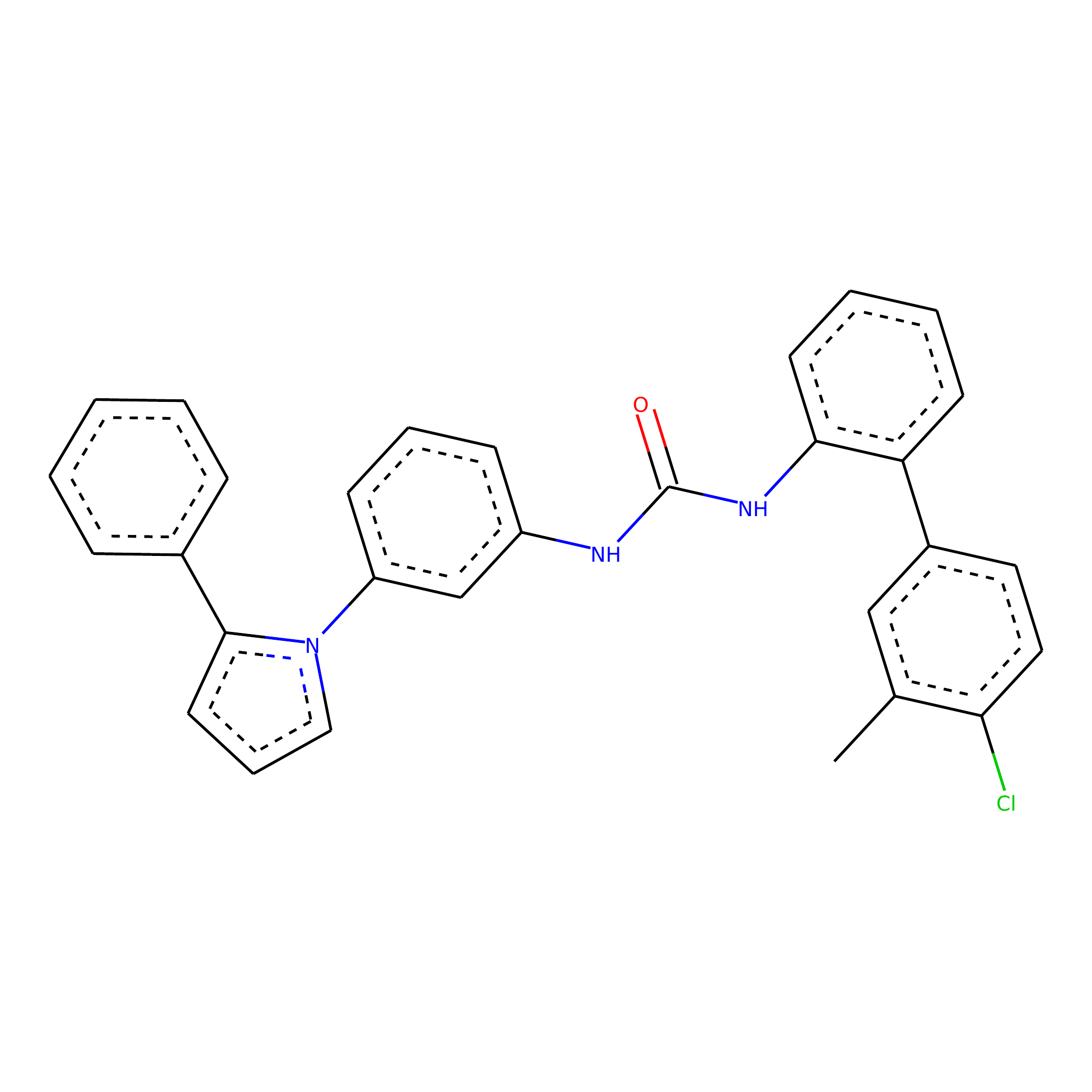}
5.289
\end{minipage}
\begin{minipage}{.24\hsize}
\centering
\includegraphics[width=\hsize]{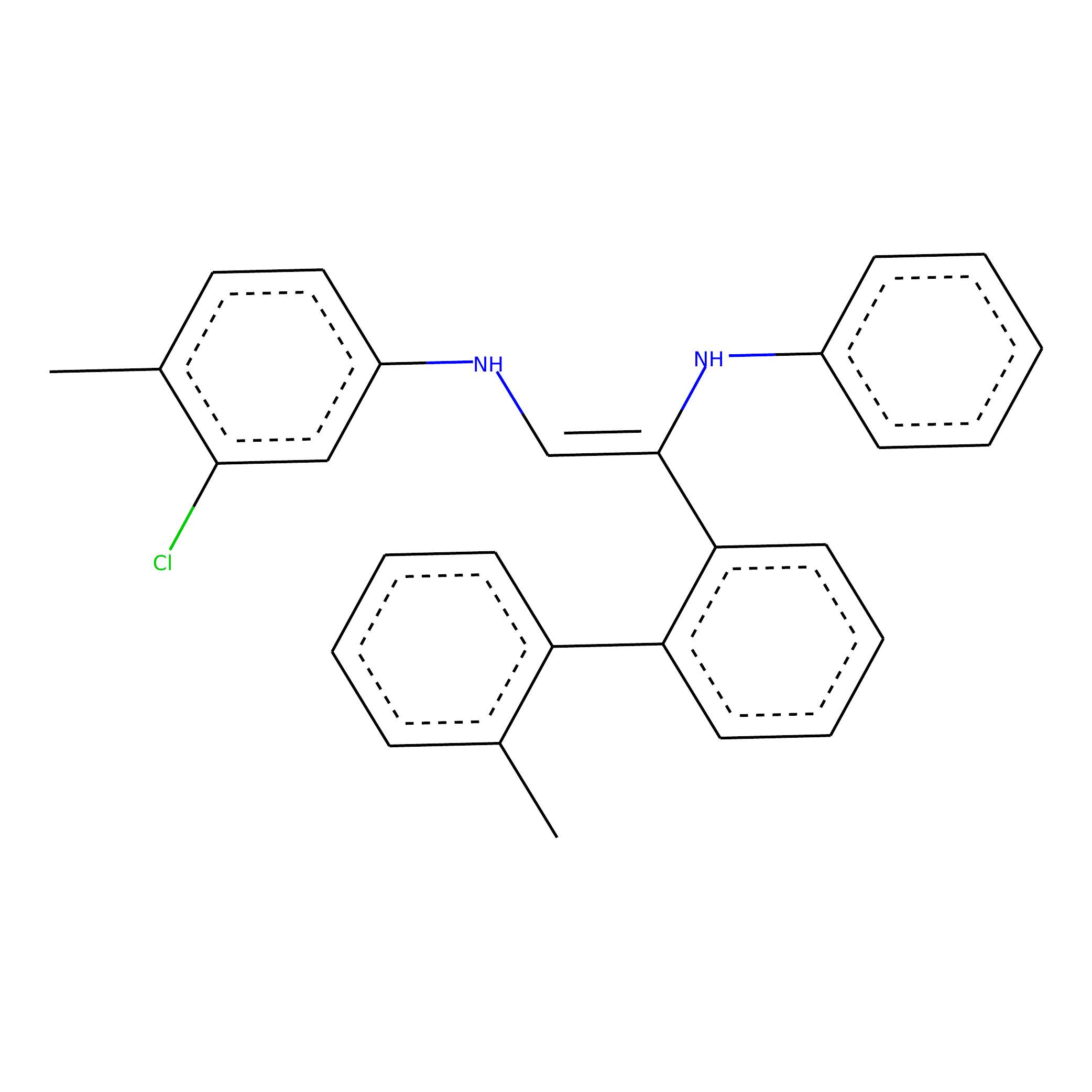}
5.063
\end{minipage}
\begin{minipage}{.24\hsize}
\centering
\includegraphics[width=\hsize]{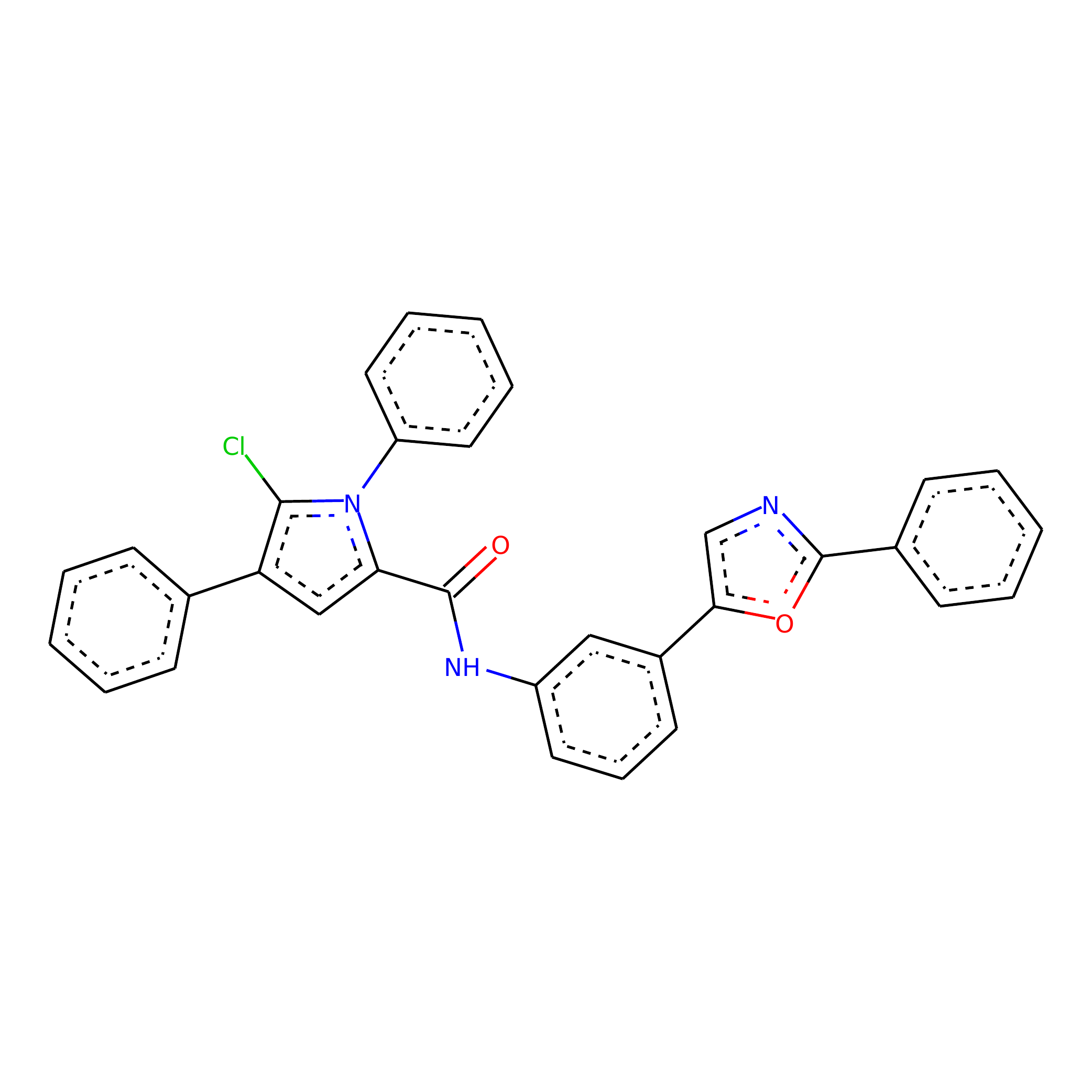}
4.979
\end{minipage}
\begin{minipage}{.24\hsize}
\centering
\includegraphics[width=\hsize]{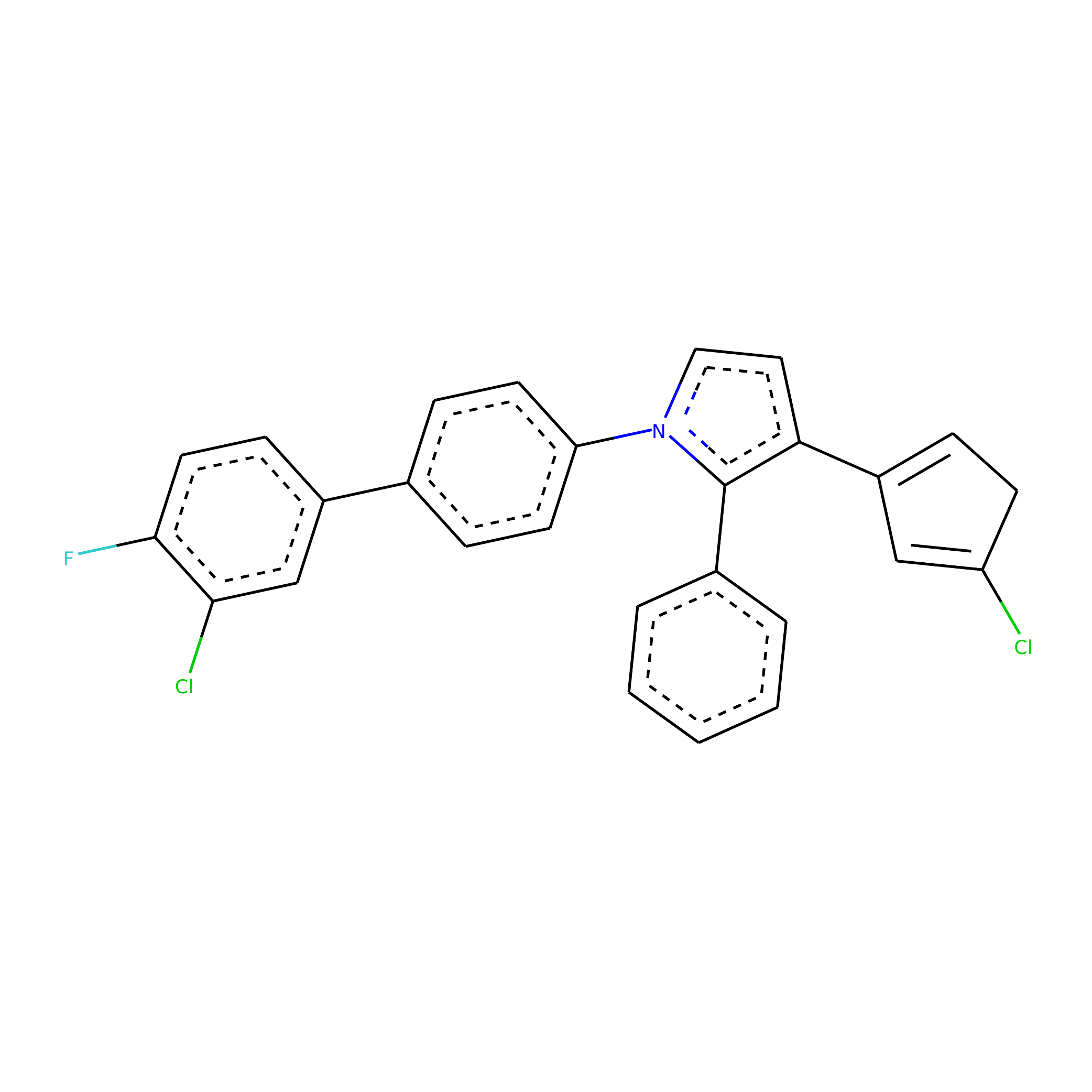}
4.811
\end{minipage}
\begin{minipage}{.24\hsize}
\centering
\includegraphics[width=\hsize]{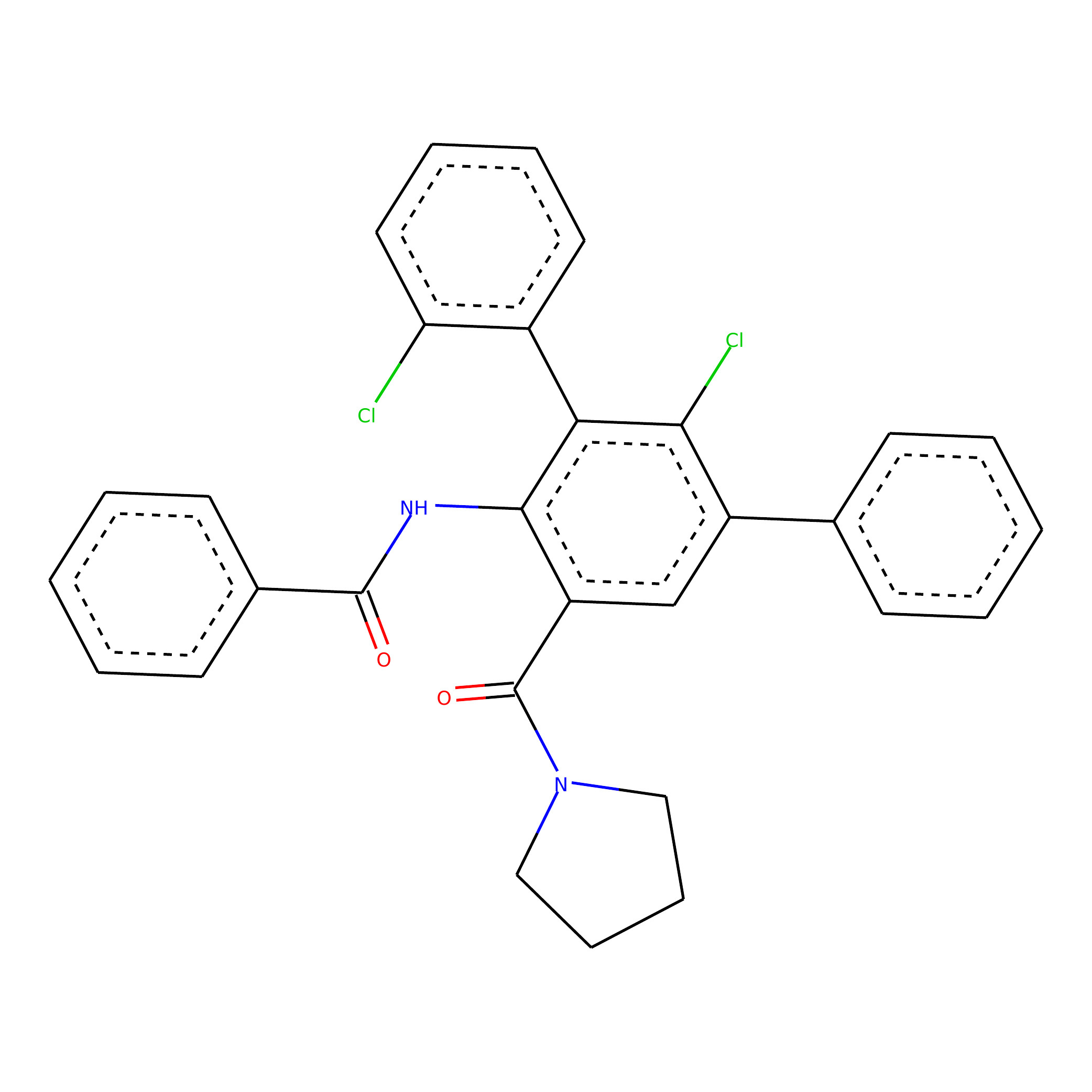}
4.779
\end{minipage}
\begin{minipage}{.24\hsize}
\centering
\includegraphics[width=\hsize]{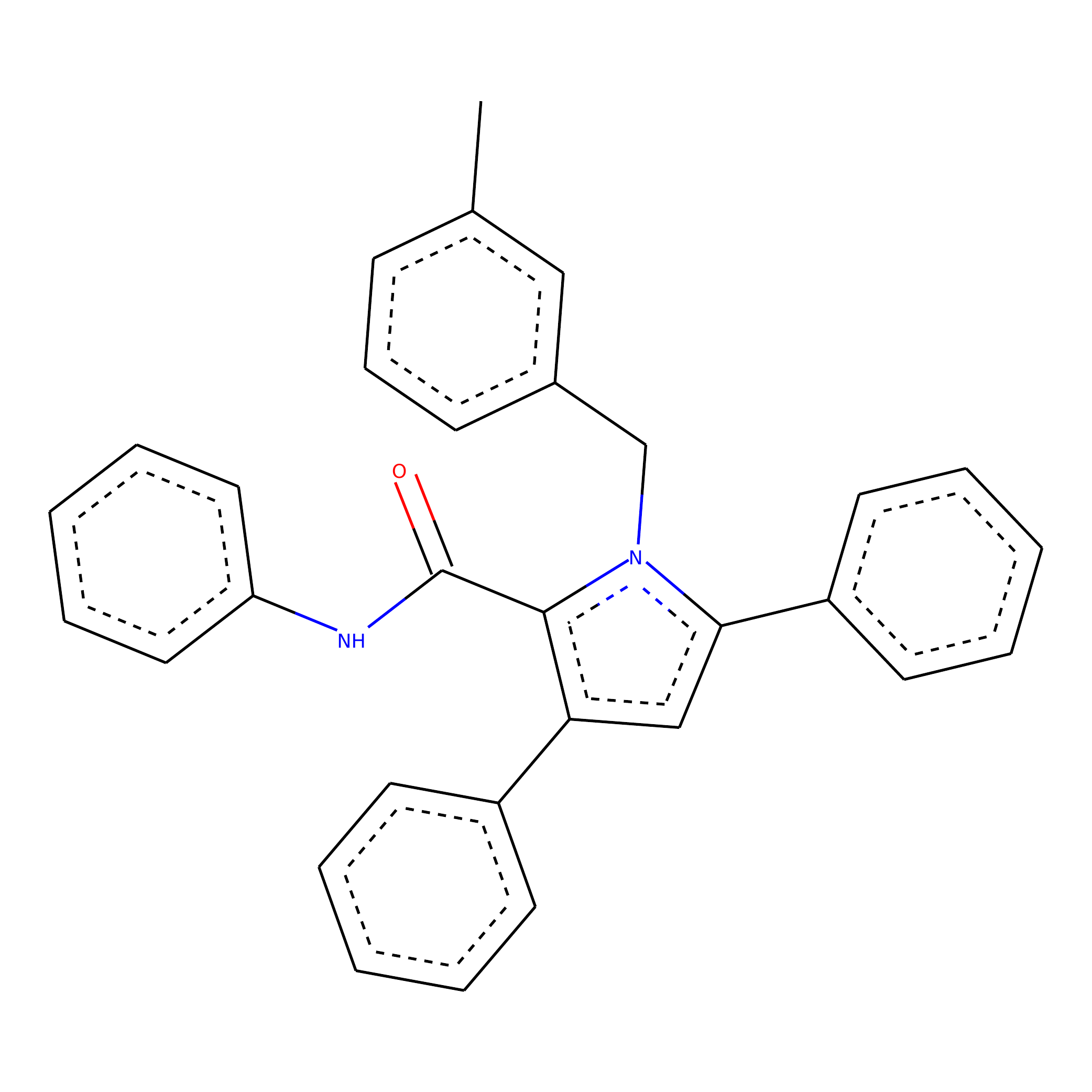}
4.775
\end{minipage}
\begin{minipage}{.24\hsize}
\centering
\includegraphics[width=\hsize]{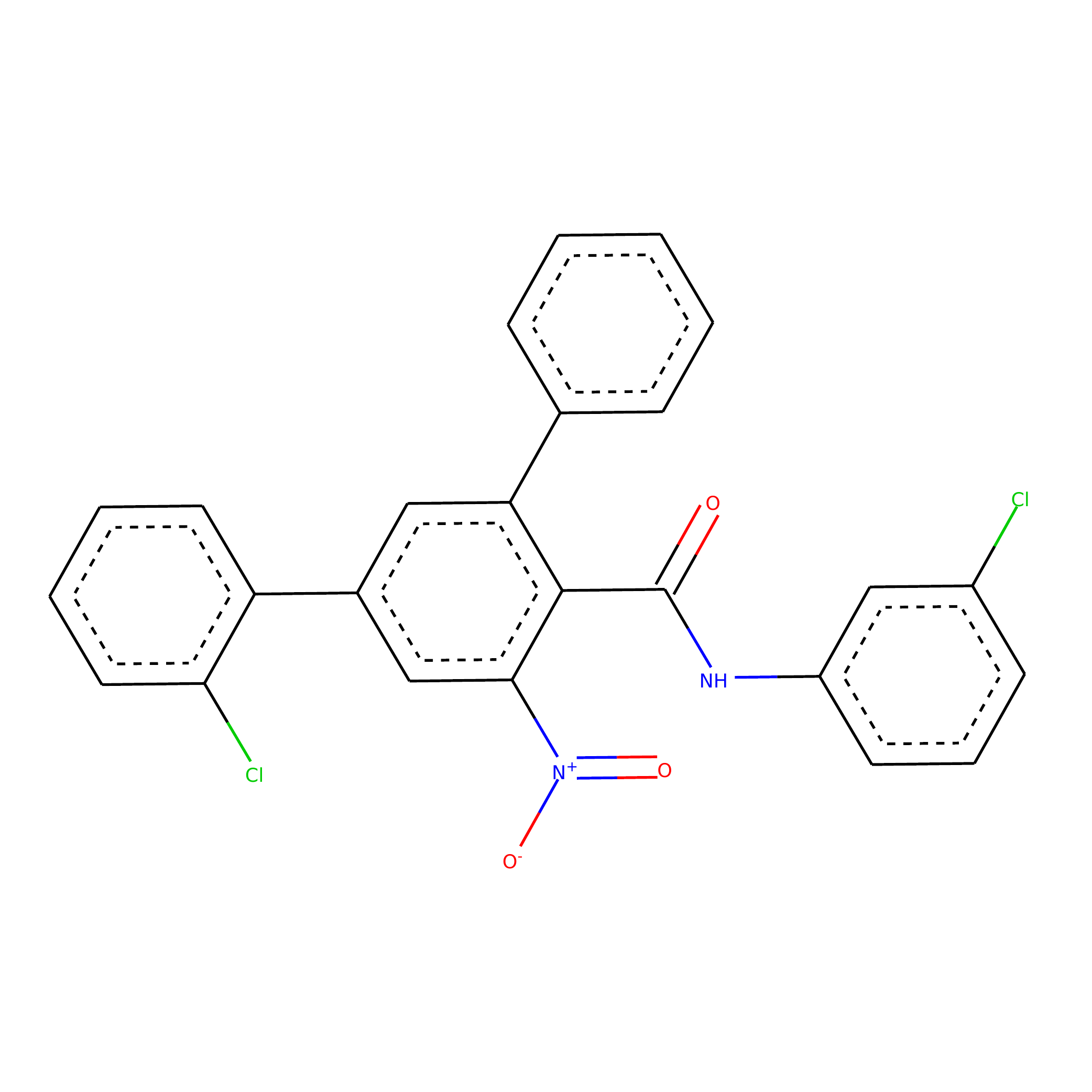}
4.730
\end{minipage}
\begin{minipage}{.24\hsize}
\centering
\includegraphics[width=\hsize]{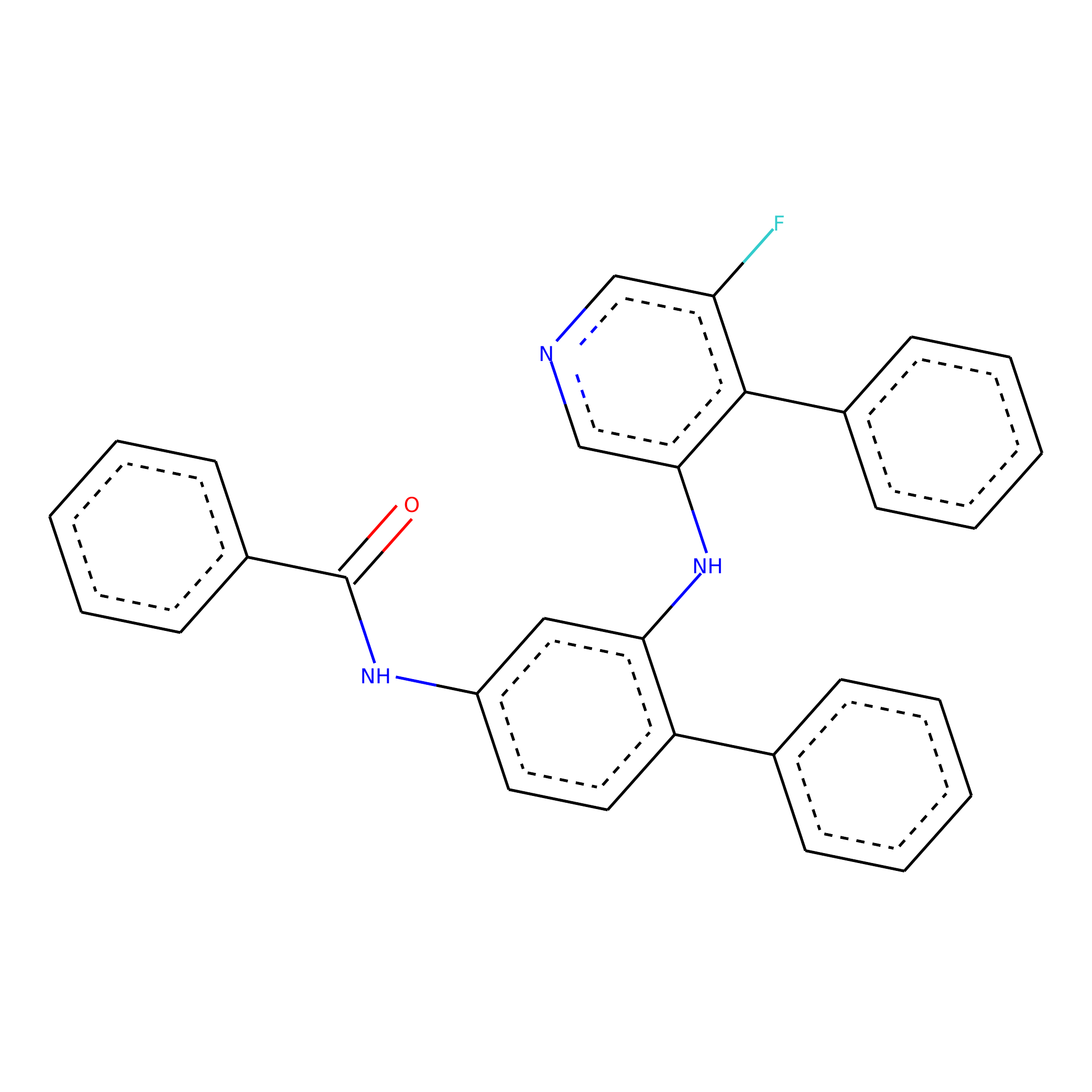}
4.712
\end{minipage}
\begin{minipage}{.24\hsize}
\centering
\includegraphics[width=\hsize]{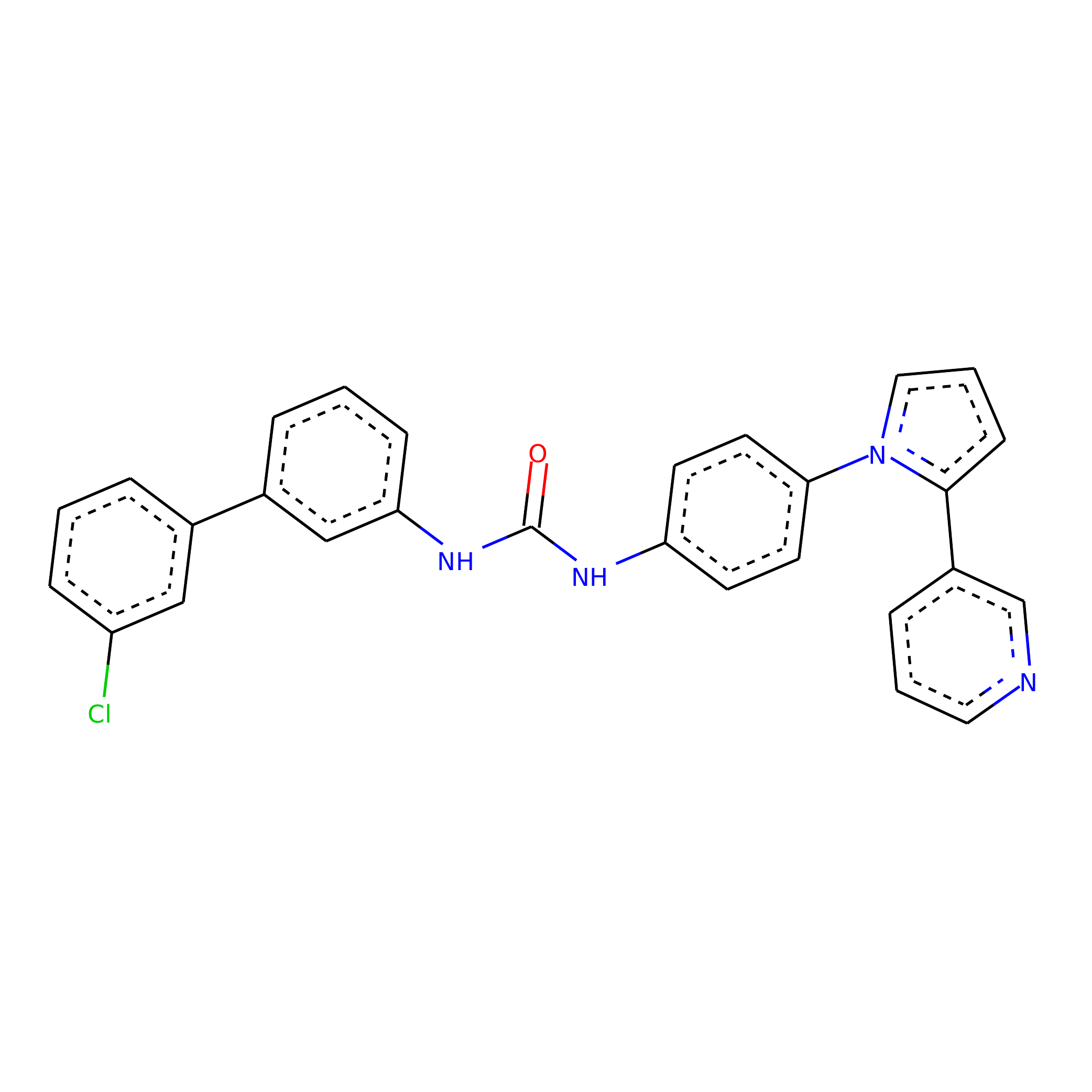}
4.641
\end{minipage}
\begin{minipage}{.24\hsize}
\centering
\includegraphics[width=\hsize]{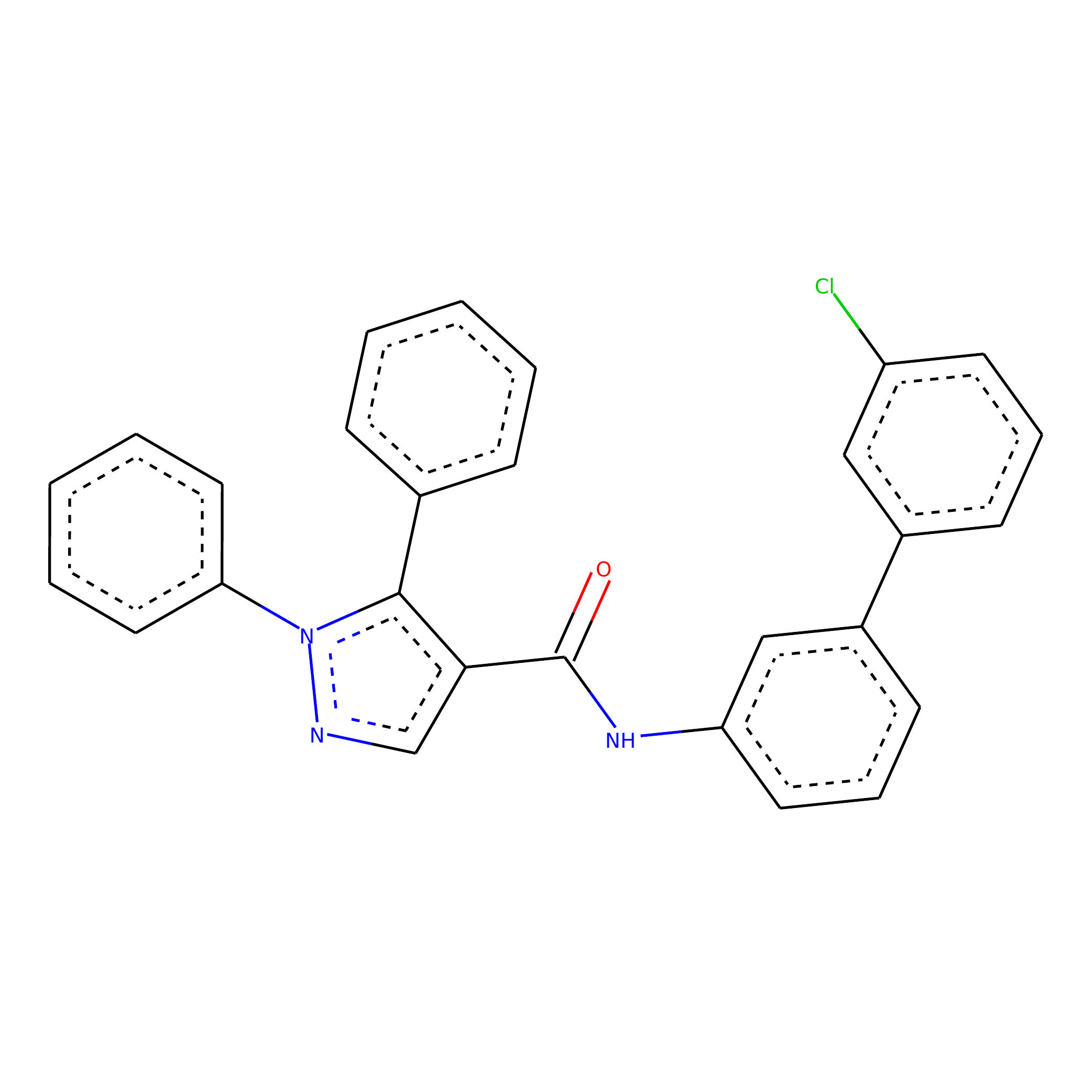}
4.617
\end{minipage}
\begin{minipage}{.24\hsize}
\centering
\includegraphics[width=\hsize]{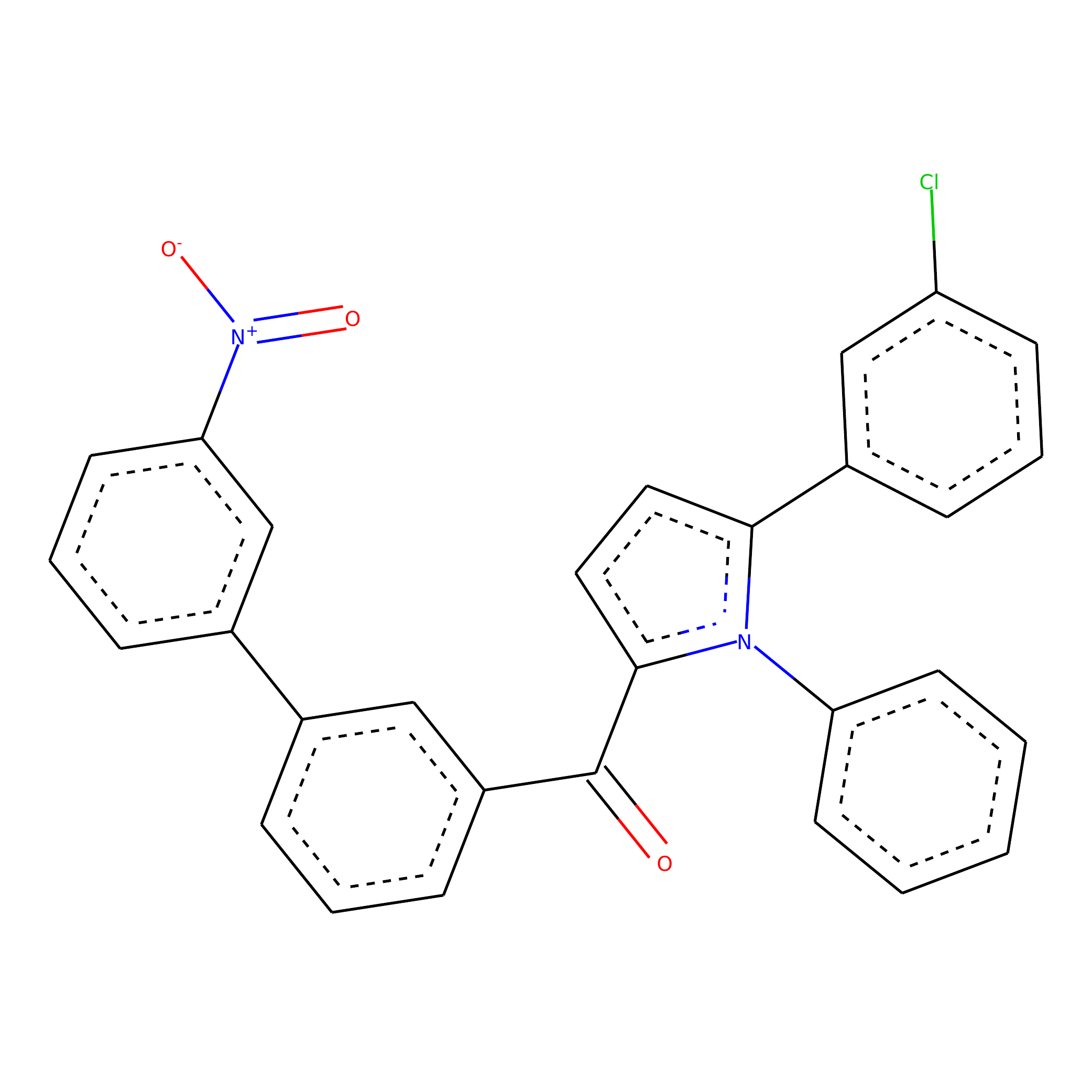}
4.598
\end{minipage}
\begin{minipage}{.24\hsize}
\centering
\includegraphics[width=\hsize]{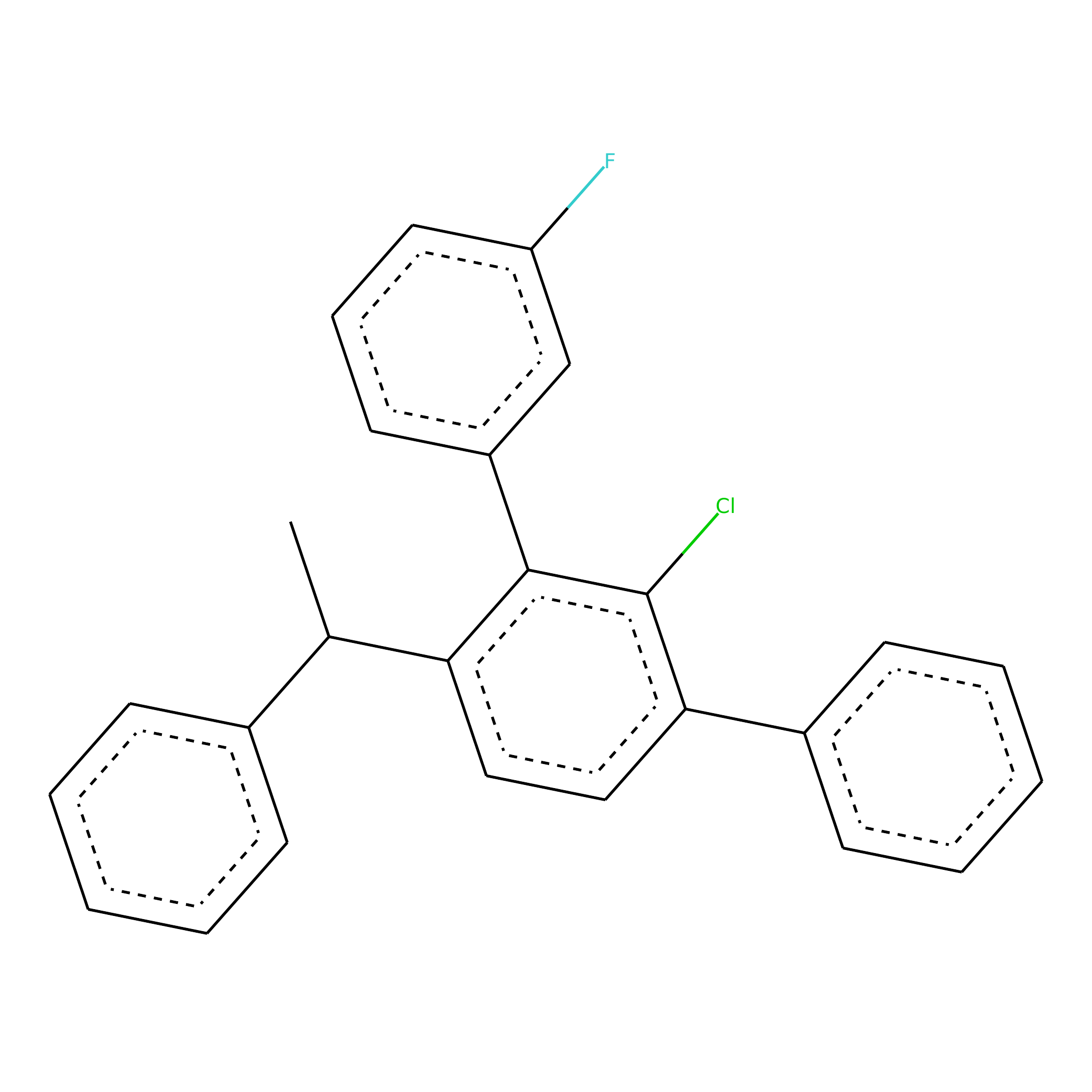}
4.595
\end{minipage}
\begin{minipage}{.24\hsize}
\centering
\includegraphics[width=\hsize]{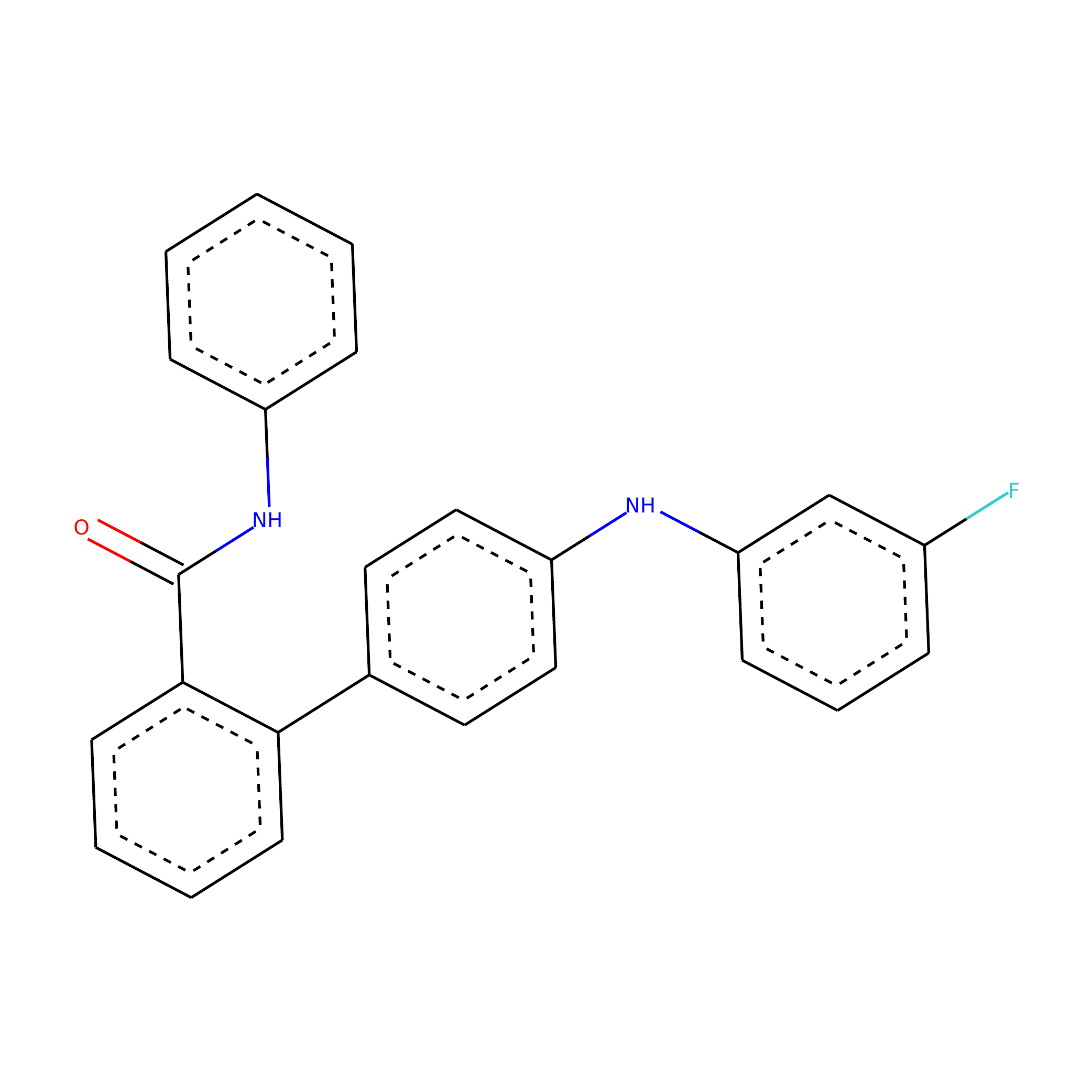}
4.555
\end{minipage}
\begin{minipage}{.24\hsize}
\centering
\includegraphics[width=\hsize]{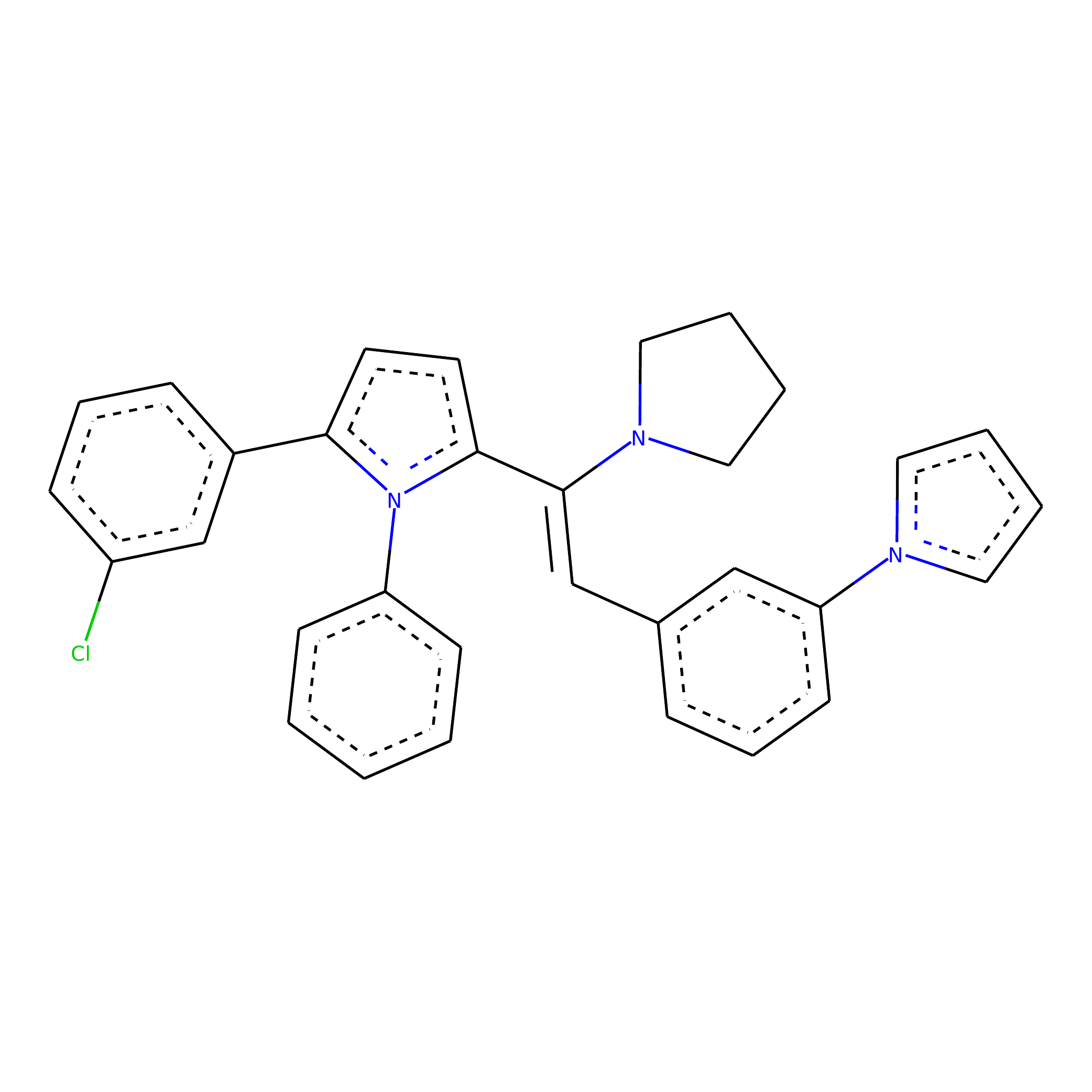}
4.546
\end{minipage}
\begin{minipage}{.24\hsize}
\centering
\includegraphics[width=\hsize]{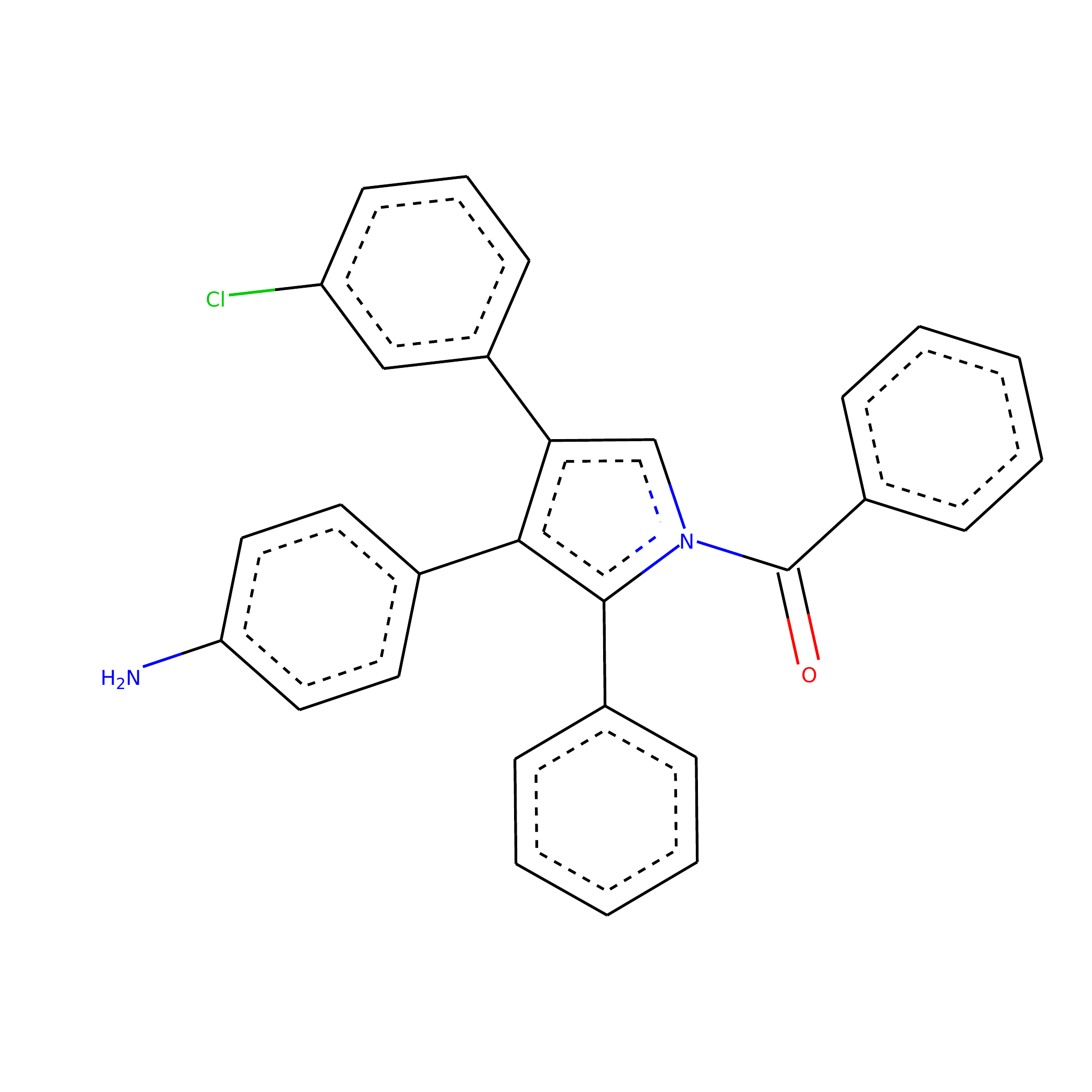}
4.538
\end{minipage}
\begin{minipage}{.24\hsize}
\centering
\includegraphics[width=\hsize]{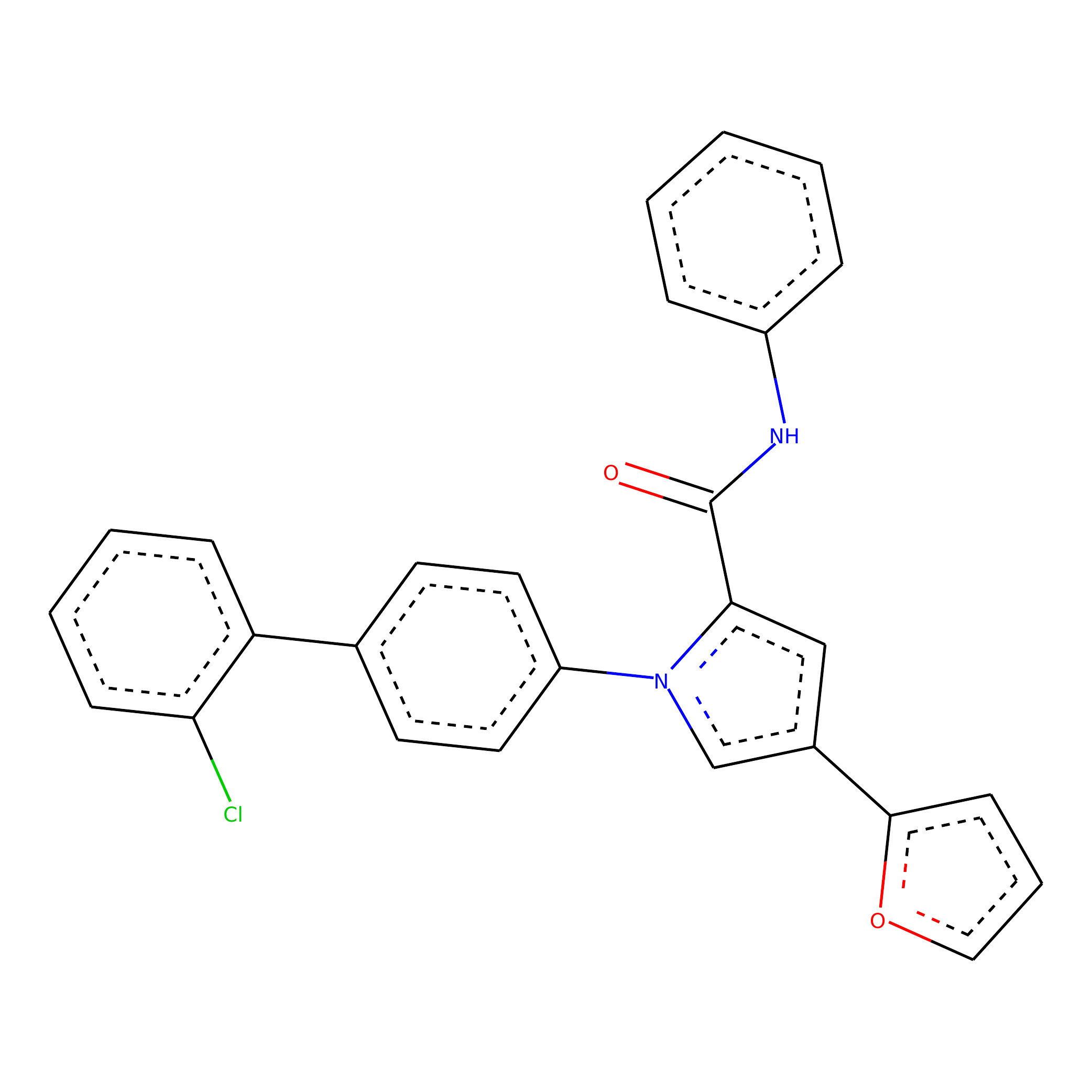}
4.484
\end{minipage}
\begin{minipage}{.24\hsize}
\centering
\includegraphics[width=\hsize]{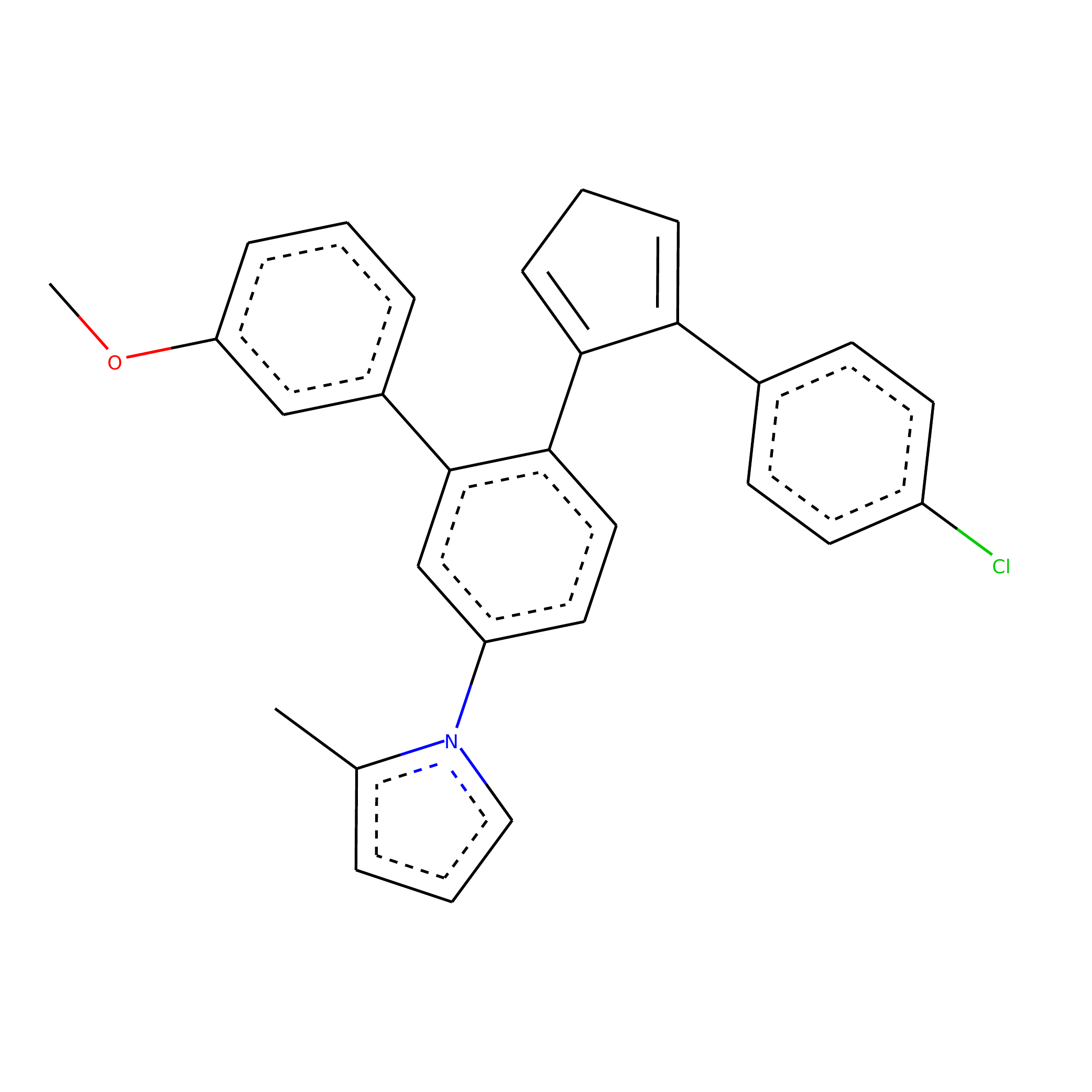}
4.464
\end{minipage} 
\end{figure*}
\begin{figure*}
\begin{minipage}{.24\hsize}
\centering
\includegraphics[width=\hsize]{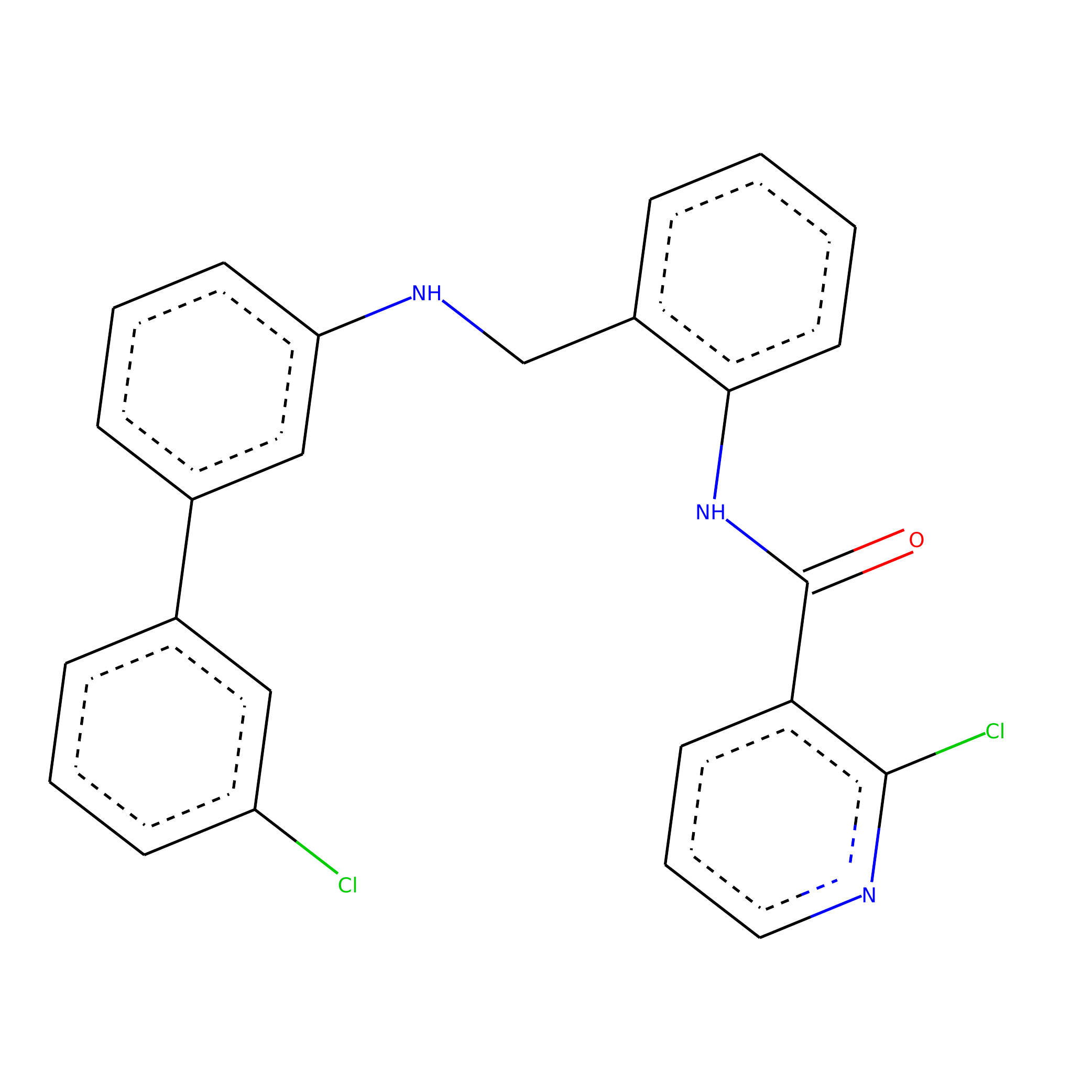}
4.450
\end{minipage}
\begin{minipage}{.24\hsize}
\centering
\includegraphics[width=\hsize]{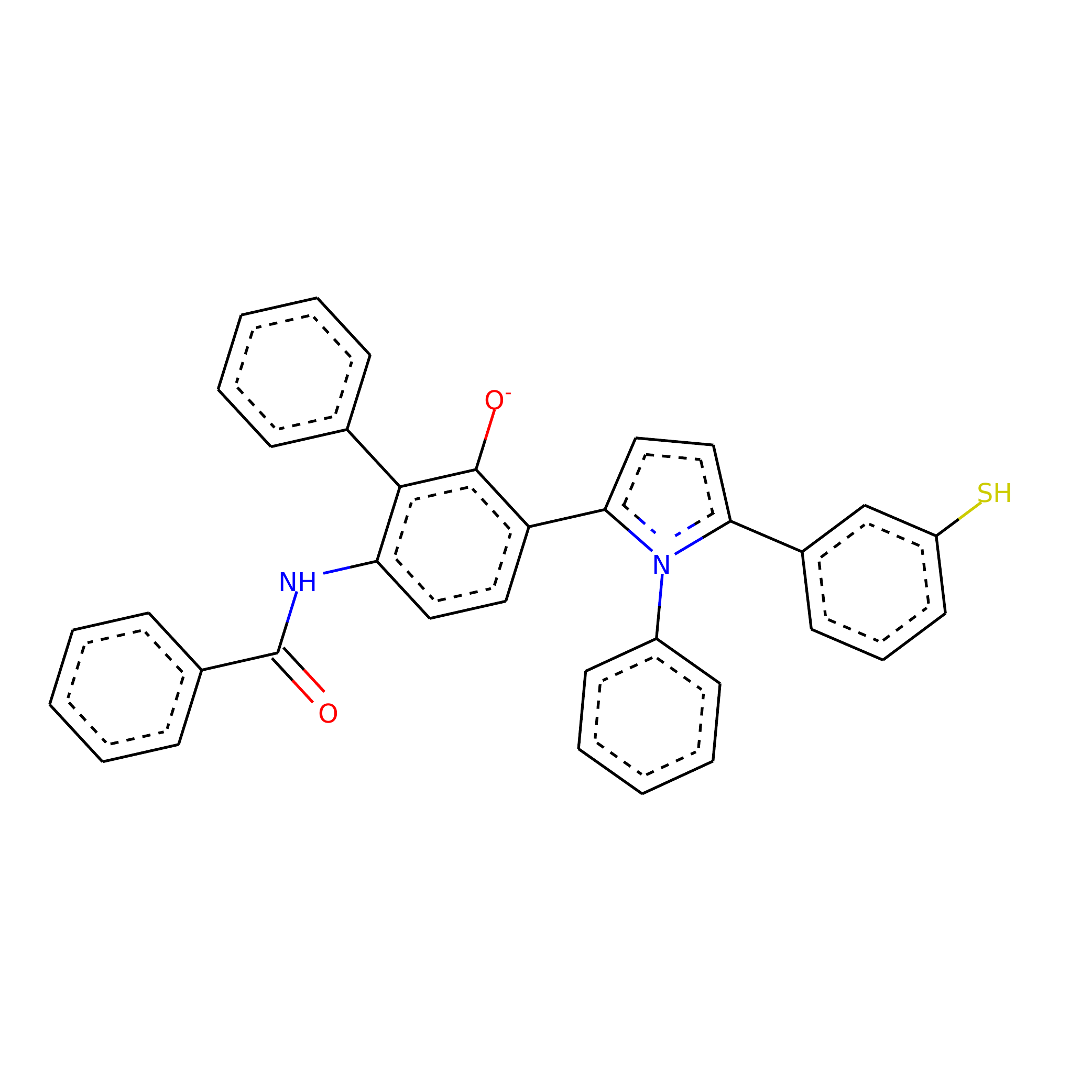}
4.443
\end{minipage}
\begin{minipage}{.24\hsize}
\centering
\includegraphics[width=\hsize]{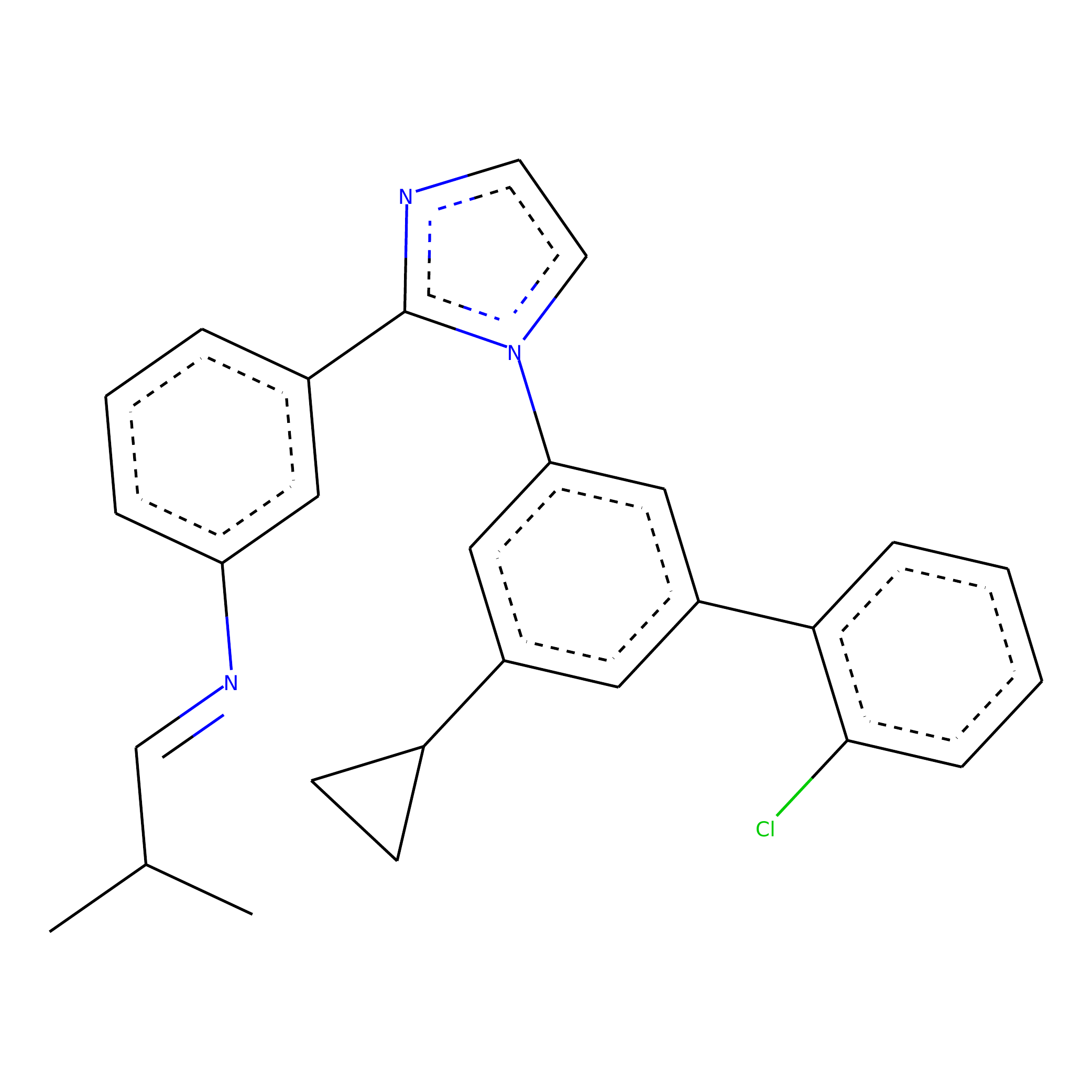}
4.408
\end{minipage}
\begin{minipage}{.24\hsize}
\centering
\includegraphics[width=\hsize]{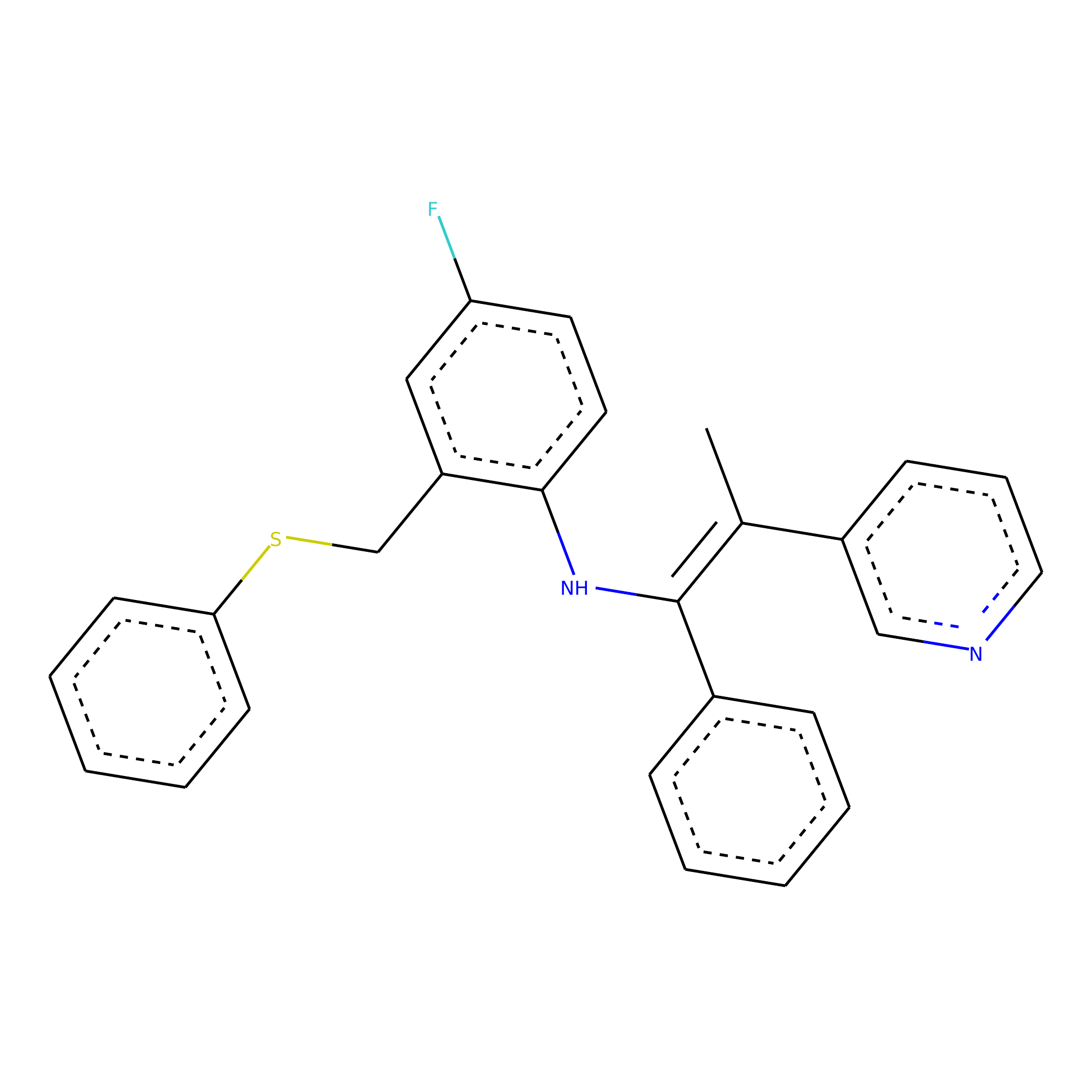}
4.404
\end{minipage}
\begin{minipage}{.24\hsize}
\centering
\includegraphics[width=\hsize]{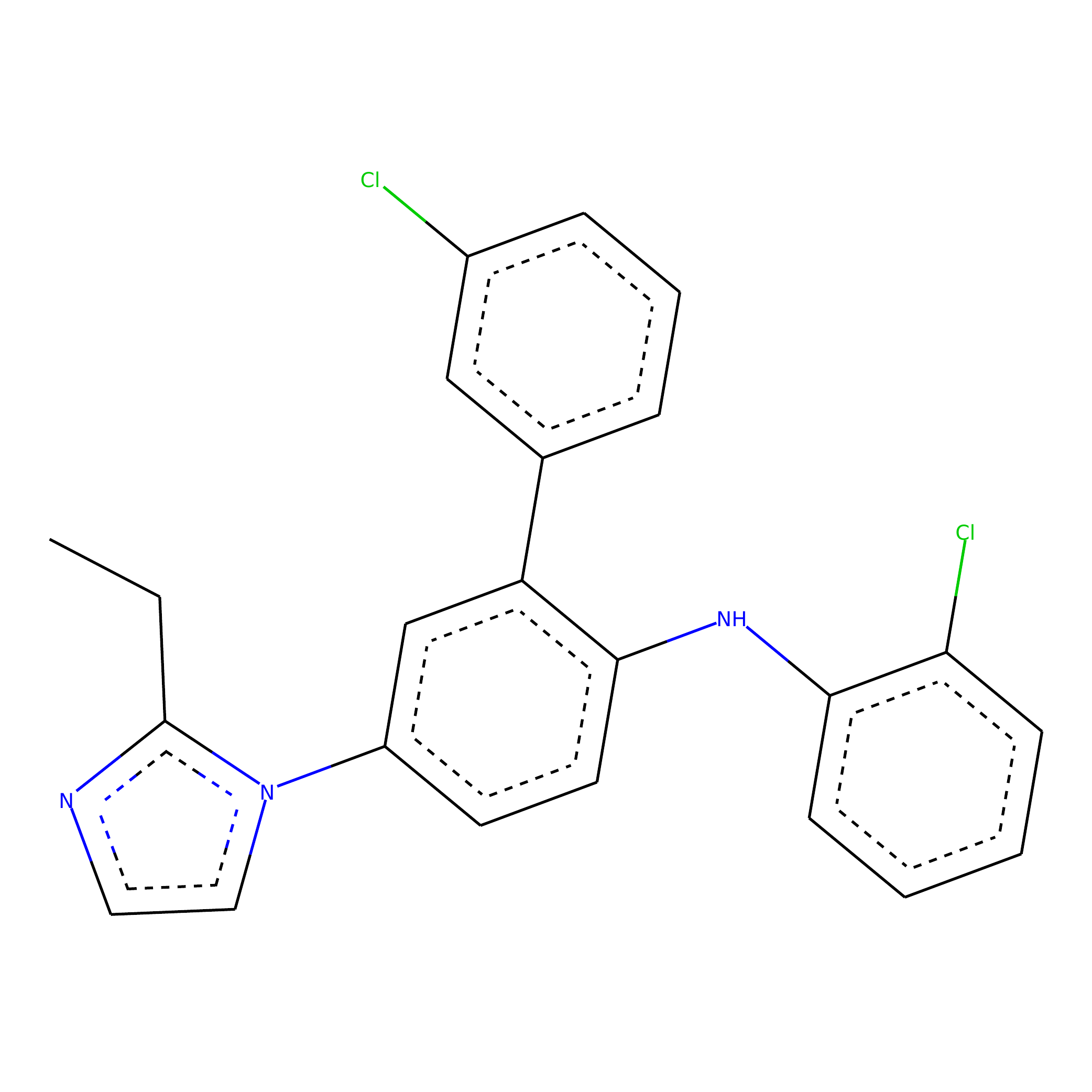}
4.374
\end{minipage}
\begin{minipage}{.24\hsize}
\centering
\includegraphics[width=\hsize]{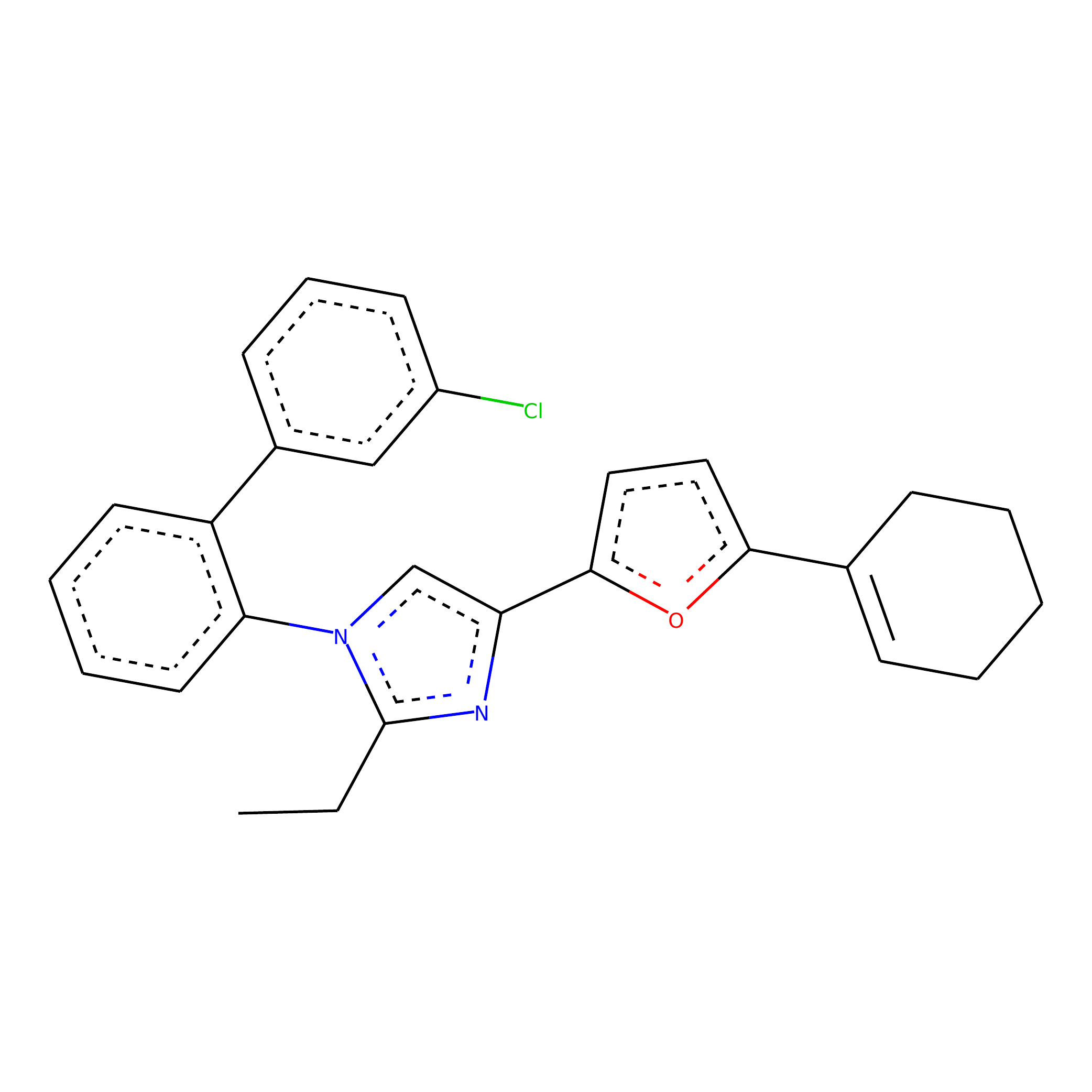}
4.362
\end{minipage}
\begin{minipage}{.24\hsize}
\centering
\includegraphics[width=\hsize]{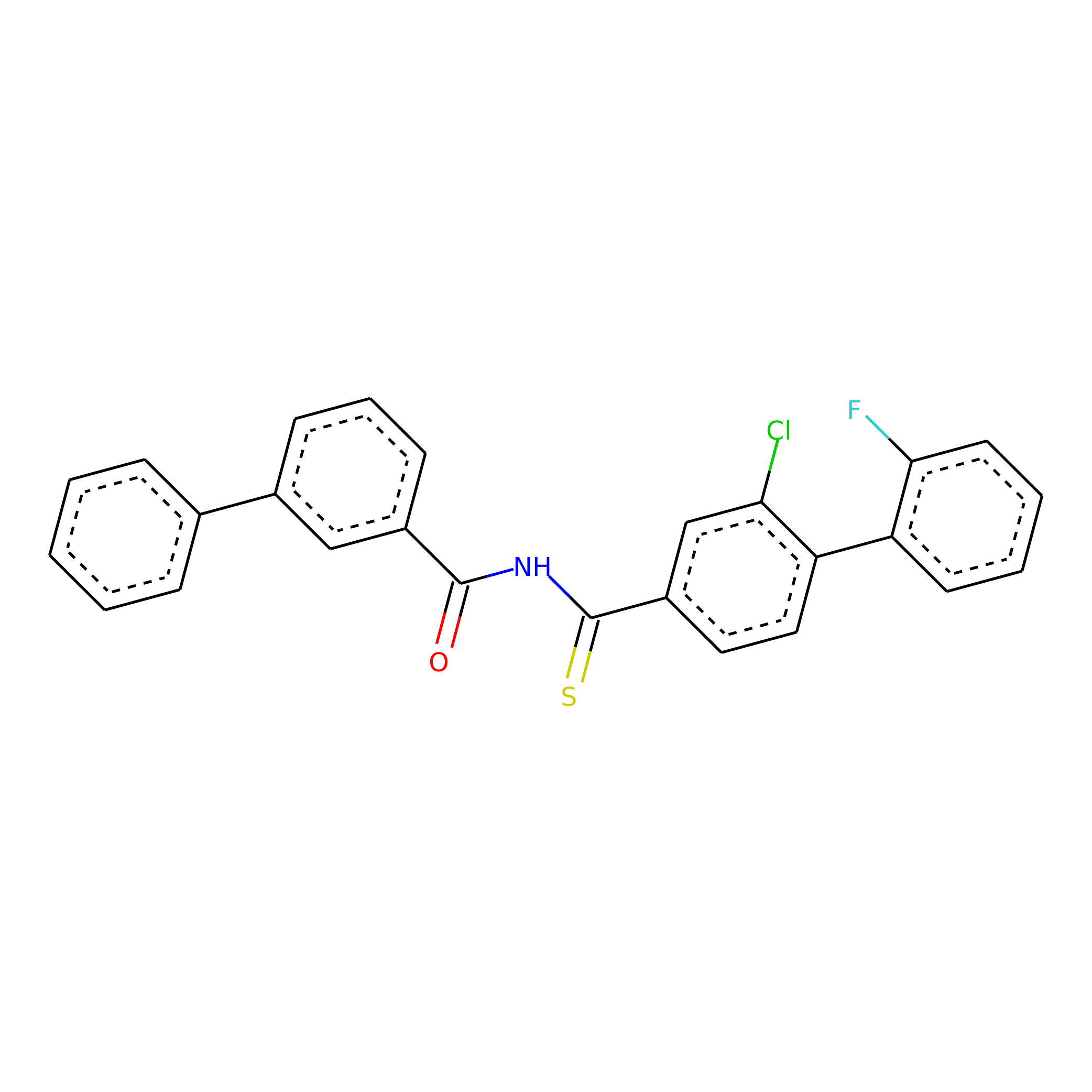}
4.354
\end{minipage}
\begin{minipage}{.24\hsize}
\centering
\includegraphics[width=\hsize]{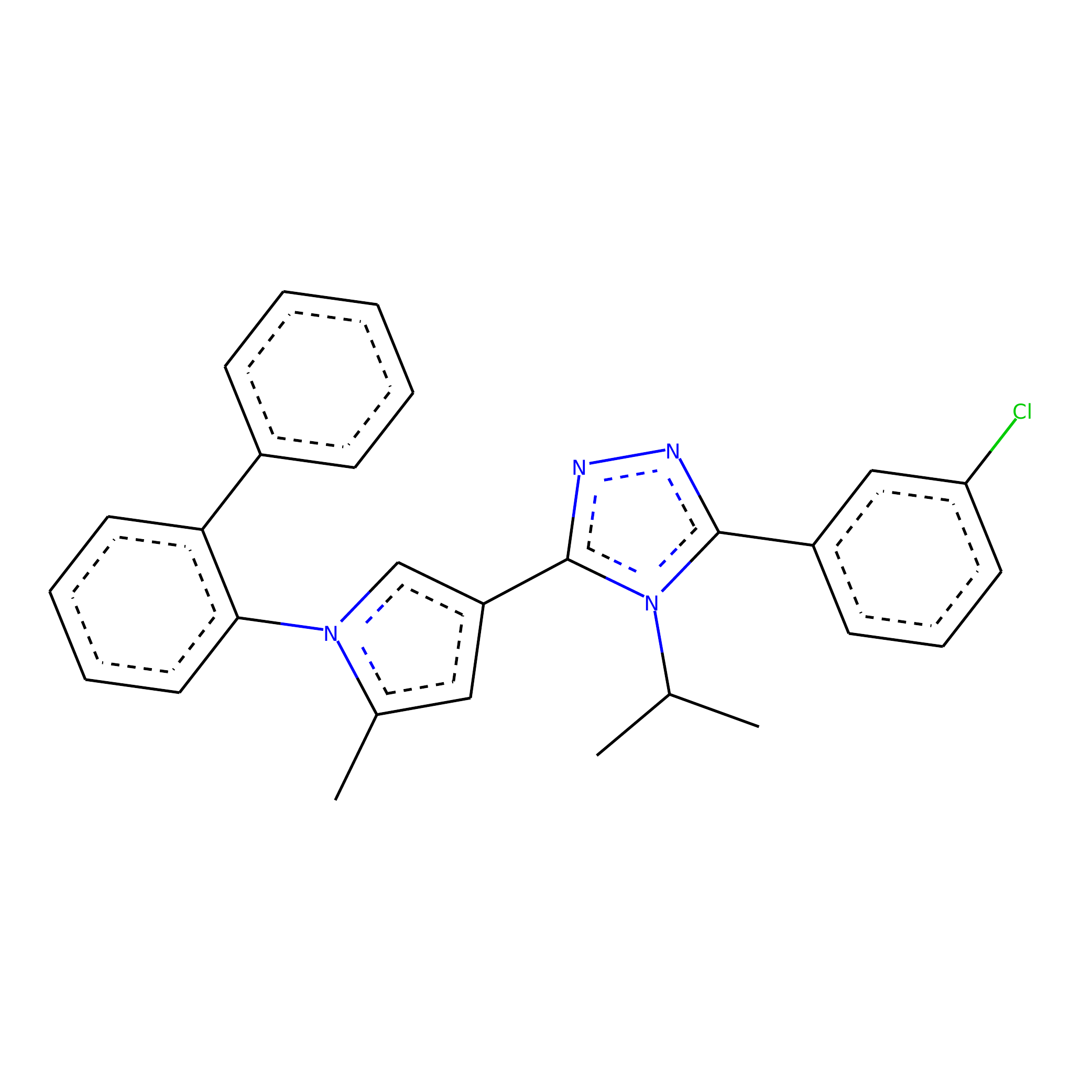}
4.351
\end{minipage}
\begin{minipage}{.24\hsize}
\centering
\includegraphics[width=\hsize]{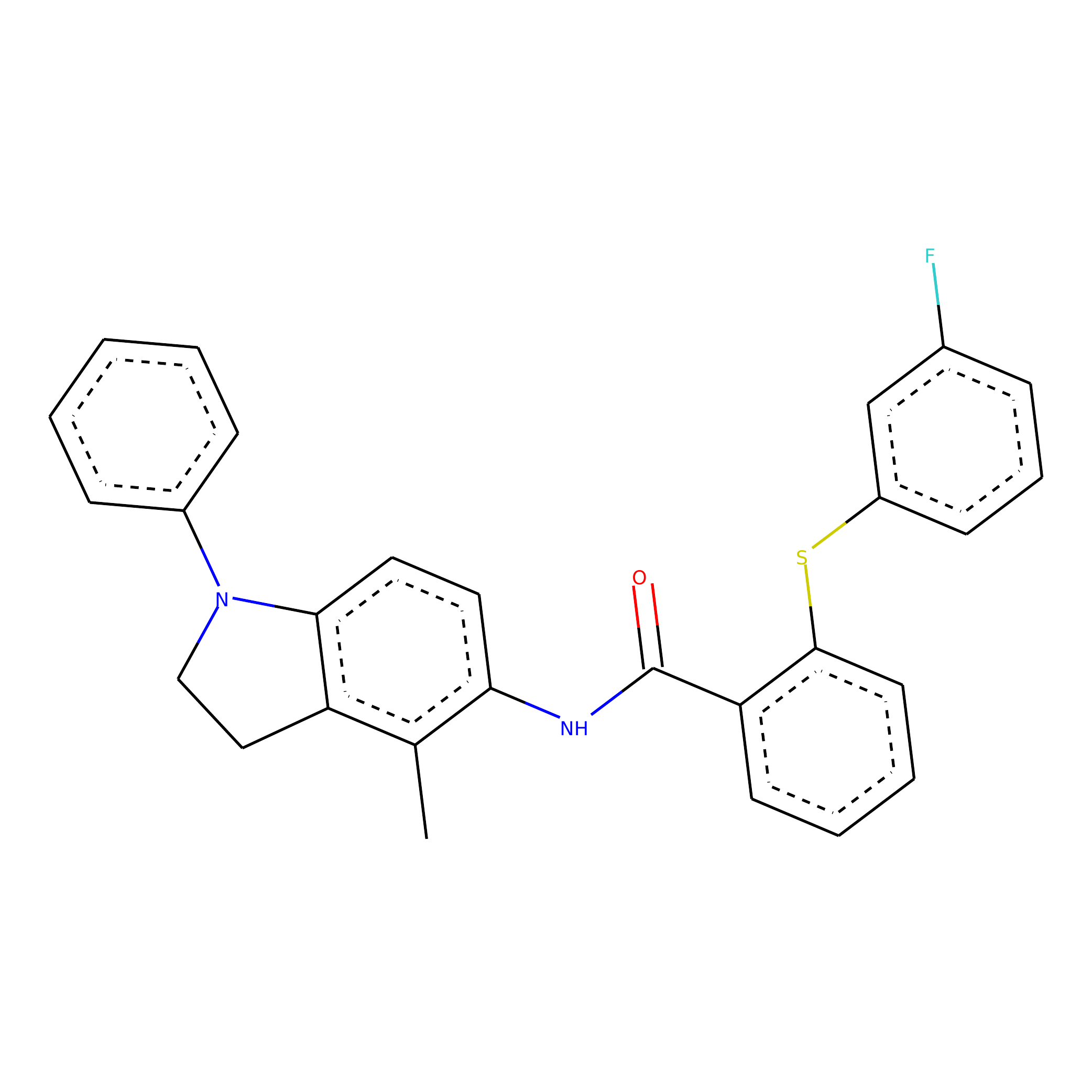}
4.349
\end{minipage}
\begin{minipage}{.24\hsize}
\centering
\includegraphics[width=\hsize]{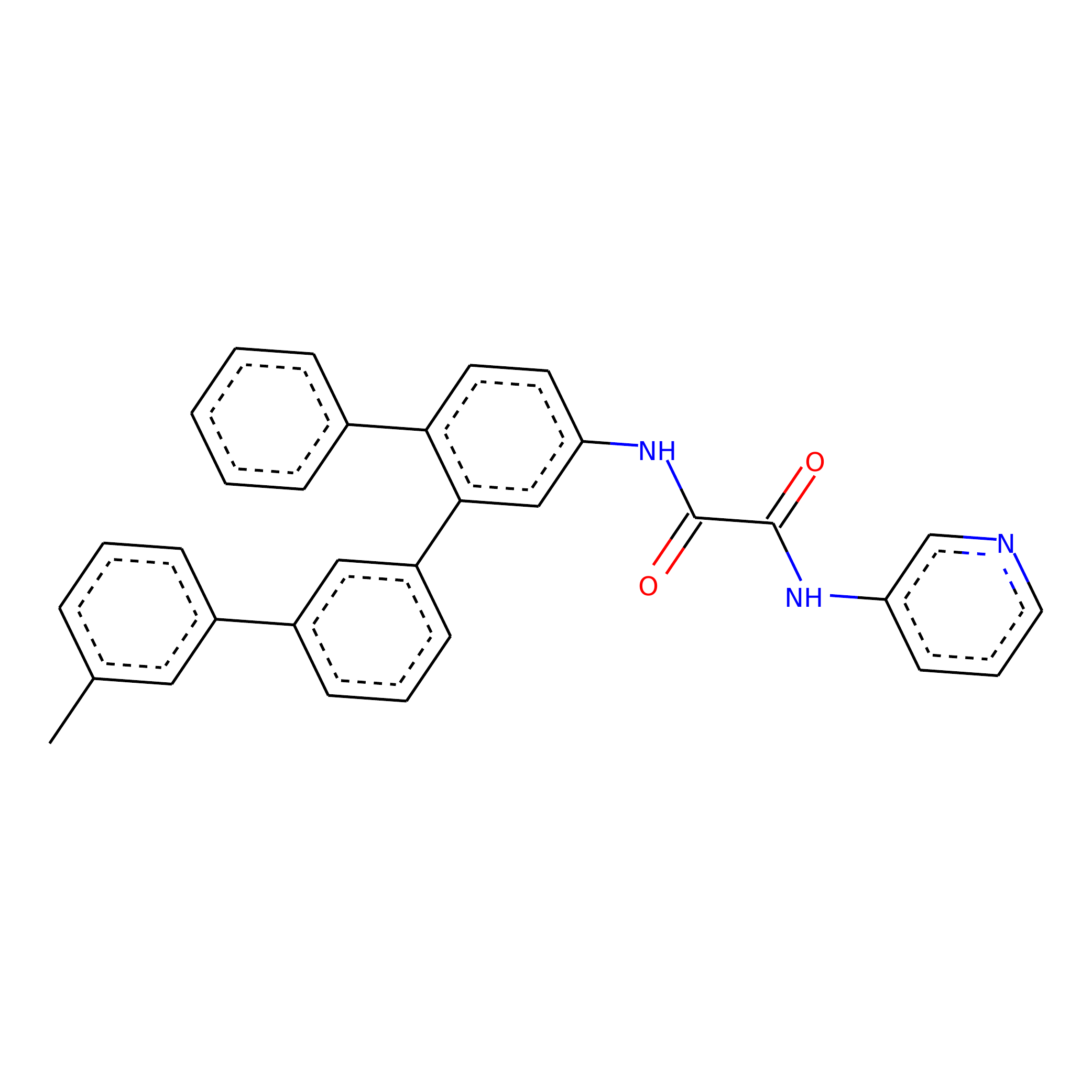}
4.341
\end{minipage}
\begin{minipage}{.24\hsize}
\centering
\includegraphics[width=\hsize]{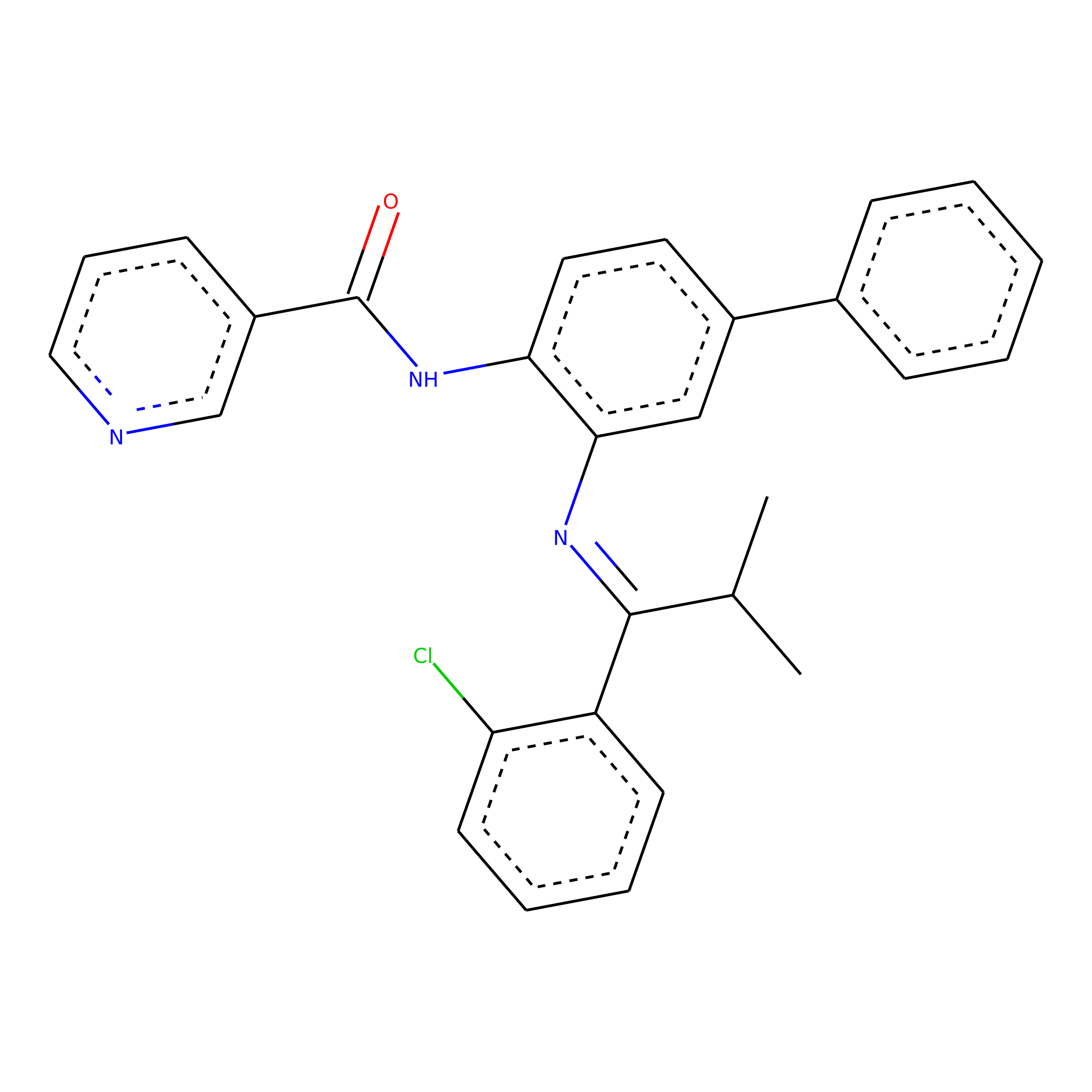}
4.335
\end{minipage}
\begin{minipage}{.24\hsize}
\centering
\includegraphics[width=\hsize]{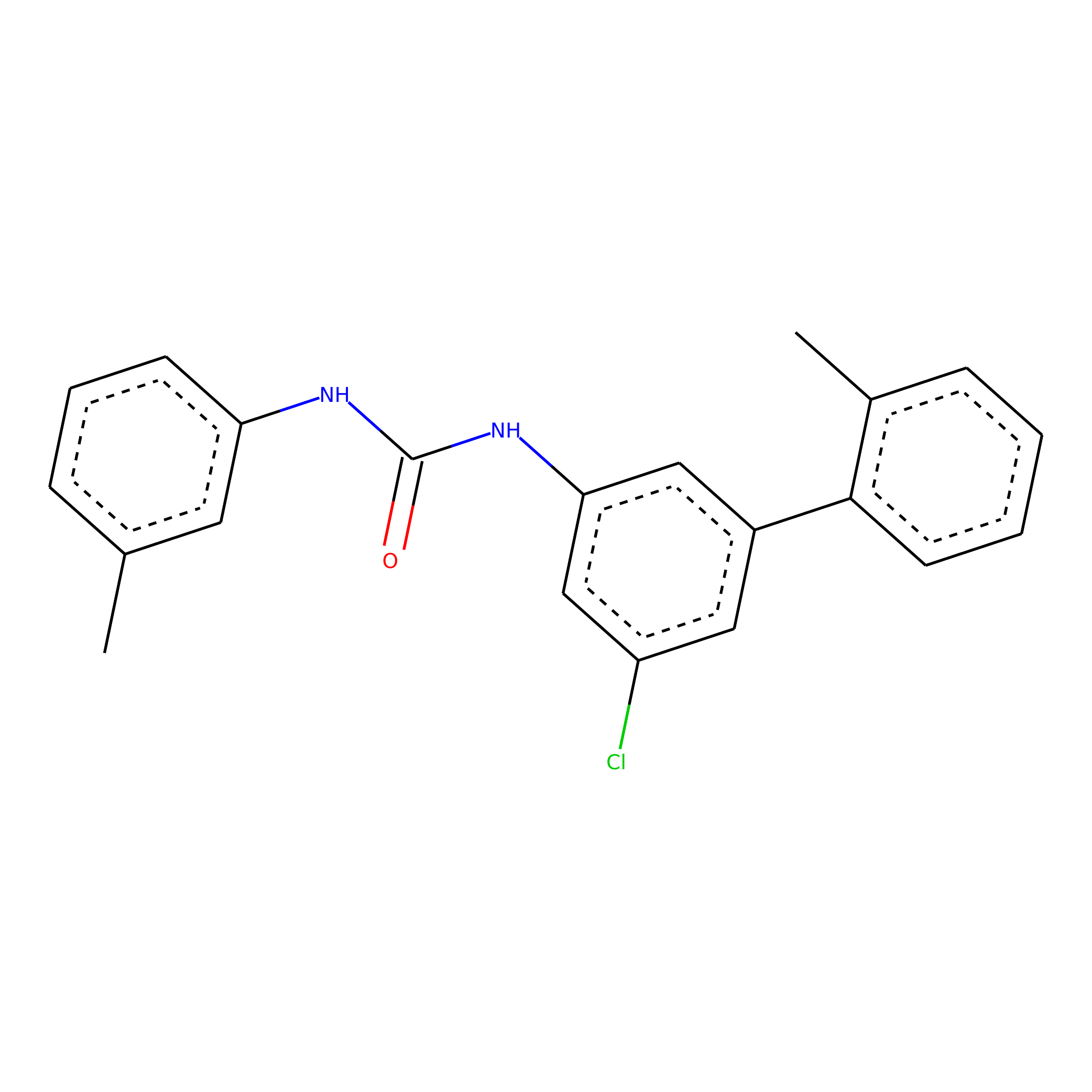}
4.294
\end{minipage}
\begin{minipage}{.24\hsize}
\centering
\includegraphics[width=\hsize]{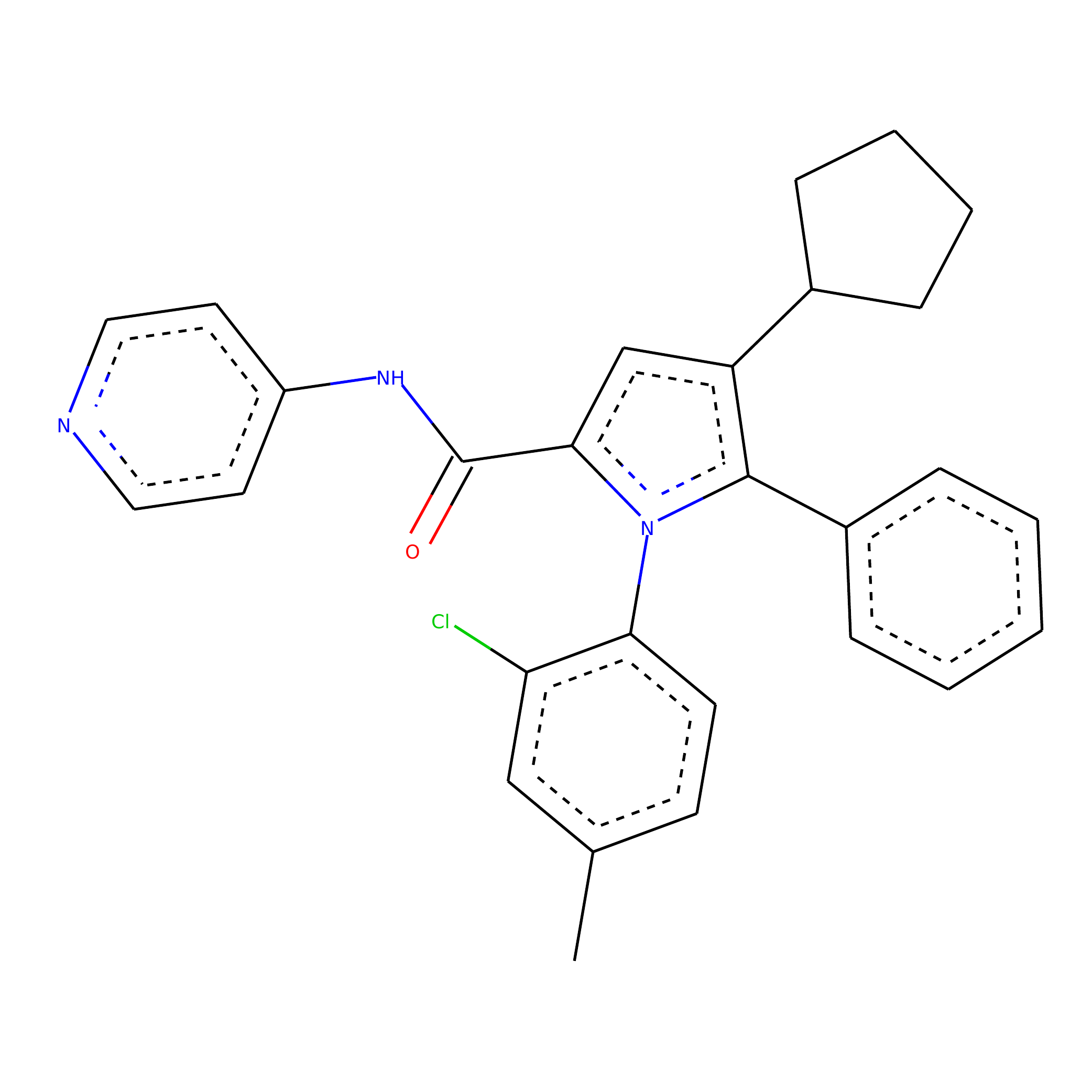}
4.274
\end{minipage}
\begin{minipage}{.24\hsize}
\centering
\includegraphics[width=\hsize]{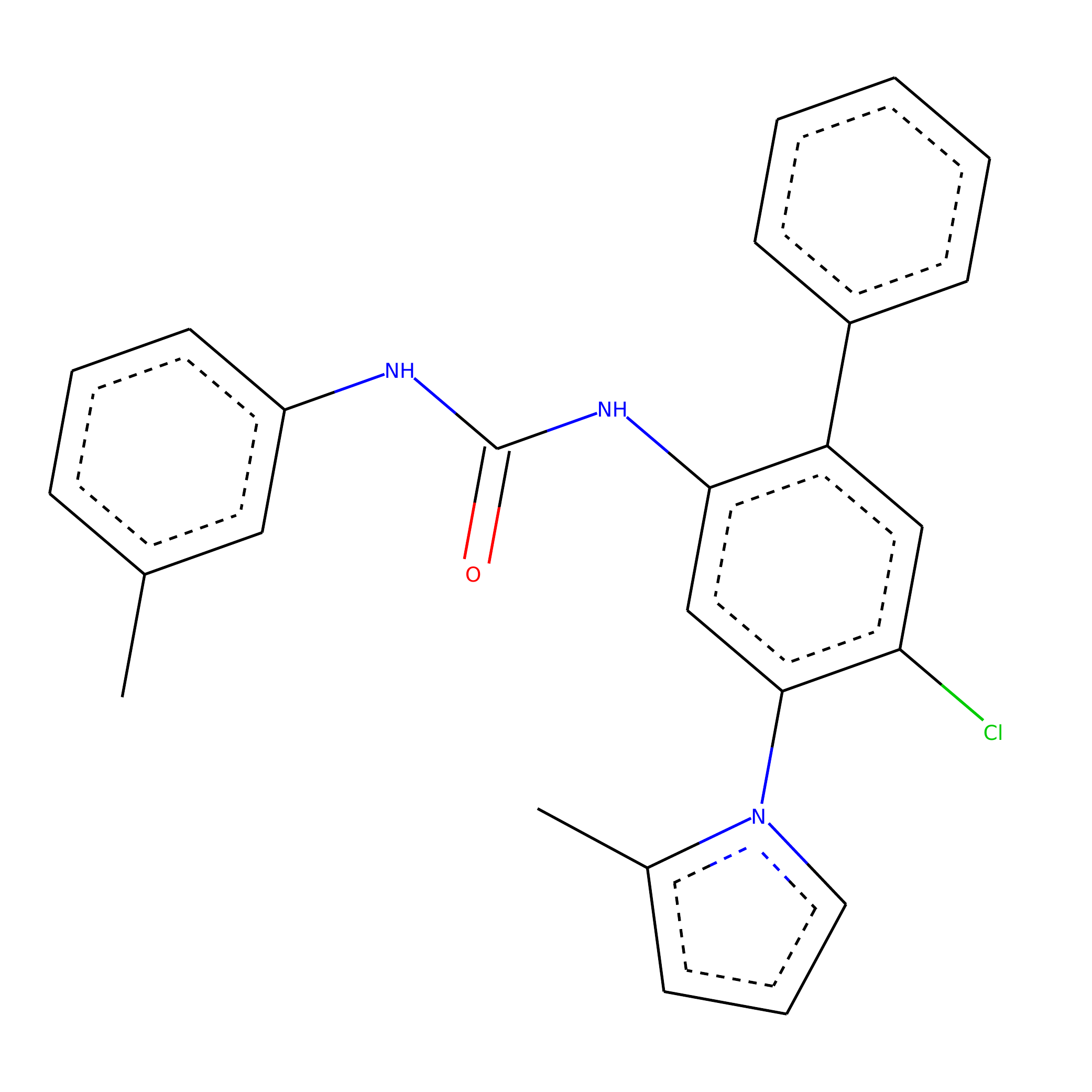}
4.265
\end{minipage}
\begin{minipage}{.24\hsize}
\centering
\includegraphics[width=\hsize]{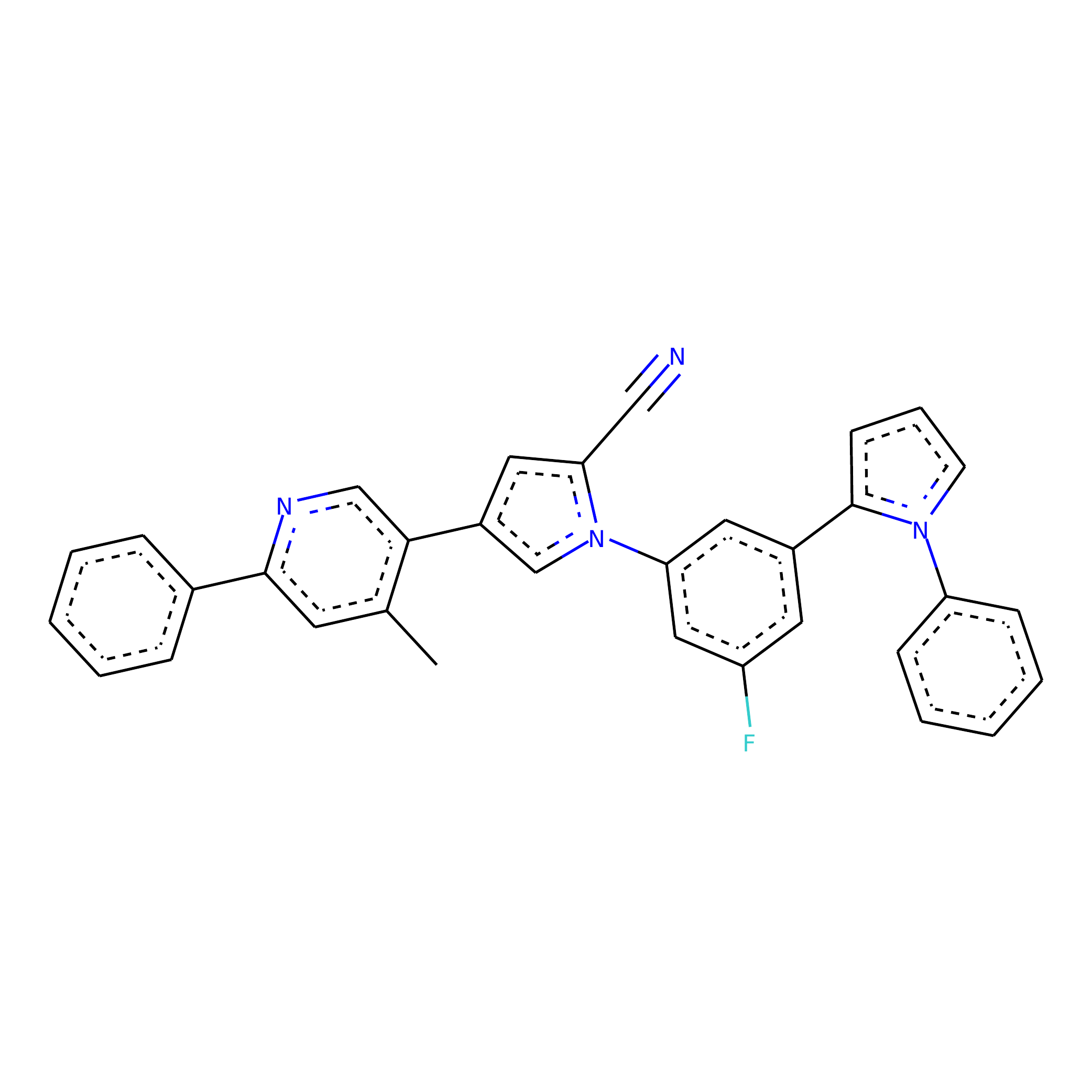}
4.259
\end{minipage}
\begin{minipage}{.24\hsize}
\centering
\includegraphics[width=\hsize]{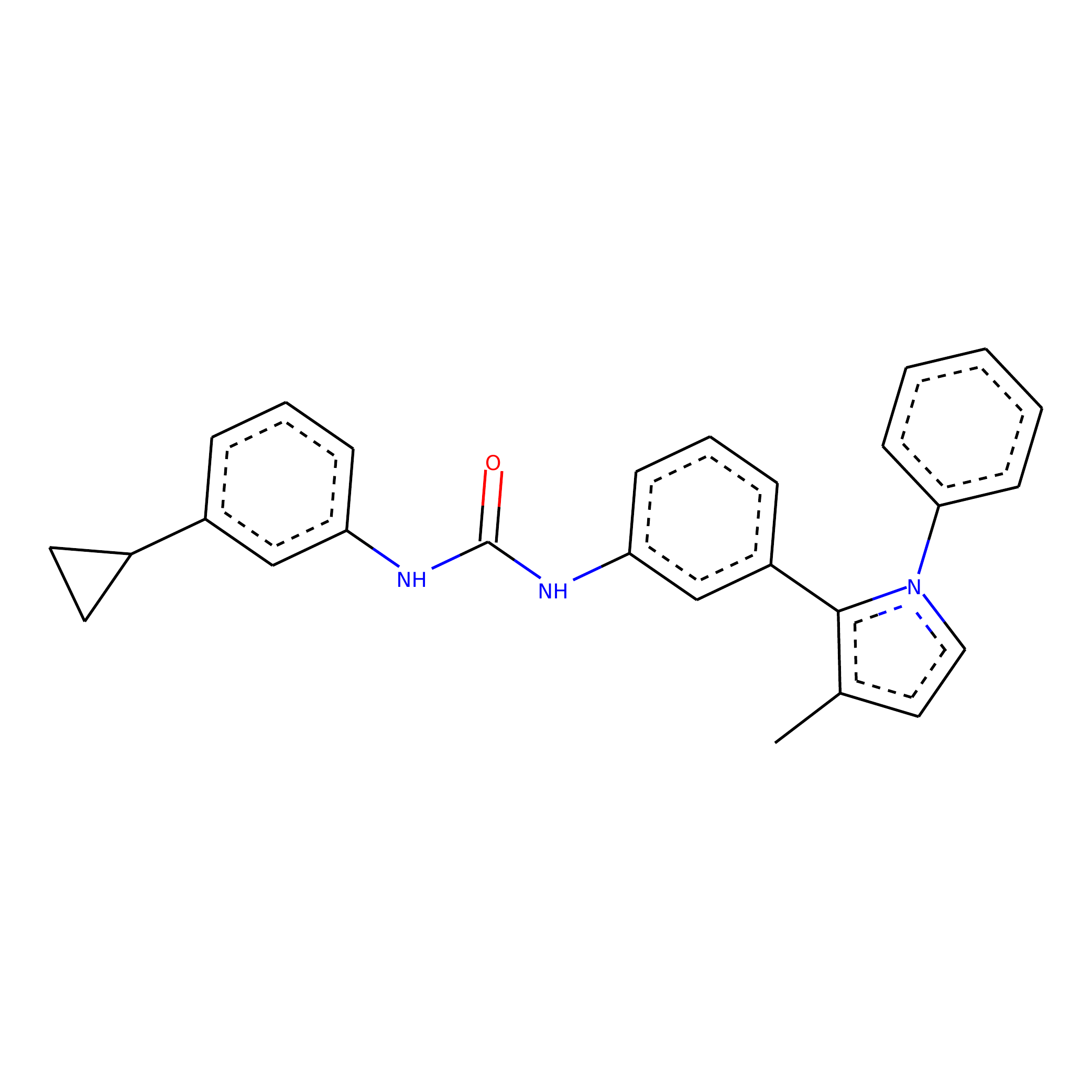}
4.258
\end{minipage}
\begin{minipage}{.24\hsize}
\centering
\includegraphics[width=\hsize]{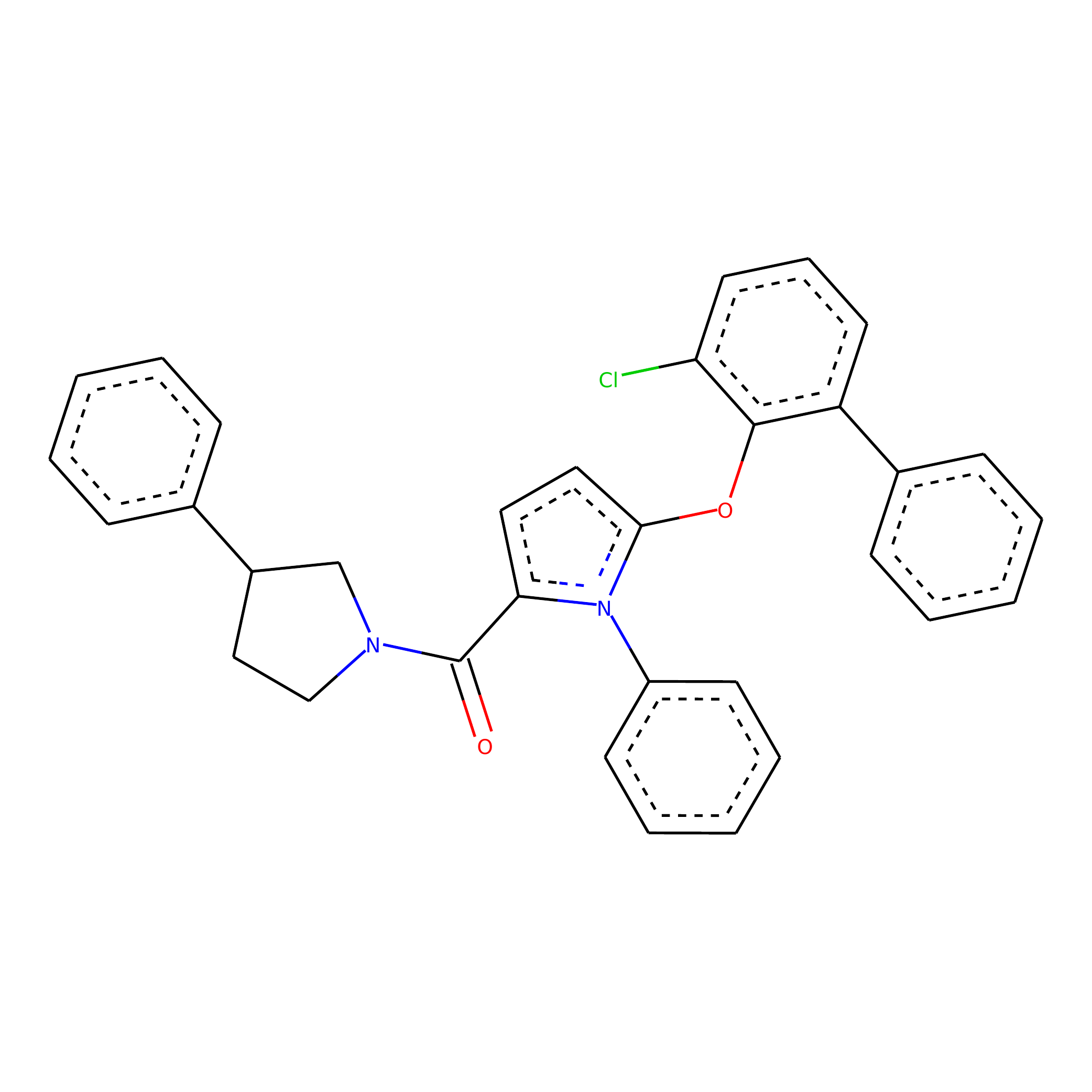}
4.242
\end{minipage}
\begin{minipage}{.24\hsize}
\centering
\includegraphics[width=\hsize]{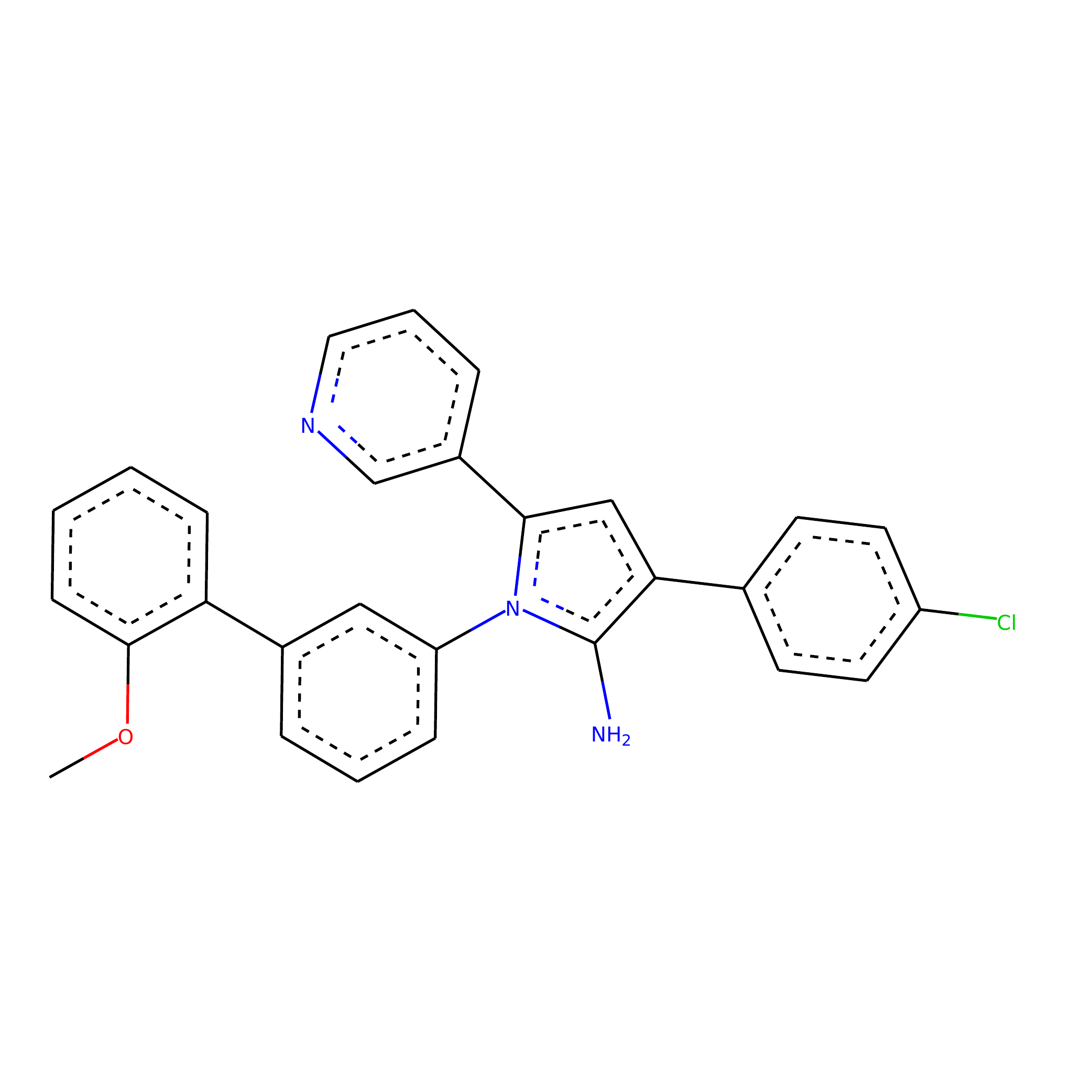}
4.240
\end{minipage}
\begin{minipage}{.24\hsize}
\centering
\includegraphics[width=\hsize]{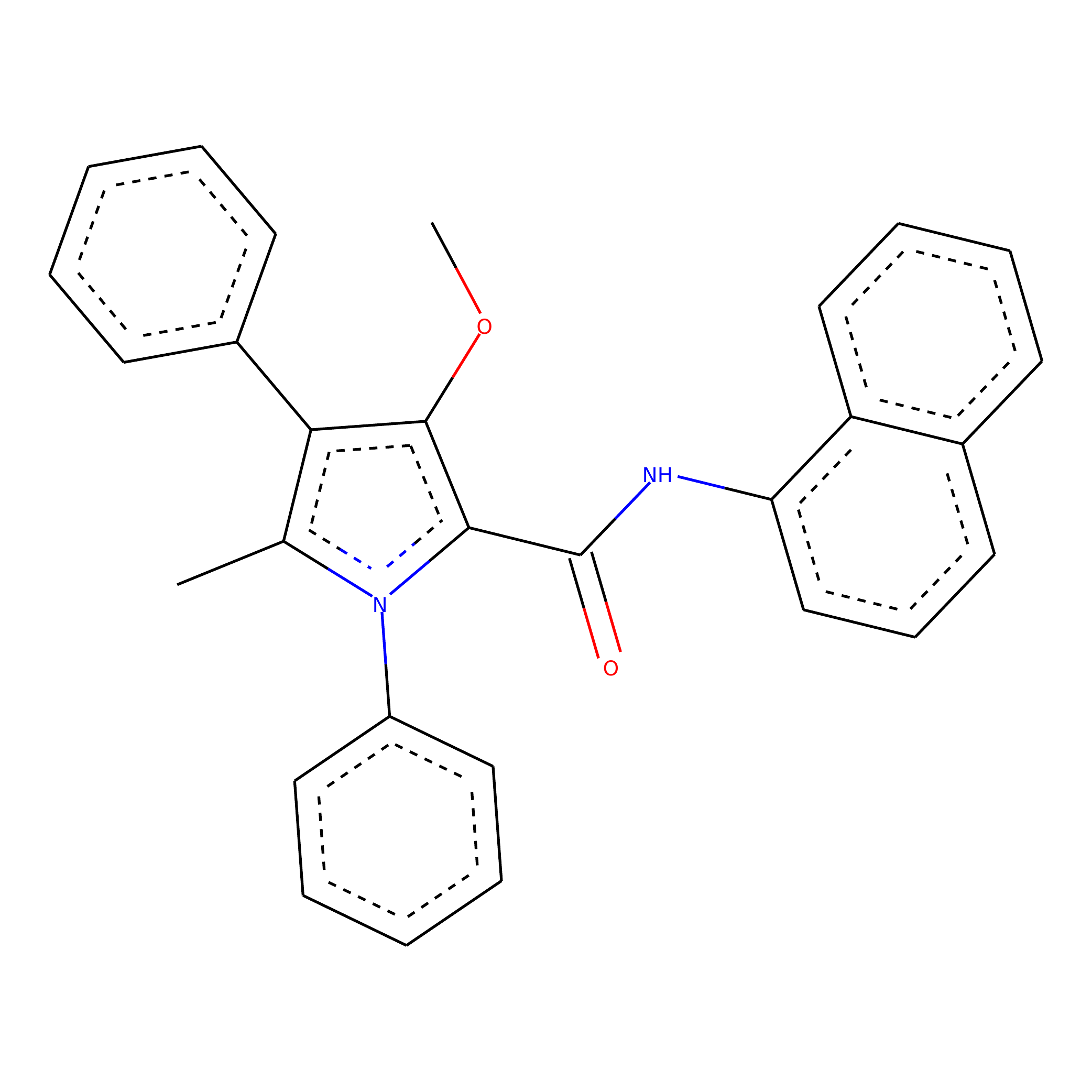}
4.233
\end{minipage}
\begin{minipage}{.24\hsize}
\centering
\includegraphics[width=\hsize]{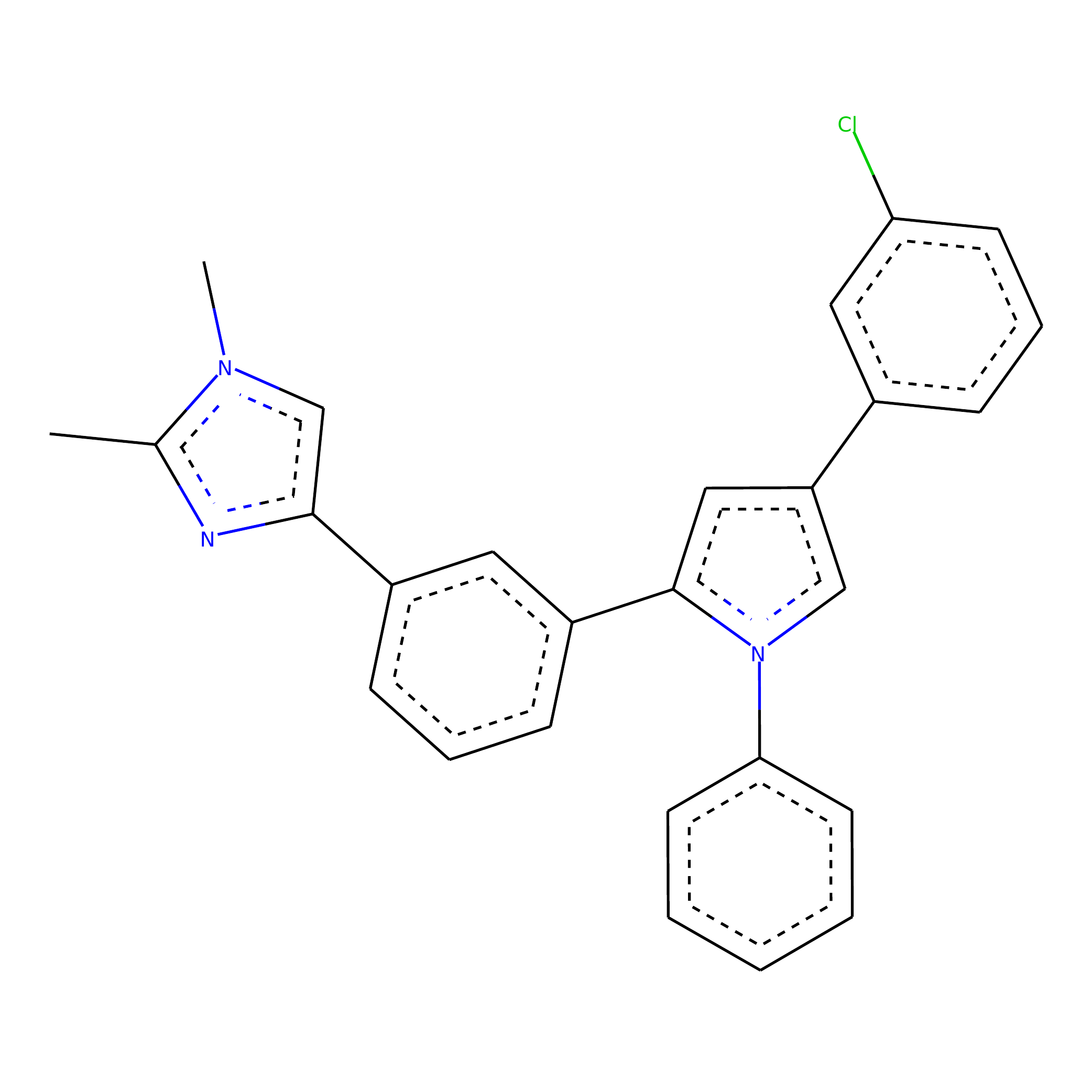}
4.209
\end{minipage}
 \end{figure*}
\begin{figure*}
\begin{minipage}{.24\hsize}
\centering
\includegraphics[width=\hsize]{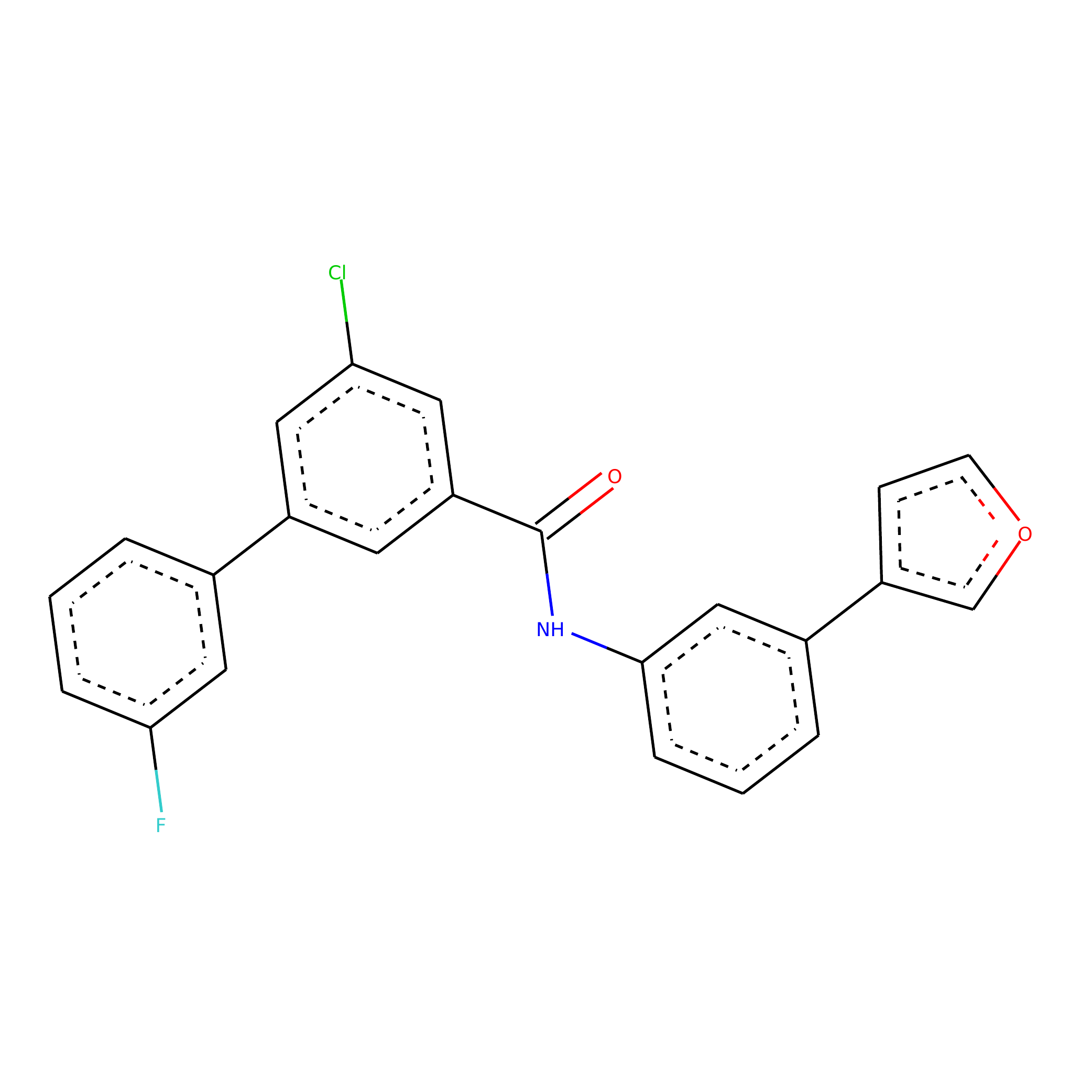}
4.202
\end{minipage}
\begin{minipage}{.24\hsize}
\centering
\includegraphics[width=\hsize]{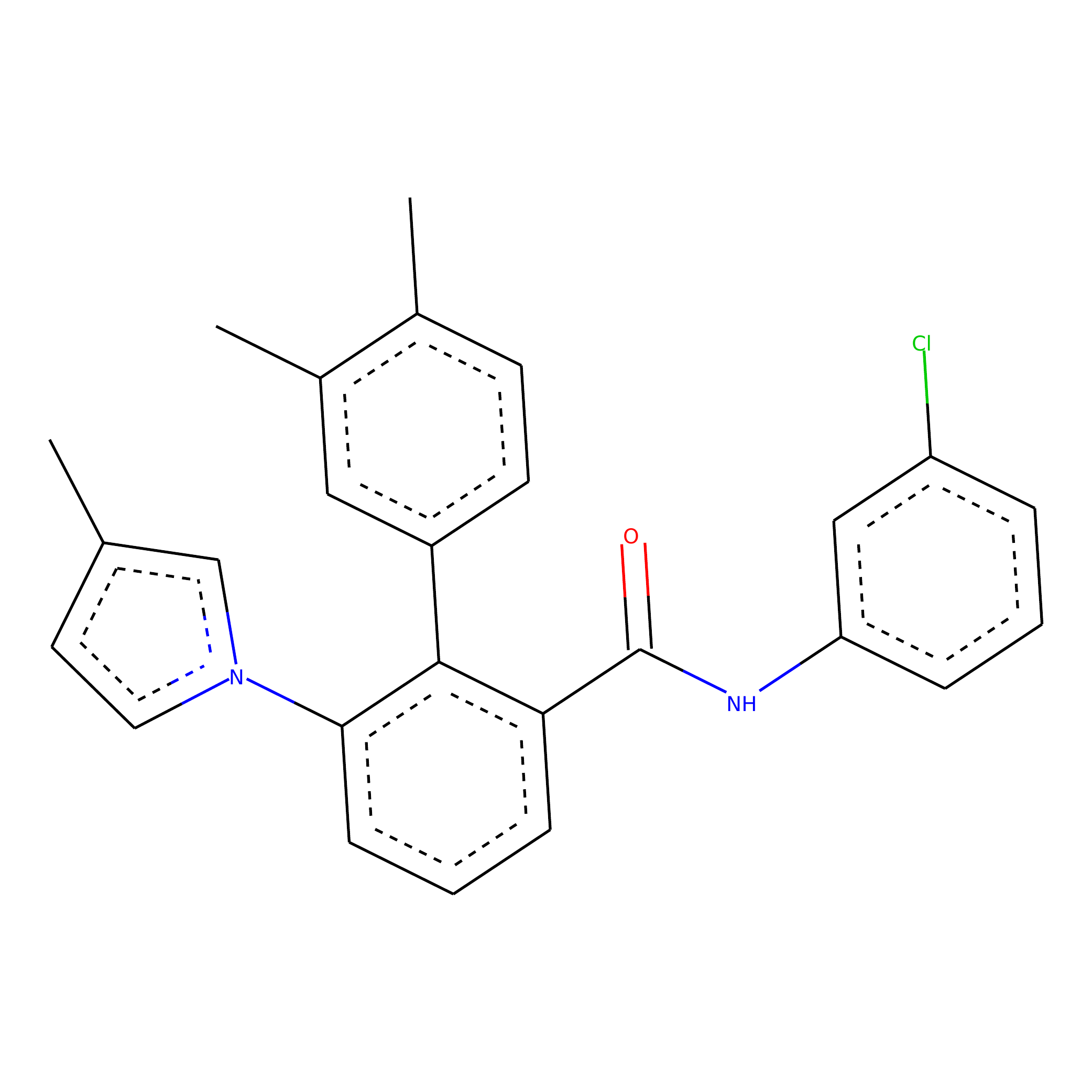}
4.196
\end{minipage}
\begin{minipage}{.24\hsize}
\centering
\includegraphics[width=\hsize]{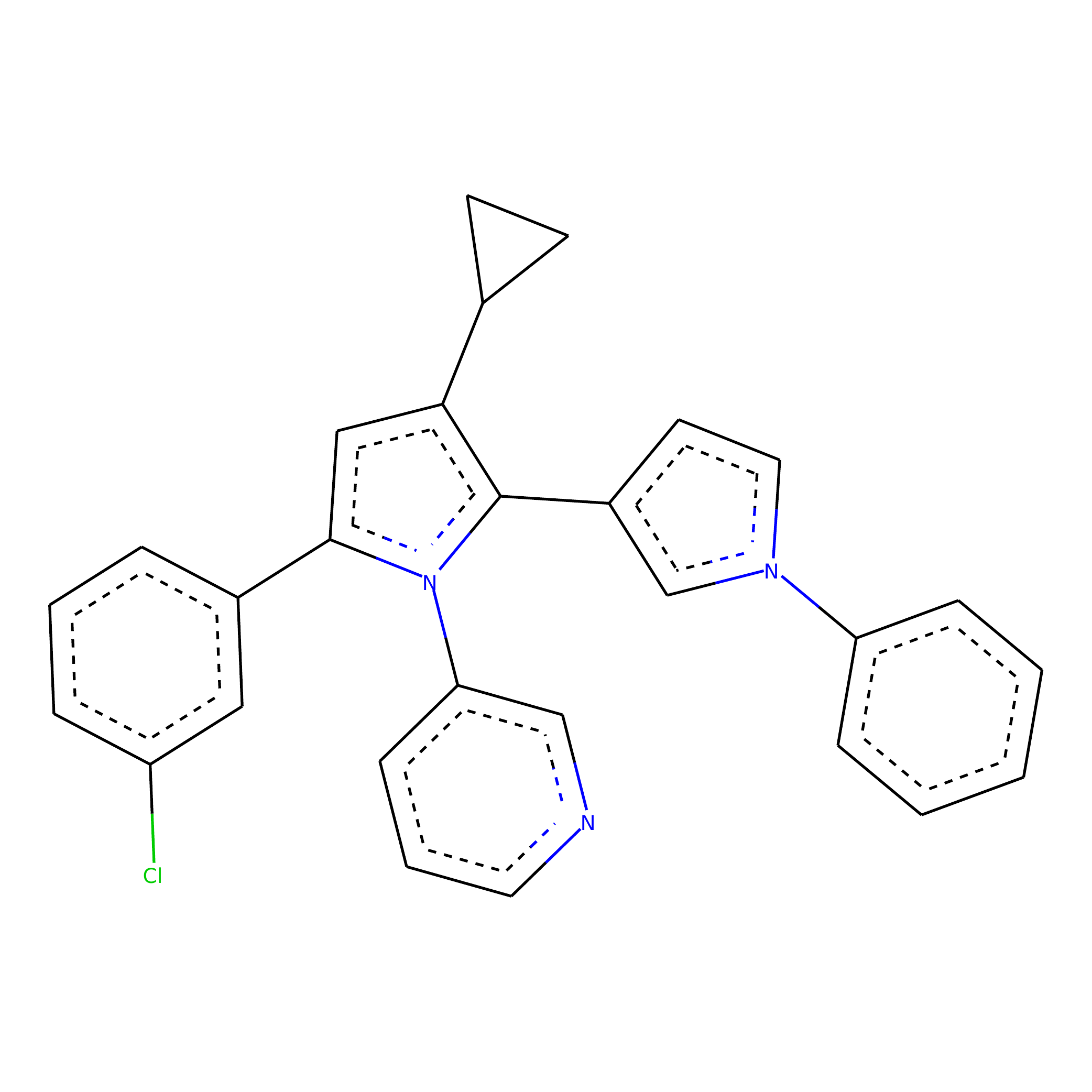}
4.185
\end{minipage}
\begin{minipage}{.24\hsize}
\centering
\includegraphics[width=\hsize]{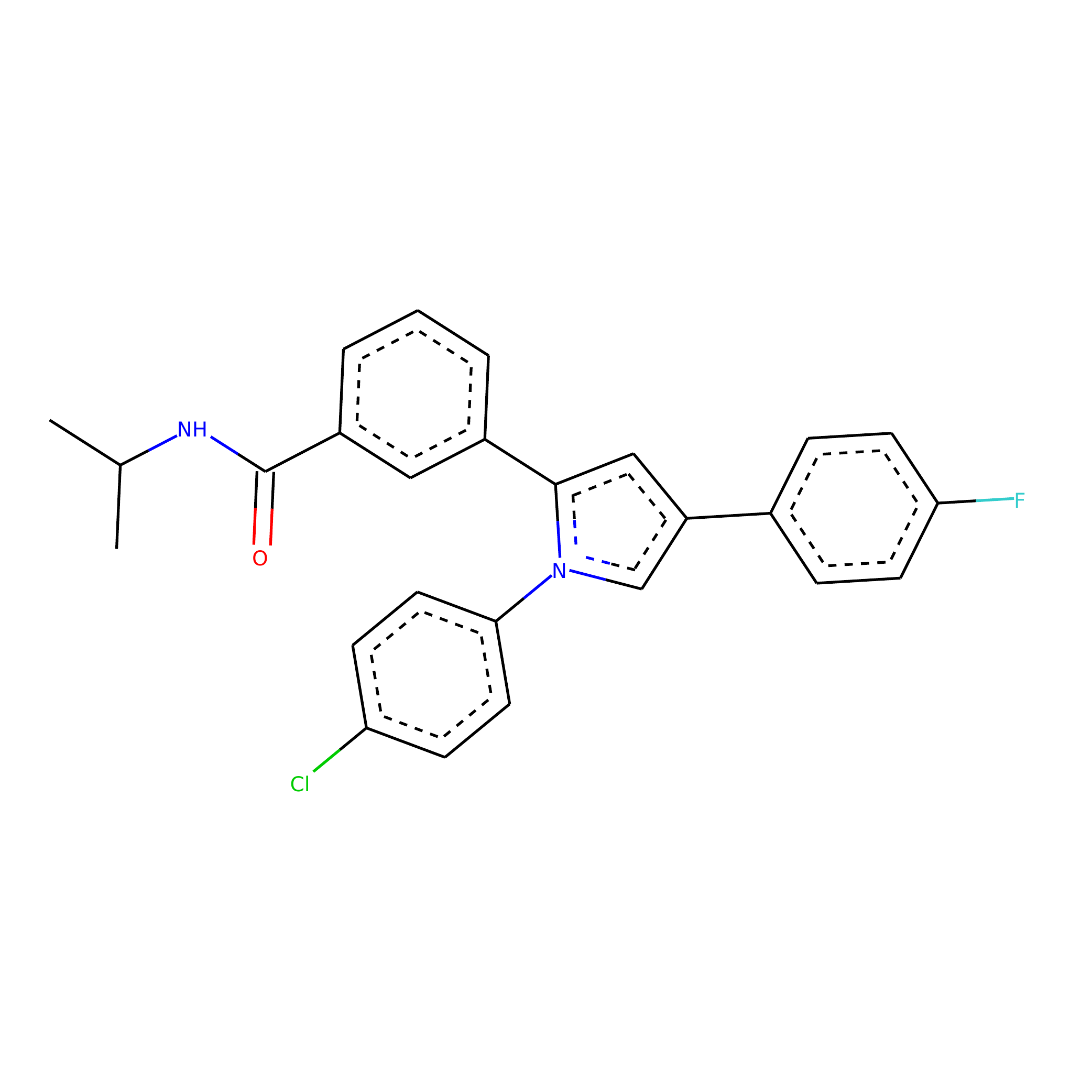}
4.171
\end{minipage}
\begin{minipage}{.24\hsize}
\centering
\includegraphics[width=\hsize]{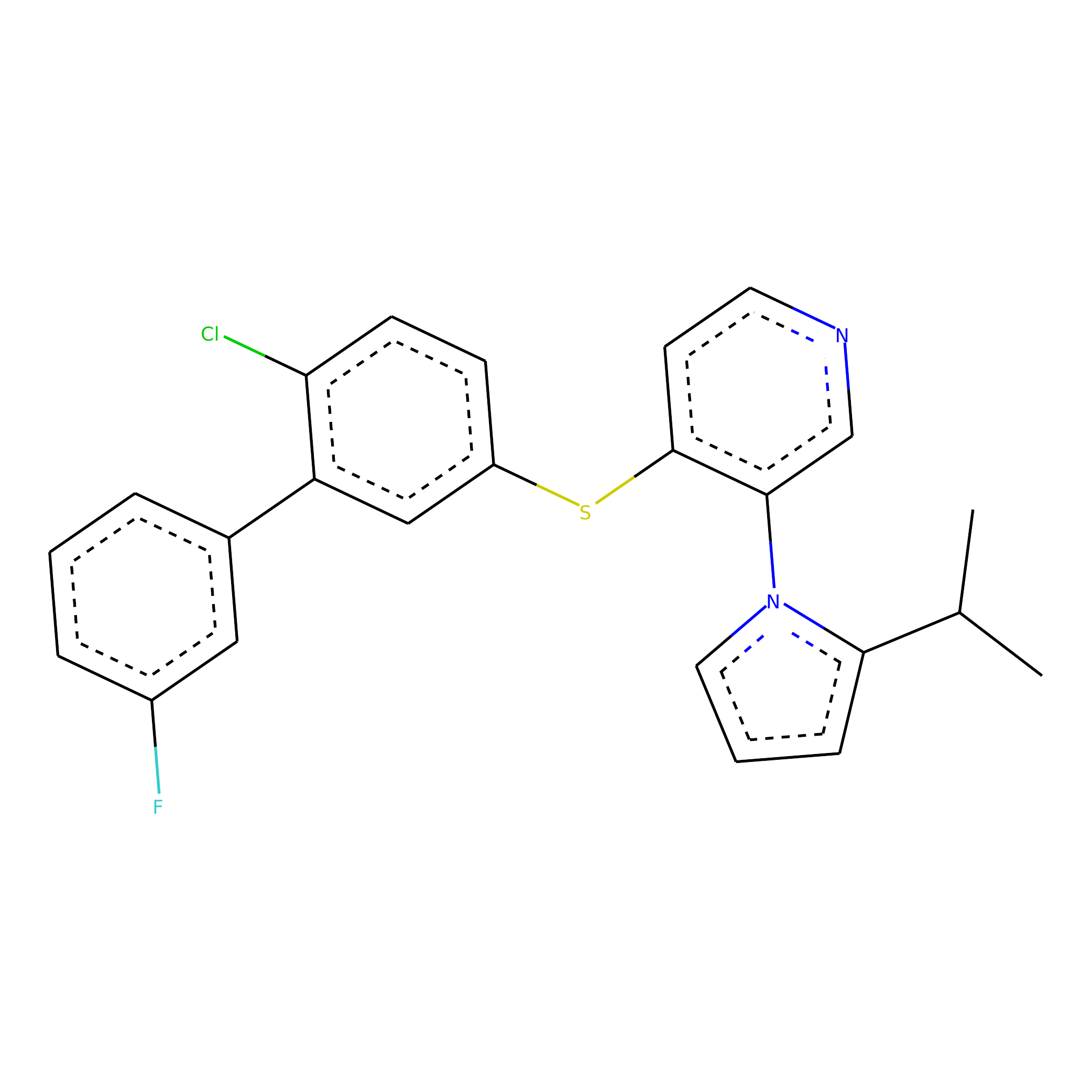}
4.169
\end{minipage}
\begin{minipage}{.24\hsize}
\centering
\includegraphics[width=\hsize]{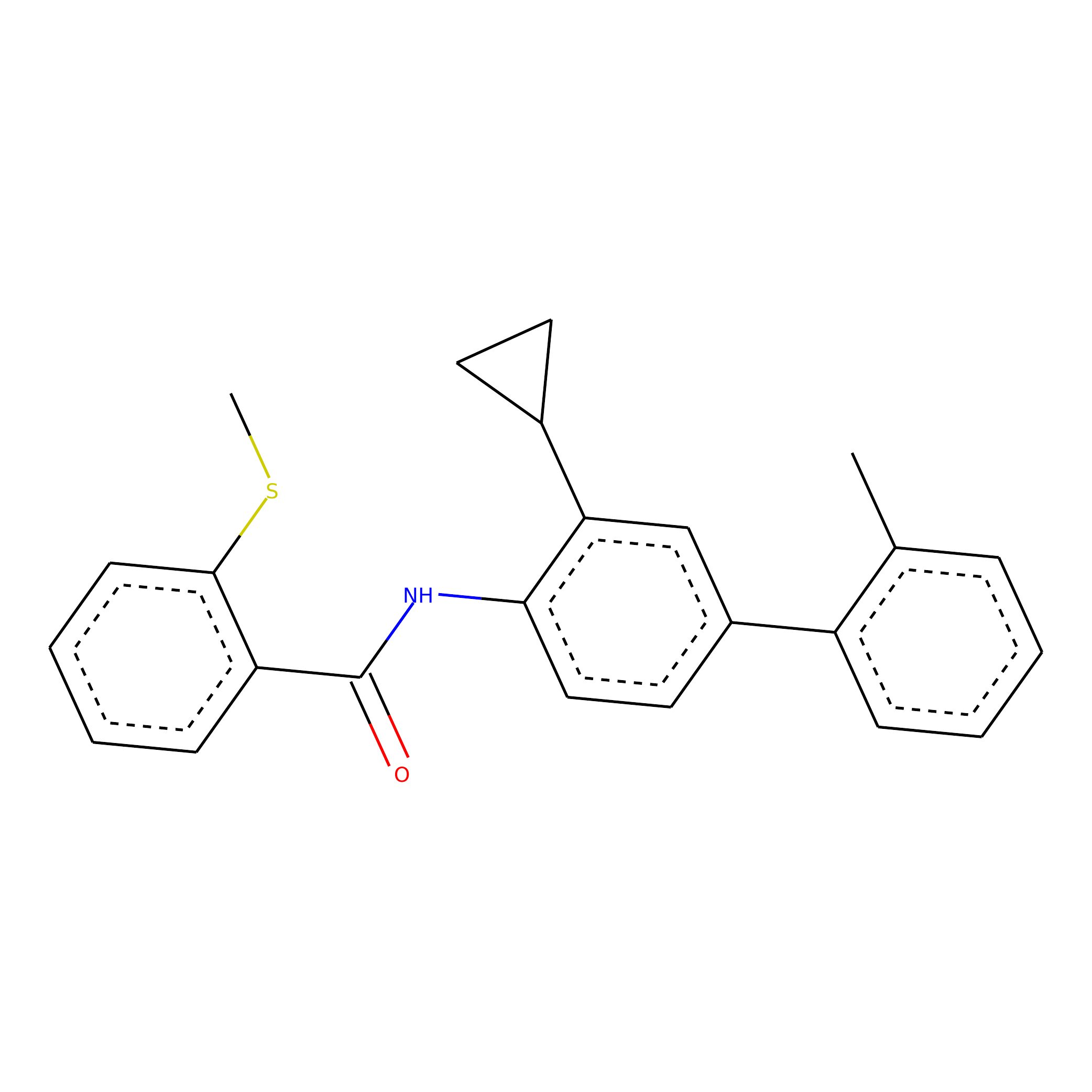}
4.163
\end{minipage}
\begin{minipage}{.24\hsize}
\centering
\includegraphics[width=\hsize]{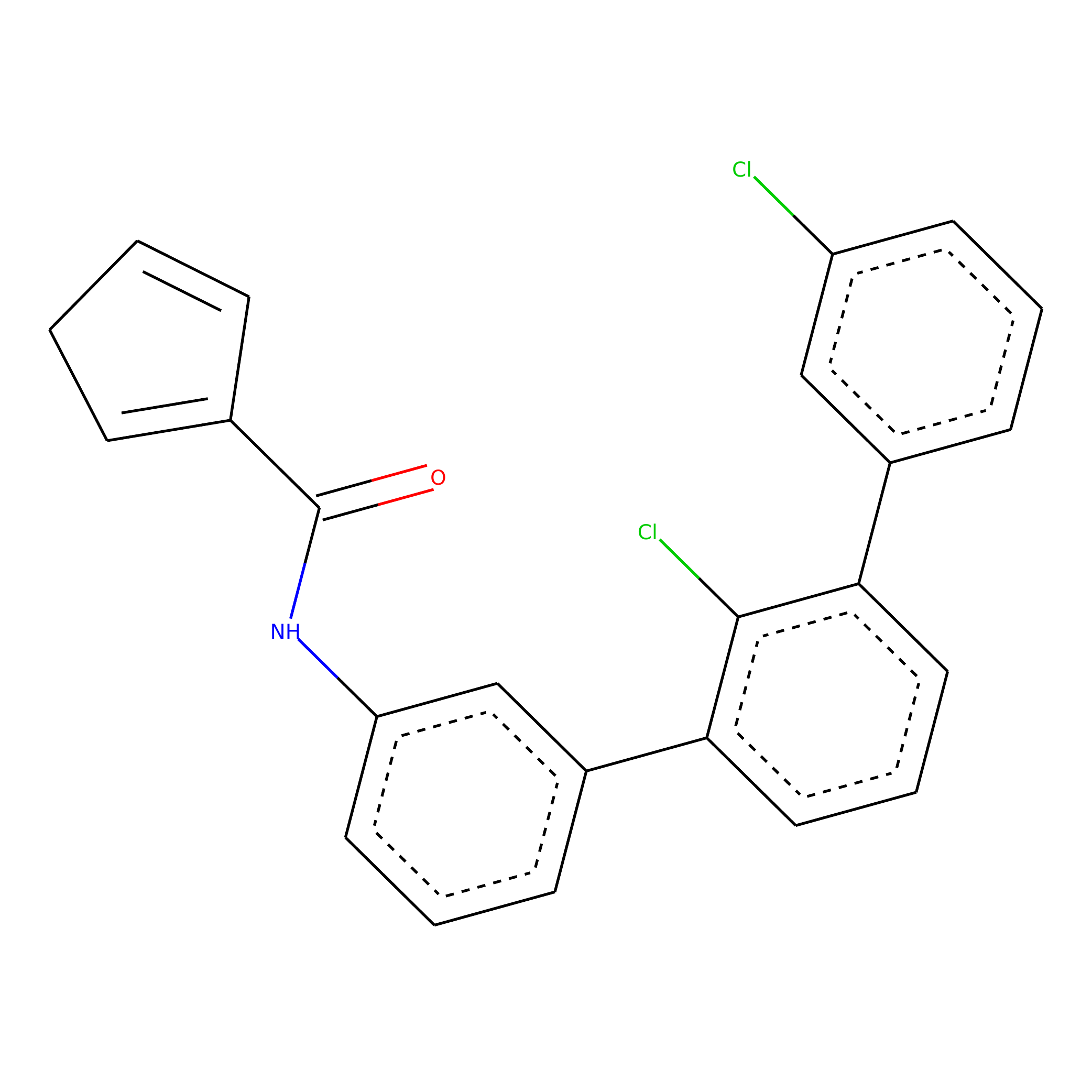}
4.153
\end{minipage}
\begin{minipage}{.24\hsize}
\centering
\includegraphics[width=\hsize]{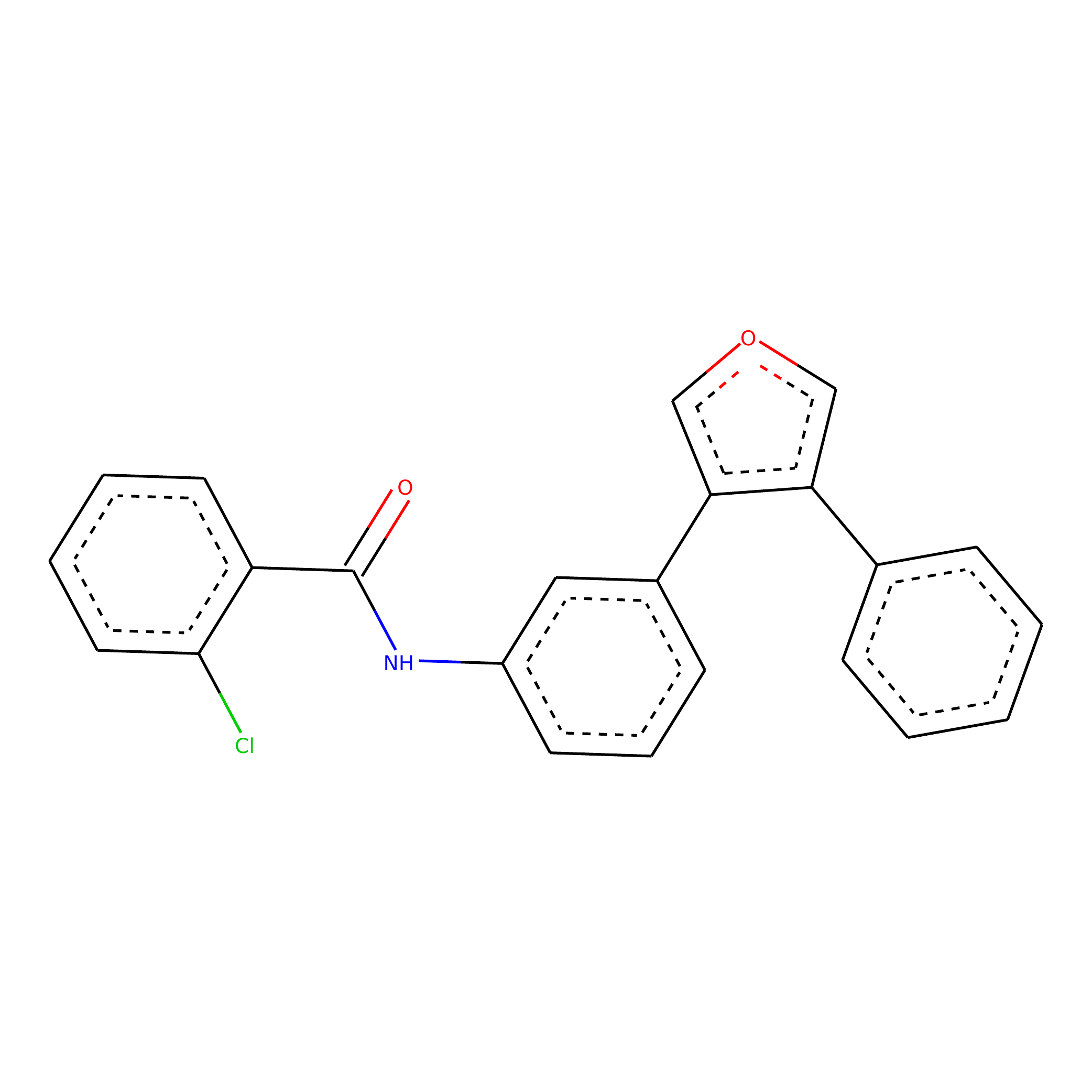}
4.150
\end{minipage}
\begin{minipage}{.24\hsize}
\centering
\includegraphics[width=\hsize]{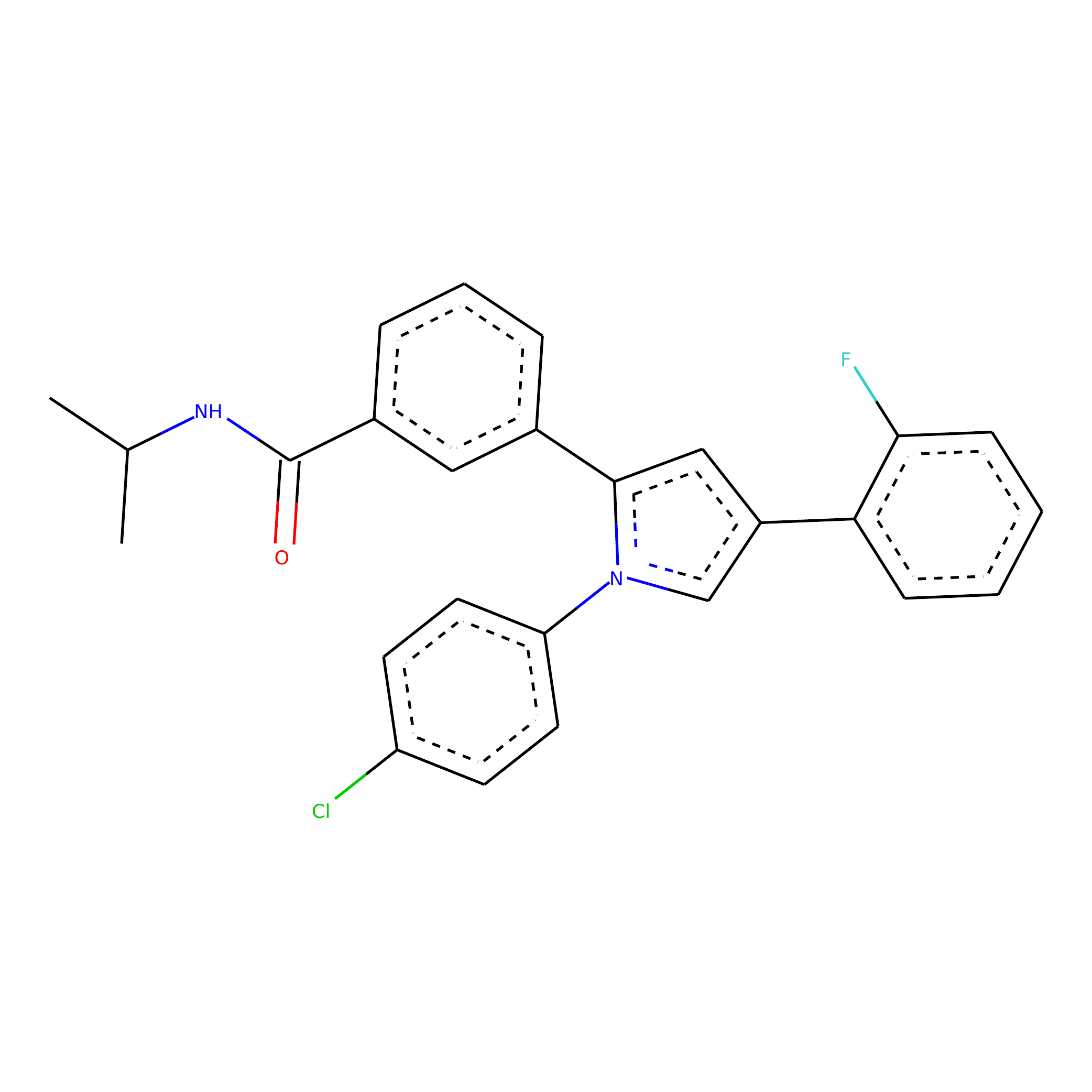}
4.124
\end{minipage}
\begin{minipage}{.24\hsize}
\centering
\includegraphics[width=\hsize]{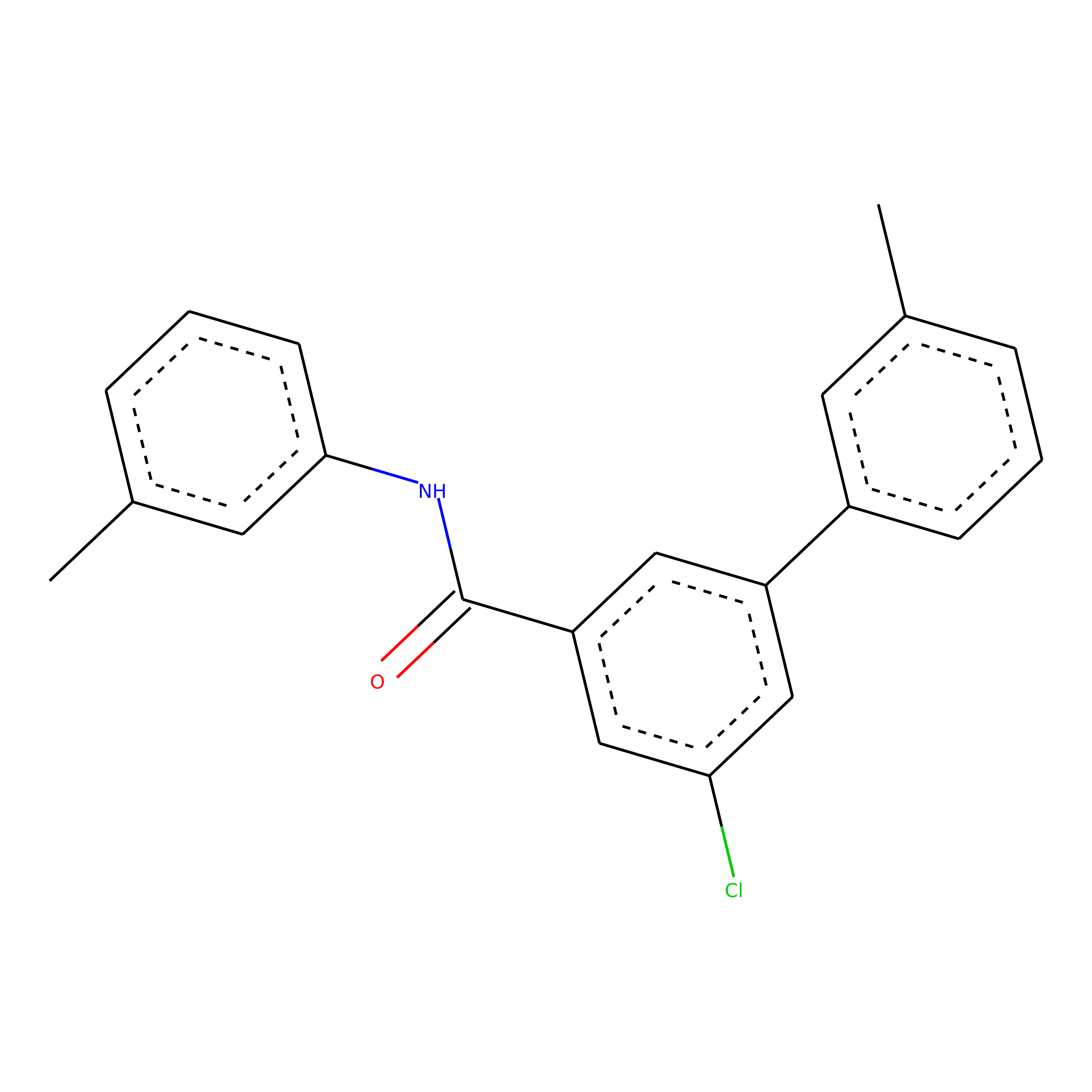}
4.124
\end{minipage}
\end{figure*}

\end{document}